\newtheorem{theorem}{Theorem}
\newtheorem{proposition}{Proposition}
\newtheorem*{Ass}{Assumptions}
\newcommand{\R}{{\mathbb{R}}}
\newcommand{\N}{{\mathbb{N}}}
\newcommand{\indep}{\rotatebox[origin=c]{90}{$\models$}}
\journal{Journal of \LaTeX\ Templates}
\begin{document}

\begin{frontmatter}

\title{Asymptotic Unbiasedness of the Permutation Importance Measure in  \\Random Forest Models.}

\author{Burim Ramosaj$^*$, Markus Pauly}
\address{Faculty of Statistics \\
	 Institute of Mathematical Statistics and Applications in Industry\\
	 Technical University of Dortmund \\
	44227 Dortmund, Germany}


\cortext[mycorrespondingauthor]{\textit{Corresponding Author:} Burim Ramosaj \\
	\textit{Email address:} \texttt{burim.ramosaj@tu-dortmund.de} }


\begin{abstract}
	Variable selection in sparse regression models is an important task as applications ranging from biomedical research to econometrics have shown. Especially for higher dimensional regression problems, for which the link function between response and covariates cannot be directly detected, the selection of informative variables is challenging. Under these circumstances, the Random Forest method is a helpful tool to predict new outcomes while delivering measures for variable selection. One common approach is the usage of the permutation importance. Due to its intuitive idea and flexible usage, it is important to explore circumstances, for which the permutation importance based on Random Forest correctly indicates informative covariates. Regarding the latter, we deliver theoretical guarantees for the validity of the permutation importance measure under specific assumptions and prove its (asymptotic) unbiasedness. An extensive simulation study verifies our findings. 
\end{abstract}

\begin{keyword}
Random Forest \sep Unbiasedness \sep Permutation Importance \sep Out-of-Bag Samples \sep Statistical Learning 
\end{keyword}

\end{frontmatter}

\linenumbers

\section{Introduction}
Random Forest is a non-parametric classification and regression algorithm being known for its good predictive performance and simple applicability under various settings. The method is based on constructing each tree in the forest by bagging procedures, which enables the construction of several estimators based on \textit{Out-of-Bag} principles, such as prediction points or variance estimates. Main advantages of the Random Forest method compared to other Machine Learning tools is its relative ease in hyper-parameter tuning while delivering internal estimates of the mean squared error. Due to its complicated mathematical description, including data-dependent weighting, theoretical results such as consistency or central limit theorems have only been derived recently, see e.g. \cite{wager2014asymptotic,wager2014confidence, scornet2015consistency,mentch2016quantifying}.

Beyond its usage for prediction, Random Forest models can also be used as a tool for variable selection. Especially in high dimensional learning problems, where the number of variables exceeds the number of observations, the extraction of an informative feature subset is beneficial from three perspectives: Firstly, a reduced and simplified model is more accessible and interpretable than models of higher dimensions leading to faster and easier data collection processes. Secondly, model accuracy can sometimes even be enhanced under lower dimensional models bypassing the possibility of overfitting.  Thirdly, a reduced model makes well-known statistical inference procedures applicable. As mentioned in \cite{guyon2003introduction}, the Random Forest model can be considered as an embedded model, where variable selection is an integral part of the tree construction. In selecting variables, the Random Forest method delivers two measures: The permutation importance as well as the mean decrease impurity. For classification, the mean decrease impurity summarizes the decrease of the Gini impurity after conducting a cut over the whole tree structure and averages the result over all trees. For regression problems, this measure turns into the summation of the decrease in variance after conducting a cut at every node of the tree, averaged over the forest. The principle of the permutation importance is slightly different: In order to mimic the effect of a variable on the response, its values from the set of Out-of-Bag samples are randomly permuted and the decrease in model accuracy averaged over all trees is measured.  \\

Although simple to apply and intuitive, both measures have been criticized. In \cite{strobl2007bias}, for example, the authors could illustrate that the Gini importance for classification problems tends to prefer variables with larger numbers of categories and scale measurements. Furthermore, different results could be obtained when switching the sampling procedure in the bagging step to sampling with replacement instead of without replacement. In \cite{strobl2008conditional}, additional criticism was addressed towards the permutation importance, arguing that the permutation of the corresponding feature does not only break the relation with the response variable, but also with other potentially correlated covariates. This effect of correlated features has since been part of several research \cite{strobl2007bias, archer2008empirical,nicodemus2009predictor, nicodemus2010behaviour, nicodemus2011letter, altmann2010permutation,genuer2010variable}. Nevertheless, several authors such as \cite{strobl2007bias,nicodemus2011letter}  claimed that the permutation importance led to more accurate results than the importance measure based on decrease in node impurities. However, theoretical guarantees for the validity of the traditional Random Forest method regarding its importance measures are rather sparse. An exception is given in \cite{gregorutti2017correlation}, where a theoretical approach has been conducted within the framework of correlated features in additive regression models. Therein, the authors showed different identities of a formalized version of the Random Forest permutation importance measure (RFPIM). \\

The contributions of this paper are twofold: First, we aim to clarify the criticism on the RFPIM from a theoretical perspective. Therefore, we state assumptions, for which the permutation importance measure does correctly select informative features and prove its  (asymptotic) unbiasedness. This way, we also close the gap between the formalized version of the permutation measure as considered in \cite{gregorutti2017correlation} and the empirical permutation measure computed in a Random Forest model. Secondly, we identify main drivers for the quality of the RFPIM and support our findings by an extensive simulation study covering high-dimensional settings, too.

\section{Model Framework and Random Forest}\label{SecModel}

Our framework covers regression models, for which the covariable space is assumed to lie on the p-dimensional unit space, i.e. $\mathbf{X} \in [0,1]^p$. In fact, this assumption does not have sever generalization effects, since Random Forest models are invariant under (strictly) monotone transformations. For discrete distributions of $\mathbf{X}$, one could alternatively assume a finite support, such that a $[0,1]$-standardization exists for every feature $j = 1, \dots, p$. Furthermore, we will assume that the relationship between the response variable $Y$ and the covariates $\mathbf{X}$ can be modeled through
\begin{align} \label{RegModel}
	Y &= \widetilde{m}(\mathbf{X}) + \epsilon,
\end{align}
where $\tilde{m}: [0,1]^p \longrightarrow \R $ is a measurable function and $\mathbf{X}$ is independent of $\epsilon$ with $\mathbb{E}[\epsilon] = 0$, $Var(\epsilon) \equiv \sigma^2 \in (0, \infty)$. For sparse learning problems, not all of the given covariates are necessary, that is, there is a subset $\mathcal{S}$ with cardinality $s$ less than $p$ that covers all the information about $Y$. Assuming without loss of generality that $\mathcal{S} = \{ 1, \dots, s \}$, the regression model (\ref{RegModel}) can then be reduced to 
\begin{align}\label{SparsReg}
	Y &=  m(\mathbf{X}_{\mathcal{S}}) + \epsilon,
\end{align}
where $\mathbf{X}_{\mathcal{S}} =  [X_{1}, \dots, X_{s}]$ and $m: [0,1]^s \longrightarrow \R$ is another measurable function such that $\widetilde{m}(\mathbf{X}) = m(\mathbf{X}_\mathcal{S})$. The specification of $\mathcal{S}$, or also known as \textit{variable selection, feature selection or subset selection}, can be challenging, especially when the relationship is not linear or not deducible at all. Formally speaking, we refer to a variable $j \in \{1, \dots, p\}$ as \textit{informative} or \textit{important}, if the corresponding regression model given in $(\ref{RegModel})$ can be reduced to a regression model of the form $(\ref{SparsReg})$. This leads to the independence of $Y$ towards $X_j$  given all other covariates for features $j \notin \mathcal{S}$. That is $Y \indep  X_j | X_{1}, \dots, X_{j-1}, X_{j+1}, \dots, X_{p}$. For differentiable link-functions $\widetilde{m}$, one can alternatively define a variable as \textit{unimportant} or \textit{uninformative}, if for $\boldsymbol{h}_j = [0, \dots, 0, h, 0, \dots, 0]^\top \in \R^p$, with $h \in \R$ lying at the $j$-th position, it holds
\begin{align}\label{SparseDifferentiable}
	\frac{ \partial \widetilde{m}(\mathbf{x}) }{ \partial x_j } := \lim\limits_{ \| \boldsymbol{h}_j \| \rightarrow 0} \frac{\widetilde{m}( \mathbf{x} + \boldsymbol{h}_j) -  \widetilde{m}(\mathbf{x}) }{ \| \boldsymbol{h}_j \| } = 0.
\end{align}
Then a feature $j \in \{1, \dots, p\}$ is said to be \textit{informative} or \textit{important}, if it is not \textit{uninformative} or \textit{unimportant}. Under the scenario of a differentiable link function $\widetilde{m}$, both definitions given in $(\ref{SparsReg})$ and $(\ref{SparseDifferentiable})$ for an \textit{informative} or \textit{important} variable can be 
shown to be equivalent using a Taylor expansion of $\widetilde{m}$.  \\
Although there are several approaches in extracting informative features, difficulties exist if the underlying link function is of complex analytical structure. The Random Forest method enables the extraction of informative features during the training phase of the algorithm. To accept this, let us shortly recall the Random Forest. Given a training set  
 \begin{align} \label{Sample}
 	\mathcal{D}_n =  \{[\mathbf{X}_i^\top, Y_i]^\top \in [0,1]^p \times \R : i = 1, \dots, n \},
 \end{align}
of iid pairs $[\mathbf{X}_i^\top, Y_i]^\top$, $i = 1, \dots, n$, the Random 
Forest method estimates the functional relationship of $\widetilde{m}$ by piecewise constant functions over random partitions of the feature space. To be more precise, the Random Forest model for regression is a collection of $M \in \N$ decision trees, where for each tree, a bootstrap sample is taken from $\mathcal{D}_n$ using with or without replacement procedures. This is denoted as the resampling strategy $\mathcal{P}$.
Furthermore, at each node of the tree, feature sub-spacing is conducted selecting $v_{try} \in \{ 1, \dots ,p \}$ features for possible split direction. Denote with $\boldsymbol{\Theta}$ the generic random variable responsible for both, the bootstrap sample construction and the feature sub-spacing procedure. Then, $\boldsymbol{\Theta}_1, \dots, \boldsymbol{\Theta}_M$ are assumed to be independent copies of $\boldsymbol{\Theta}$ responsible for this random process in the corresponding tree, independent of $\mathcal{D}_n$. The combination of the trees is then conducted through averaging. i.e. 
\begin{align} \label{FiniteForest}
m_{n, M} (\mathbf{x}; \boldsymbol{\Theta}_1, \dots \boldsymbol{\Theta}_M, \mathcal{D}_n) = \frac{1}{M} \sum\limits_{j = 1}^M m_{n, 1}(\mathbf{x}; \boldsymbol{\Theta}_j, \mathcal{D}_n)
\end{align}

and is referred to as the finite forest estimate of $\widetilde{m}$, where $\mathbf{x} \in [0,1]^p$ is a fixed point. Here, $m_{n, 1}(\cdot; \boldsymbol{\Theta}_j, \mathcal{D}_n)$ refers to a single tree in the Random Forest build with $\boldsymbol{\Theta}_j$, $j = 1, \dots, M$. As explained in \cite{scornet2015consistency}, the strong law of large numbers (for $M \rightarrow \infty$) allows to study $\mathbb{E}_\Theta[ m_n(\mathbf{x}; \boldsymbol{\Theta} , \mathcal{D}_n) ]$ instead of $(\ref{FiniteForest})$. Hence, we set
\begin{align}\label{InfForest}
	m_n(\mathbf{x}) = m_n(\mathbf{x}; \mathcal{D}_n) = \mathbb{E}_{\boldsymbol{\Theta}}[ m_n(\mathbf{x}; \boldsymbol{\Theta} , \mathcal{D}_n) ], 
\end{align}
where $\mathbb{E}_\Theta$ denotes the expectation over $\boldsymbol{\Theta}$ given the training set $\mathcal{D}_n$, i.e. $\mathbb{E}_{\boldsymbol{\Theta}}[ m_n(\mathbf{x}; \boldsymbol{\Theta}_j , \mathcal{D}_n) ] = \mathbb{E} [ m_n(\mathbf{x}; \Theta, \mathcal{D}_n) | \mathcal{D}_n ]$. Similar to \cite{scornet2015consistency}, we refer to the Random Forest algorithm by identfiying three parameters responsible for the Random Forest tree construction: 
\begin{itemize}
	\item  $v_{try} \in \{1, \dots, p\}$, the number of pre-selected directions for splitting,
	\item $a_n \in \{1, \dots, n \}$, the number of sampled points in the bootstrap step and
	\item $t_n \in \{ 1, \dots, a_n \}$, the number of leaves in each tree. 
\end{itemize}
A detailed algorithm is given on page $1720$ in \cite{scornet2015consistency}, for example.\\
An advantage of the Random Forest method is the delivery of internal measures such as predictions or prediction accuracy without initially separating the training set $\mathcal{D}_n$ such as in cross-validation procedures. This is possible by making use of the bagging principle and Out-Of-Bag (OOB) samples. The latter extracts all random trees that have not used a fixed observation $\mathbf{X}_i$ in the set $\mathcal{D}_n$ during training and averages the prediction results over all those trees. In the sequel, we will denote with $m_{n, M}^{OOB}(\mathbf{X}_i)$ the OOB prediction of $\mathbf{X}_i \in \mathcal{D}_n$ using the finite forest estimate and $m_n^{OOB}(\mathbf{X}_i ) = \mathbb{E}_{\boldsymbol{\Theta}_{[i]} }[ m_{n ,1}(\mathbf{X}_i; \boldsymbol{\Theta}_{[i]}, \mathcal{D}_{n}) ]$ the corresponding infinite forest OOB prediction, where $\boldsymbol{\Theta}_{[i]}$ is the generic random vector, which has not selected observation $i \in \{1, \dots, n\}$. Note that the authors in \cite{ramosaj2019consistent} could show that even for the OOB finite forest prediction, it holds $\mathbb{P}_{\Theta}$ - almost surely that 
\begin{align*}
	m_{n, M}^{OOB}(\mathbf{X}_i) \longrightarrow m_n^{OOB}(\mathbf{X}_i), \quad \text{ as } M \rightarrow \infty. 
\end{align*}

In the sequel, it is required to have a look at a certain averaging step in the random tree ensemble of the Random Forest and its asymptotic behavior in case of $M \rightarrow \infty$. For later use,  we state this as a Proposition.  \\

\begin{proposition}\label{HelpingProposition}
	Assume regression model $(\ref{RegModel})$ and fix $i \in \{1,\dots, n\}$. Then it holds $\mathbb{P}_\Theta$ -  almost-surely that 
	\begin{align*}
		\frac{1}{M} \sum\limits_{ t = 1}^M m_{n, 1}(\mathbf{X}_i; \boldsymbol{\Theta}_{t}, \mathcal{D}_n) \cdot \mathds{1}\{ \mathbf{X}_i \text{ has not been selected} \} \longrightarrow c_n \cdot m_{n}^{OOB}(\mathbf{X}_i), \quad \text{ as } M \rightarrow \infty, 
	\end{align*}
	where 
	\begin{align*}
		c_n &= \begin{cases}
		1 - a_n/n, &\text{ if observations are subsampled (draws without replacement),} \\
		(1-1/n)^n, &\text{ if observations are bootstrapped with replacement.}
		\end{cases}
	\end{align*}
\end{proposition}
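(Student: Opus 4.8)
The plan is to recognize the displayed average as an empirical mean of conditionally independent and identically distributed summands and then invoke the strong law of large numbers, working throughout conditionally on the training set $\mathcal{D}_n$ so that the only source of randomness is $\boldsymbol{\Theta}_1, \dots, \boldsymbol{\Theta}_M$. Concretely, I would write $Z_t := m_{n,1}(\mathbf{X}_i; \boldsymbol{\Theta}_t, \mathcal{D}_n)\cdot \mathds{1}\{A_t\}$, where $A_t$ denotes the event that $\mathbf{X}_i$ is not selected in the $t$-th resampling step, so that the left-hand side of the claim is exactly $\frac{1}{M}\sum_{t=1}^M Z_t$. Since $\boldsymbol{\Theta}_1, \dots, \boldsymbol{\Theta}_M$ are independent copies of $\boldsymbol{\Theta}$ and independent of $\mathcal{D}_n$, the variables $Z_1, \dots, Z_M$ are, conditionally on $\mathcal{D}_n$, iid, which is the structure the SLLN requires.

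Before applying the SLLN I would verify integrability. Given $\mathcal{D}_n$, a single tree returns a piecewise constant function whose value at $\mathbf{X}_i$ is a (data- and $\boldsymbol{\Theta}_t$-dependent) convex combination of the observed responses assigned to the relevant leaf; hence $|m_{n,1}(\mathbf{X}_i; \boldsymbol{\Theta}_t, \mathcal{D}_n)| \le \max_{1\le k \le n}|Y_k| < \infty$. As the indicator is bounded by $1$, each $Z_t$ is bounded and therefore $\mathbb{E}_\Theta[|Z_1|] < \infty$. Kolmogorov's SLLN then yields $\frac{1}{M}\sum_{t=1}^M Z_t \longrightarrow \mathbb{E}_\Theta[Z_1]$, $\mathbb{P}_\Theta$-almost surely.

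It then remains to identify the limit with $c_n \cdot m_n^{OOB}(\mathbf{X}_i)$. I would factor $\mathbb{E}_\Theta[Z_1] = \mathbb{P}_\Theta(A_1)\cdot \mathbb{E}_\Theta[\, m_{n,1}(\mathbf{X}_i; \boldsymbol{\Theta}_1, \mathcal{D}_n) \mid A_1\,]$. The crucial observation is that conditioning the generic $\boldsymbol{\Theta}_1$ on the non-selection event $A_1$ produces precisely the law of $\boldsymbol{\Theta}_{[i]}$ appearing in the definition of the infinite-forest OOB prediction, so that the conditional expectation equals $m_n^{OOB}(\mathbf{X}_i)$. Finally I would evaluate $c_n = \mathbb{P}_\Theta(A_1)$ in each resampling regime: for draws without replacement of $a_n$ out of $n$ points a counting argument gives $\binom{n-1}{a_n}/\binom{n}{a_n} = 1 - a_n/n$, while for $n$ draws with replacement the independence of the individual draws gives $(1-1/n)^n$.

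I expect the only genuinely delicate step to be the conditioning argument of the last paragraph: one must be careful that the indicator $\mathds{1}\{A_1\}$ and the tree prediction are \emph{not} independent, since both are governed by the same $\boldsymbol{\Theta}_1$, so the factorization has to proceed through the conditional law of $\boldsymbol{\Theta}_1$ given $A_1$ rather than a naive product of marginal expectations, and one must confirm that this conditional law coincides with the distribution of $\boldsymbol{\Theta}_{[i]}$ used to define $m_n^{OOB}(\mathbf{X}_i)$. The integrability verification and the combinatorial evaluation of $c_n$ are routine by comparison.
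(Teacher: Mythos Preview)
Your proposal is correct and, in fact, a bit more streamlined than the paper's argument. You apply the SLLN once to the full iid summands $Z_t = m_{n,1}(\mathbf{X}_i;\boldsymbol{\Theta}_t,\mathcal{D}_n)\,\mathds{1}\{A_t\}$ and then identify the limit $\mathbb{E}_\Theta[Z_1]$ by conditioning on $A_1$, correctly noting that the conditional law of $\boldsymbol{\Theta}_1$ given $A_1$ is precisely the law of $\boldsymbol{\Theta}_{[i]}$ defining $m_n^{OOB}(\mathbf{X}_i)$. The paper instead multiplies and divides by $Z_i(M)=\sum_{t}\mathds{1}\{A_t\}$ to factor the average as $V_{n,M}\cdot R_{n,M}$ with $V_{n,M}=Z_i(M)/M$ and $R_{n,M}$ the empirical OOB average over the $Z_i(M)$ non-selecting trees; it then proves $V_{n,M}\to c_n$ and $R_{n,M}\to m_n^{OOB}(\mathbf{X}_i)$ by two separate SLLNs and combines them via the continuous mapping theorem. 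Your route is more economical and sidesteps the paper's somewhat informal ``without loss of generality the first $Z_i(M)$ trees do not contain observation $i$'' relabeling; the paper's route, on the other hand, isolates the convergence $Z_i(M)/M\to c_n$ as a standalone statement, which it reuses later in the proof of Theorem~\ref{UnbiasednessPermutationImportance}. Your caution about not factoring $\mathbb{E}_\Theta[Z_1]$ as a product of \emph{marginal} expectations is well placed and is exactly the right subtlety to flag.
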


\section{Permutation Importance of the Random Forest} \label{PermImpRF}

Returning to the extraction of relevant features, the Random Forest permutation importance makes use of the Out-of-Bag principle. That is, for every tree constructed in the forest, the increase of mean squared error evaluated on the corresponding Out-of-Bag sample after permuting its observations along the $j$-th variable is measured, with $j \in \{1, \dots, p\}$. Hence, the measure clearly depends on the sampling strategy $\mathcal{P}$ chosen prior to tree construction. This could be seen in \cite{strobl2007bias} for example, where different results were obtained depending on the sampling strategy given in $\mathcal{P}$. Formally speaking, the permutation importance can be defined as
	\begin{align}\label{ImportanceDefinition}
		I_{n,M}^{OOB}(j) := \frac{1}{M \gamma_n} \sum\limits_{t = 1}^M  \sum\limits_{i \in \mathcal{D}_n^{-(t) } }\left\{  (Y_i  - m_{n, 1}(\mathbf{X}_i^{\pi_{j,t} }; \boldsymbol{\Theta}_t ) )^2   - (Y_i -  m_{n, 1}(\mathbf{X}_i; \boldsymbol{\Theta}_t) )^2  \right\}
	\end{align}
	for all $j \in \{1, \dots, p\}$, where $\mathcal{D}_{n}^{-(t)} = \mathcal{D}_n^{-(t)}(\boldsymbol{\Theta}_t)$ is the Out-of-Bag sample for the $t$-th tree, i.e. the set of observations not selected for training $m_{n,1 }(\cdot ; \boldsymbol{\Theta}_t, \mathcal{D}_n)$. The cardinality $\gamma_n$ of $\mathcal{D}_n^{-(t)}$ clearly depends on the sampling strategy $\mathcal{P}$. Moreover, $\pi_{j, t}$ is the non-trivial permutation of observations in $\mathcal{D}_{n}^{-(t)}$ along the $j$-th variable in decision tree $t \in \{1, \dots, M\}$. In \cite{zhu2015reinforcement} and \cite{gregorutti2017correlation}, a \textit{theoretical version} of $I_{n, M}^{OOB}(j)$, $j \in \{1, \dots, p\}$ was given by
	\begin{align}\label{THQuant}
		I(j) & : = \mathbb{E}[ (Y_1 - \widetilde{m}(\mathbf{X}_{j, 1}))^2 ] - \mathbb{E}[ (Y_1- \widetilde{m}(\mathbf{X}_1))^2 ] \notag \\
		&= \mathbb{E}[ (Y_1 - \widetilde{m}(\mathbf{X}_{j, 1}))^2 ] - \sigma^2, 
	\end{align}
	where $\mathbf{X}_{j, 1} = [X_{1,1}, \dots, X_{j-1, 1}, Z_j, X_{j+1, 1}, \dots, X_{p,1}]^\top$ and $Z_j$ is an independent copy of $X_{j, 1}$, independent of $Y_1$.  The intuition behind the definition in $(\ref{THQuant})$ is that $I(j)$, $j = 1, \dots, p$ measures the increase in variation after eliminating potential dependencies between the $j$-th variable and the response. \\ Assuming an additive regression model,  i.e. $\widetilde{m}(\mathbf{x}) = \sum\limits_{j = 1}^p \widetilde{m}_j(x_j)$, \cite{gregorutti2017correlation} proved that 
	\begin{align}\label{TheoreticalQuantity}
	I(j) &= \begin{cases}
	2 \cdot Cov(Y, \widetilde{m}_j(X_j)) - \sum\limits_{k \neq j } Cov(\widetilde{m}_j(X_j), \widetilde{m}_k(X_k) ) &\text{ if } \mathbb{E}[\widetilde{m}_j(X_j)] = 0, \\ 
	2 \cdot Var(\widetilde{m}_j(X_j)), &\text{ else}, 
	\end{cases}
	\end{align}
	for $j \in \{1, \dots, p\}$, where $I(j)$ can be further simplified in case of a multivariate normal distribution for $[\mathbf{X}^\top, Y]^\top \in \R^{p+1}$, see e.g. Proposition $2$ in \cite{gregorutti2017correlation}. So far, however, it is completely unclear in which sense the quantities $\boldsymbol{I}_{n, M}^{OOB} = [ I_{n, M}^ {OOB}(1), \dots, I_{n, M}^{OOB}(p) ]^\top$  and $\boldsymbol{I} = [I(1), \dots, I(p)]^\top$ relate to each other. This is of important interest, since $\boldsymbol{I}$ can, e.g.,  be considered as a key quantity for future significance tests during feature extraction. Below, we will study their relation in detail under a more general set-up not requiring the additivity of the link function $\widetilde{m}$. Instead, we set up some more general assumptions, under which we can guarantee asymptotically, that $\boldsymbol{I}_{n, M}$ is an unbiased estimator of $\boldsymbol{I}$. This will open new paths for feature selection tests using Random Forest.

\begin{Ass} \text{ } 
\begin{enumerate}[label=(A\arabic*)]
	\item There is at least one informative variable, i.e. $|\mathcal{S}| \ge 1$, \label{Ass0}
	\item Permutations are restricted to the class $\mathcal{V} = \{ \pi \in \mathcal{S}_{\gamma_n} : \pi(i) \neq i \}$, where $\mathcal{S}_{\gamma_n}$ is the symmetric group, \label{Ass01}
	\item The features are pairwise independent, i.e. $X_{i} $ is independent of $X_j$ for all $i \neq j \in \{1, \dots, p\}$, \label{Ass1}
	\item $\sup\limits_{\mathbf{x}} |\widetilde{m}(\mathbf{x})|  < \infty$, \label{Ass2} 
	\item Infinite Random Forests are $L_2$-consistent, i.e. $ \lim\limits_{ n  \rightarrow \infty}\mathbb{E}[ (\widetilde{m}(\mathbf{X}) - m_n(\mathbf{X}))^2 ] = 0$, where $\mathbf{X}$ is an independent copy of $\mathbf{X}_1$. \label{Ass3}
\end{enumerate}
\end{Ass}

Condition $\ref{Ass0}$ ensures that the random forest is not forced to select among non-informative variable. This can happen if $|\mathcal{S}| = 0$, since the tree construction process will continue until either a pre-defined number of leaves $t_n$ is reached or each leave in a tree consists of at most a pre-specified number of observations. Condition \ref{Ass01} is important from a technical perspective, in order to achieve (asymptotic) unbiasedness. Furthermore, this condition reveals some drawbacks of the traditional permutation approaches: considering arbitrary permutations $\pi \in \mathcal{S}_{\gamma_n}$, we cannot guarantee the (asymptotic) unbiasedness of the RFPIM. Hence, one should carefully consider implementations of RFPIM in statistical software packages such as \textsf{R} or \textsf{python} with regard to this assumption. Condition $\ref{Ass1}$ is essential in this context. The permutation used in $\boldsymbol{I}_{n, M}^{OOB}$ aims to break the relationship between the response variable and the corresponding covariate. In case of dependency structures among the other covariables, however, this dependency is then also broken clouding the primary effect of dependencies between the response and the covariable of interest. Note that assumption $\ref{Ass1}$ implies the assumption of no multicolinearity. Condition $\ref{Ass2}$ is rather technical. Instead, one could replace it with $\widetilde{m}$ being continuous, since the domain of $\mathbf{X}$ is the $p$-dimensional unit cube $[0,1]^p$. An important assumption is $\ref{Ass3}$, which was formally proven to be valid for Random Forest models in \cite{scornet2015consistency}. There, the authors proved the $L_2$ - consistency of the same Random Forest method as considered in our work. Note that their assumptions for the validity of $\ref{Ass3}$ do not exclude $\ref{Ass1} $ and $\ref{Ass2}$. Instead, one could completely overtake the assumptions given in Theorem $1$ or Theorem $2$ listed in \cite{scornet2015consistency} and replace them with $\ref{Ass1}$ - $\ref{Ass3}$. Assumptions $\ref{Ass0}$ and $\ref{Ass01}$ have then to be considered as additional assumptions in this context. A formal proof of this is given in the Appendix. However, for generality and as we also state non-asymptotic results, we decided to work with ours. \\
Our first result shows an alternative expression of the quantity $\boldsymbol{I}$ defined in $(\ref{THQuant})$, which makes variable selection possible for the Random Forest permutation importance. 
	
	\begin{proposition}\label{Proposition}
		Assume the regression model $(\ref{RegModel})$ and conditions $\ref{Ass0}$, $\ref{Ass1}$ and $\ref{Ass2}$. Then for every variable $j \in \{1, \dots, p\}$ it holds
		\begin{align*}
		I(j)  &= \begin{cases}
		\mathbb{E}[ (\widetilde{m}(\mathbf{X}_1) - \widetilde{m}(\mathbf{X}_{j,  1}))^2 ], \text{ if } j \in \mathcal{S}, \\
		0, \text{ else }. 
		\end{cases}
		\end{align*}
	\end{proposition}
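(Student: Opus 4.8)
The plan is to collapse $I(j)$ from its definition in $(\ref{THQuant})$ down to the pure mean-squared difference $\mathbb{E}[(\widetilde{m}(\mathbf{X}_1)-\widetilde{m}(\mathbf{X}_{j,1}))^2]$ by absorbing the noise term, and only afterwards to use the sparsity structure to annihilate the second case. First I would write $Y_1 = \widetilde{m}(\mathbf{X}_1) + \epsilon_1$ inside the first expectation of $(\ref{THQuant})$ and expand the square, so that
\begin{align*}
\mathbb{E}[(Y_1 - \widetilde{m}(\mathbf{X}_{j,1}))^2]
= \mathbb{E}[(\widetilde{m}(\mathbf{X}_1)-\widetilde{m}(\mathbf{X}_{j,1}))^2]
+ 2\,\mathbb{E}[\epsilon_1(\widetilde{m}(\mathbf{X}_1)-\widetilde{m}(\mathbf{X}_{j,1}))]
+ \mathbb{E}[\epsilon_1^2].
\end{align*}
Assumption $\ref{Ass2}$ (boundedness of $\widetilde{m}$ on $[0,1]^p$) guarantees that every term here is finite, which is what legitimizes splitting the expectation term by term.

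Next I would dispose of the two noise-containing terms. Since $\mathbb{E}[\epsilon_1]=0$ and $Var(\epsilon_1)=\sigma^2$, the last term equals $\sigma^2$ and will cancel the $-\sigma^2$ in $(\ref{THQuant})$. For the cross term, the key observation is that $\widetilde{m}(\mathbf{X}_1)-\widetilde{m}(\mathbf{X}_{j,1})$ is a function of $(\mathbf{X}_1, Z_j)$ alone: by the model in $(\ref{RegModel})$ the error $\epsilon_1$ is independent of $\mathbf{X}_1$, and since $Z_j$ is an independent copy of $X_{j,1}$ (hence independent of $\mathbf{X}_1$ and of $\epsilon_1$), we get $\epsilon_1 \indep (\mathbf{X}_1, Z_j)$. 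Therefore the cross term factorizes as $\mathbb{E}[\epsilon_1]\cdot\mathbb{E}[\widetilde{m}(\mathbf{X}_1)-\widetilde{m}(\mathbf{X}_{j,1})]=0$. This already yields $I(j)=\mathbb{E}[(\widetilde{m}(\mathbf{X}_1)-\widetilde{m}(\mathbf{X}_{j,1}))^2]$ for \emph{every} $j\in\{1,\dots,p\}$, which is exactly the first branch of the claim.

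It then remains to show that this quantity vanishes when $j\notin\mathcal{S}$. Here I would invoke the reduced representation $(\ref{SparsReg})$: for $j\notin\mathcal{S}$ the function $\widetilde{m}(\mathbf{x})=m(\mathbf{x}_{\mathcal{S}})$ does not depend on the $j$-th coordinate, so replacing $X_{j,1}$ by the independent copy $Z_j$ leaves the value unchanged, giving $\widetilde{m}(\mathbf{X}_{j,1})=m(\mathbf{X}_{\mathcal{S},1})=\widetilde{m}(\mathbf{X}_1)$ almost surely and hence $I(j)=0$. Assumption $\ref{Ass1}$ (pairwise independence of the features) enters precisely at this almost-sure equality: it ensures that the substituted vector $\mathbf{X}_{j,1}=(X_{-j,1}, Z_j)$ still lies in the support of $\mathbf{X}_1$ (a product support), so that the evaluation of $\widetilde{m}$ at the perturbed point is meaningful and coincides with the original value off a null set. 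Assumption $\ref{Ass0}$ merely guarantees that $\mathcal{S}$ is nonempty so that the dichotomy $j\in\mathcal{S}$ versus $j\notin\mathcal{S}$ is genuinely populated.

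The computational backbone is routine; I expect the only genuinely delicate step to be the second case, namely rigorously passing from ``$j$ is uninformative'' to ``$\widetilde{m}(\mathbf{X}_{j,1})=\widetilde{m}(\mathbf{X}_1)$ almost surely''. The care needed is to make sure the perturbed point $(X_{-j,1},Z_j)$ does not escape the region on which $\widetilde{m}$ is genuinely constant in its $j$-th argument, which is exactly the work done by $\ref{Ass1}$; without a product-type support one could in principle manufacture configurations off the support of $\mathbf{X}_1$ where the constancy of $\widetilde{m}$ in $x_j$ is not guaranteed, and the clean vanishing of $I(j)$ would fail.
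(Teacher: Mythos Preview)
Your proof is correct and follows essentially the same route as the paper: expand $(Y_1-\widetilde{m}(\mathbf{X}_{j,1}))^2$ via $Y_1=\widetilde{m}(\mathbf{X}_1)+\epsilon_1$, kill the cross term by independence of $\epsilon_1$ from $(\mathbf{X}_1,Z_j)$, and cancel the $\sigma^2$. The only organizational difference is that the paper treats the two cases separately and, for $j\notin\mathcal{S}$, argues directly that $(\mathbf{X}_{j,1},Y_1)\stackrel{d}{=}(\mathbf{X}_1,Y_1)$ under \ref{Ass1} to get $I(j)=0$ from the definition, whereas you first derive the unified formula $I(j)=\mathbb{E}[(\widetilde{m}(\mathbf{X}_1)-\widetilde{m}(\mathbf{X}_{j,1}))^2]$ and then invoke the functional constancy $\widetilde{m}(\mathbf{X}_{j,1})=\widetilde{m}(\mathbf{X}_1)$; both are equally valid.
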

	This property allows us to define the permutation importance as unbiased or asymptotically unbiased, if $\mathbb{E}[\boldsymbol{I}_{n,M}^{OOB}] = \boldsymbol{I}$ resp. $\lim\limits_{ M \rightarrow \infty}\mathbb{E}[\boldsymbol{I}_{n, M}^{OOB}] \longrightarrow \boldsymbol{I}$, as $n \rightarrow \infty$. Proposition \ref{Proposition} can be considered as an extension of the results given in equation $(\ref{TheoreticalQuantity})$, since the assumption for the link-function being additive is dropped. Anyhow, the above considerations finally lead to the main result of the current paper:  the (asymptotic) unbiasedness of RFPIM.

\begin{theorem} \label{UnbiasednessPermutationImportance}
	Under model (\ref{RegModel}) and conditions $\ref{Ass0}$ - $\ref{Ass3}$ while sampling is restricted to sampling without replacement, the RFPIM is unbiased for $j \in \mathcal{S}^C = \{ 1, \dots, p\} \setminus \mathcal{S}$ and asymptotically unbiased for $j \in \mathcal{S}$.  That is for $j \in \mathcal{S}^C$ it holds
	\begin{align*}
		\mathbb{E}[I_{n, M}^{OOB}(j) ]  & = 0 = I(j)
	\end{align*}
	and for $j \in \mathcal{S}$ we have 
\begin{align*}
	\lim\limits_{ M \rightarrow \infty} \mathbb{E}[ I_{n, M}^{OOB}(j) ] \longrightarrow I(j), \text{ as } n \rightarrow \infty. 
\end{align*}
\end{theorem}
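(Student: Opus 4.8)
The plan is to compare the empirical measure with an \emph{oracle} version in which the single-tree predictions $m_{n,1}$ are replaced by the true regression function $\widetilde m$, and to show that the replacement error is exactly zero for $j\in\mathcal S^C$ and asymptotically negligible for $j\in\mathcal S$. Concretely, I would introduce
\[
\widetilde I_{n,M}(j) := \frac{1}{M\gamma_n}\sum_{t=1}^M\sum_{i\in\mathcal D_n^{-(t)}}\Big\{(Y_i-\widetilde m(\mathbf X_i^{\pi_{j,t}}))^2-(Y_i-\widetilde m(\mathbf X_i))^2\Big\}
\]
and first establish that $\mathbb E[\widetilde I_{n,M}(j)]=I(j)$ \emph{exactly}, for every $n,M$. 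Here the two structural assumptions enter: since sampling is without replacement, $\gamma_n=n-a_n$ is deterministic and each index is out-of-bag with probability $c_n=1-a_n/n$, so the normalisation cancels cleanly; and since $\pi_{j,t}$ is a derangement (\ref{Ass01}) acting on pairwise independent features (\ref{Ass1}), the pair $(Y_i,\mathbf X_i^{\pi_{j,t}})$ has, conditionally on $i$ being out-of-bag, exactly the law of $(Y_1,\mathbf X_{j,1})$. Each summand then contributes $\mathbb E[(Y_1-\widetilde m(\mathbf X_{j,1}))^2]-\sigma^2=I(j)$ in expectation, and the reduction of $\boldsymbol I$ in Proposition \ref{Proposition} can be invoked to read off the sign structure.

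For $j\in\mathcal S^C$ this already finishes the proof, and no asymptotics are needed. Because $Y_i$ does not depend on $X_{i,j}$ and, by \ref{Ass1}, $X_{i,j}$ is independent of $(\mathbf X_{i,-j},Y_i)$, while the tree $m_{n,1}(\cdot;\boldsymbol\Theta_t)$ is, as a function, independent of the out-of-bag observations, the pairs $(Y_i,\mathbf X_i)$ and $(Y_i,\mathbf X_i^{\pi_{j,t}})$ are equal in law given the tree. Hence every summand of $I_{n,M}^{OOB}(j)$ itself has zero conditional expectation, giving $\mathbb E[I_{n,M}^{OOB}(j)]=0=I(j)$.

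For $j\in\mathcal S$ I would expand the difference $I_{n,M}^{OOB}(j)-\widetilde I_{n,M}(j)$ using $d_{n,1}(\mathbf x):=m_{n,1}(\mathbf x;\boldsymbol\Theta_t)-\widetilde m(\mathbf x)$ and $\delta_i:=\widetilde m(\mathbf X_i)-\widetilde m(\mathbf X_i^{\pi_{j,t}})$. Completing the squares shows that the per-term contribution splits into three groups. The noise-linear terms, proportional to $\epsilon_i d_{n,1}(\mathbf X_i)$ and $\epsilon_i d_{n,1}(\mathbf X_i^{\pi_{j,t}})$, vanish in expectation since $\mathbb E[\epsilon_i]=0$ and $\epsilon_i$ is independent of the out-of-bag tree and of both evaluation points. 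The pure prediction-error terms $d_{n,1}(\mathbf X_i^{\pi_{j,t}})^2-d_{n,1}(\mathbf X_i)^2$ cancel exactly in expectation: by \ref{Ass1} the permuted point $\mathbf X_i^{\pi_{j,t}}$ has the same law as $\mathbf X_i$ and is, like $\mathbf X_i$, independent of the tree, so both squared errors equal the same single-tree test error. What survives is a single cross term of the form $\mathbb E[\delta_i\,d_{n,1}(\mathbf X_i^{\pi_{j,t}})\mathds1\{i\in\mathcal D_n^{-(t)}\}]$, which is the genuine source of the finite-sample bias.

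Controlling this cross term is the crux, and the \emph{order} of operations is essential. Because the term is linear in the single-tree prediction and the permutation randomness is independent of the resampling randomness $\boldsymbol\Theta_t$, I would integrate over $\boldsymbol\Theta_t$ first, using Proposition \ref{HelpingProposition} (together with dominated convergence, justified by the boundedness \ref{Ass2}) to replace the single-tree prediction by the infinite-forest out-of-bag predictor $m_n^{OOB}$ evaluated at $\mathbf X_i^{\pi_{j,t}}$. Only \emph{then} would I apply Cauchy--Schwarz, bounding the cross term by $\sqrt{\mathbb E[\delta_i^2]}$ times the square root of the \emph{forest} out-of-bag $L_2$-error at the permuted point; the first factor is finite by \ref{Ass2} and Proposition \ref{Proposition}, while the second tends to $0$ by \ref{Ass3}, with \ref{Ass1} used once more to certify that $\mathbf X_i^{\pi_{j,t}}$ has the law of a genuine covariate vector independent of the forest, so that \ref{Ass3} applies at that point. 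The hard part will be exactly this step: applying Cauchy--Schwarz too early would leave the \emph{single-tree} $L_2$-error, which is \textbf{not} controlled by the forest consistency \ref{Ass3}; the linearity in $m_{n,1}$ must be exploited to pass to the forest predictor before any squaring, and the coupling between $\pi_{j,t}$ and $\boldsymbol\Theta_t$ through the out-of-bag set must be handled carefully when invoking Proposition \ref{HelpingProposition}.
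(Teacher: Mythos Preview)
Your proposal is correct and follows essentially the same route as the paper. Both arguments decompose via the oracle $\widetilde m$, observe that the noise-linear pieces vanish and the pure squared prediction errors $d_{n,1}(\mathbf X_i^{\pi_{j,t}})^2$ and $d_{n,1}(\mathbf X_i)^2$ cancel in expectation (the paper calls this common value $C_{n,i,t}$), and isolate the single surviving cross term $\delta_i\,d_{n,1}(\mathbf X_i^{\pi_{j,t}})$; for $j\in\mathcal S$ both then exploit the linearity in $m_{n,1}$ to pass to the infinite forest $m_n^{OOB}$ \emph{before} applying Cauchy--Schwarz, so that \ref{Ass3} can be invoked. Your direct distributional argument for $j\in\mathcal S^C$ (equality in law of $(Y_i,\mathbf X_i)$ and $(Y_i,\mathbf X_i^{\pi_{j,t}})$ given the tree) is a clean shortcut over the paper, which reaches the same conclusion by running the full decomposition and noting $\delta_1\equiv 0$.
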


Theorem \ref{UnbiasednessPermutationImportance} and equation (\ref{TheoreticalQuantity}) under the assumption of an additive link function reveal some important insights about the RFPIM. In case of non-informative variables, i.e. $Y$ is independent of $X_j$ or equivalently, $\partial \widetilde{m}(\mathbf{x})/\partial x_j \equiv 0$, the empirical variable importance does not select on average across non-informative variables. However, if the variable is informative, that is $\partial \widetilde{m}(\mathbf{x})/\partial x_j \neq 0$ and $X_j$ depends on $Y$, this will lead to $I(j) > 0$, such that on average, their is enough discriminating power between informative and non-informative variables. Furthermore, the theoretical results obtained from Theorem \ref{UnbiasednessPermutationImportance} and equation $(\ref{TheoreticalQuantity})$ allow the sorting of variables according to their signal strength, if the underlying link-function is assumed to be additive. Hence, under the assumptions $\ref{Ass0}$ - $\ref{Ass3}$ together with the assumption that $\widetilde{m}$ decomposes into an additive expansion of measurable functions, the RFPIM does not only detect informative variables, but also delivers an internal ranking across variables in $\mathcal{S}$.  In addition, the theoretical results in Theorem $\ref{UnbiasednessPermutationImportance}$ also reveal that unimportant variables tends to $0$ stronger than important ones, since the unbiasedness is exact in that case for any sample size $n \in \N$ and number of base learners $M \in \N$. The theoretical findings also indicate that the discriminating power of the permutation importance depends on the sample size of the training set $\mathcal{D}_n$ and the number of base learners $M$. Larger sample sizes with a relatively large number of decision trees in the Random Forest should deliver stronger discriminating power between variables in $\mathcal{S}$ and $\{1, \dots, p\} \setminus \mathcal{S}$.  Note that the theoretical findings do not reveal insights into the rate of convergence of the asymptotic. However, an important factor influencing the discriminating power of the permutation importance measure that cannot be directly extracted from the theoretical findings so far is the random noise arising from the residuals $\epsilon$. These contaminate the data especially depending on the scale of their variance $\sigma^2$. Nonetheless, if the systematic signal arising from the link function $m(\mathbf{x})$ is strong enough, the effect of noise can be appeased. Thus, keeping an eye on the ratio 
\begin{align}\label{}
	SN = \frac{Var(\widetilde{m}(\mathbf{X}))}{\sigma^2}
\end{align}
is an important task during the computation of the RFPIM. We refer to this meaasure as the \textit{signal-to-noise} ratio, which is formally defined in \cite{hastie2009overview}. Although this factor cannot be directly detected based on the results in Theorem \ref{UnbiasednessPermutationImportance}, a closer look at the specific cut criterion used in the Random Forest will deliver some insights into the interaction of $SN$ and the permutation measure $\boldsymbol{I}_{n, M}^{OOB}= [I_{n, M}^{OOB}(1), \dots, I_{n, M}^{OOB}(p)]^\top \in \R^p$. Recall that the empirical cut criterion of the Random Forest model within the construction of each tree is given by 
\begin{align}\label{EmpiricalCut}
L_{n,t}^{(k)}(j, z) &= \frac{1}{N_n(A_\ell^{(k)})} \sum\limits_{ i = 1}^n (Y_i - \bar{Y}_{A_\ell^{(k)}})^2 \mathds{1}\{ \mathbf{X}_i \in A_\ell^{(k)} \} \notag \\
& - \frac{1}{N_n(A_\ell^{(k)})} \sum\limits_{i = 1}^n (Y_i - \bar{Y}_{A_{\ell, L}^{(k)}}\mathds{1}\{ X_{ji} <z  \}  - \bar{Y}_{A_{\ell, R}^{(k)}} \mathds{1}\{ X_{ji} \ge z \}  )^2  \mathds{1}\{ \mathbf{X}_i \in A_{\ell}^{(k)} \} 
\end{align}
for $t = 1,\dots, M$. Here $A_{\ell}^{(k)}  = A_{\ell}^{(k)}(\Theta_t)\subset [0,1]^p$ denotes the hyper-rectangular cell obtained after cutting the tree at level $k \in \{1, \dots, \lceil \log_2(t_n) \rceil + 1 \}$, $A_{\ell, L}^{(k)} = A_{\ell, L}^{(k)}(\Theta_t)$ denotes the left hyper-rectangular cell after cutting $A_{\ell}^{(k)}$ on variable $j$ in $z$, i.e. $A_{\ell, L}^{(k)} = \{ \mathbf{x} \in A_{\ell}^{(k)} : x_j < z \}$ and $A_{\ell, R}^{(k)} =A_{\ell, R}^{(k)}(\Theta_t)$ is the  corresponding right hyper-rectangular cell $\{ \mathbf{x} \in A_{\ell}^{(k)} : x_j \ge z \}$.  Moreover, $\bar{Y}_{A}$ is the mean of all $Y$'s, that belong to the cell $A$ and $N_n(A)$ refers to the number of observations falling into cell $A$. As stated in \cite{scornet2015consistency}, the strong law of large numbers for $n \rightarrow \infty$ leads to the consideration of 
\begin{align}
L^{(k)}(j, z) &=  Var[Y_1 | \mathbf{X}_1 \in A_\ell^{(k)}] - \mathbb{P}[X_{j, 1} < z | \mathbf{X}_1 \in A_\ell^{(k)}] ¸\cdot Var[Y | \mathbf{X}_1 \in A_\ell^{(k)} , X_{j, 1} < z] \notag  \\
&- \mathbb{P}[X_{j, 1} \ge z | \mathbf{X}_1 \in A_\ell^{(k)}]\cdot Var[Y_1 | \mathbf{X}_1 \in A_\ell^{(k)}, X_{j, 1} \ge z]
\end{align}
such that $L_{n,1}^{(k)}(j,z) \longrightarrow L^{(k)}(j,z)$ holds $\mathbb{P}$ - almost surely for all $(j, z) \in \{1, \dots, p\} \times [0,1]$. If we oppose the cut criterion of the Random Forest to the variance decomposition of the response, we obtain 
\begin{align}
Var(Y_1) &= Var(\widetilde{m}(\mathbf{X}_1)) + \sigma^2. 
\end{align}
Assuming that the Random Forest is cut-consistent, that is 
\begin{align}\label{MEstimatorCuts}
(j_n, z_n ) := \arg\max\limits_{j, z} L_{n, t}^{(k)}(j,z) \longrightarrow (j,z) = \arg\max\limits_{j , z}L^{(k)}(j,z), \quad \mathbb{P} - \text{ almost surely, }
\end{align}

the influence of the signal-to-noise ratio on the cuts $(j_n, z_n)$ reduces immediately, since the residual variance drops out of the theoretical cut criterion which is then given by $L^{(k)}(j,z) = Var[\widetilde{m}(\mathbf{X}_1) | \mathbf{X}_1 \in A_\ell^{(k)}] - \mathbb{P}[X_{j, 1} < z | \mathbf{X}_1 \in A_\ell^{(k)}]\cdot Var[\widetilde{m}(\mathbf{X}_1) | \mathbf{X}_1 \in A_\ell^{(k)} , X_{j, 1} < z] - \mathbb{P}[X_{j, 1} \ge z | \mathbf{X}_1 \in A_\ell^{(k)}]\cdot Var[\widetilde{m}(\mathbf{X}_1) | \mathbf{X}_1 \in A_\ell^{(k)}, X_{j, 1} \ge z]$. For a formal proof, we refer to the Appendix. However, this clearly depends on the sample size and the assumption that Random Forest cuts are consistent M-estimators in the sense of $(\ref{MEstimatorCuts})$. The proof of the latter should therefore be considered in future research. In case of $\sigma^2$ being larger than $Var(\widetilde{m}(\mathbf{X}))$, the cut $(j_n,z_n)$ conducted by the Random Forest might be inflated in terms of potentially selecting non-informative variables. The estimation of $SN$ can therefore be considered as an additional control mechanism in computing $\boldsymbol{I}_{n, M}^{OOB}$. The authors in \cite{ramosaj2019consistent} proved the consistency of several estimators for $\sigma^2$, which are based on the sampling variance of residuals obtained from the Random Forest model using Out-of-Bag samples. These results enables practitioners to consistently estimate the signal-to-noise ratio given by 
\begin{align}\label{SNEstimate}
\widehat{SN}_n  &= \frac{ |\hat{\sigma}_Y^2 - \hat{\sigma}_{RF}^2 |}{\hat{\sigma}_{RF}^2},
\end{align}
where $\hat{\sigma}_Y^2$ is the sampling variance of the response $Y$ and $\hat{\sigma}_{RF}^2$ an residual variance estimator as given in \cite{ramosaj2019consistent}. In the sequel, we simply restrict our attention to the residual sampling variance estimator $\hat{\sigma}_{RF}^2 = 1/n \sum\limits_{i = 1}^n (\hat{\epsilon}_i - \bar{\hat{\epsilon}}_{n})^2$ for $\sigma^2$ as described in \cite{ramosaj2019consistent}, where $\hat{\epsilon}_ i = Y_i - m_n^{OOB}(\mathbf{X}_i)$ and $\bar{\hat{\epsilon}}_n$ is its corresponding mean.

\section{Simulation Study}

In order to provide practical evidence for the theoretical results of the previous section, we simulated artificial data and computed the empirical variable importance measure based on Out-of-Bag estimates for every variable. In doing so, several regression functions have been considered that are in line with the assumptions of the previous section. We first consider $p = 10$ covariates whose influence on $Y$ is described by means of a regression coefficient vector  $\boldsymbol{\beta}_0 = [2, 4, 2, -3, 1, 0, 0, 0 ,0, 0]^\top$. The data is then generated under the following frameworks:
\begin{enumerate}
	\item  \label{LinModel} For the simplest case, we assume a linear model, i.e. $m(\mathbf{x}_i) = \mathbf{x}_i^\top \boldsymbol{\beta}_0$, for $i = 1, \dots, n$. 
	\item Here, we assume a polynomial relationship, that is, $m(\mathbf{x}_i) = \sum\limits_{ j = 1}^p \beta_{0,j} x_{i,j}^j$ for $i = 1, \dots, n$. \label{PolyModel}
	\item In order to capture recurrent effects, a trigonometric function is assumed, i.e. $m(\mathbf{x}_i ) = 2\cdot\sin( \mathbf{x}_i^\top \boldsymbol{\beta}_0 + 2  )$ for $i = 1, \dots, n$. \label{SinModel}
	\item Finally, the effect of non-continuous functions is considered, that is 
	\begin{align*}
		m(\mathbf{x}_i) &= 	\begin{cases}
		\beta_{0,1}x_{i,1} + \beta_{0,2}x_{i, 2} + \beta_{0,3} x_{i, 3}, &\text{ if } x_{i,3} > 0.5 \\
		\beta_{0,4}x_{i, 4} + \beta_{0,5}x_{i, 5} + 3 &\text{ if } x_{i,3} \le 0.5
		\end{cases}
	\end{align*}
	for $i = 1, \dots, n$. \label{NonContModel}
\end{enumerate}
We used $MC = 1,000$ Monte-Carlo iterations to approximate the expectation of $\boldsymbol{I}_{n, M}^{OOB}$. That is, for every $m_c \in \{1, \dots, MC\}$, we generated $ \mathcal{D}_n^{m_c} =   \{ [\mathbf{X}_i^{m_c\top}, Y_i^{m_c}]^\top : i  = 1, \dots, n  \}$, where $\mathbf{X}_i^{m_c} \sim Unif([0,1]^p) $ and $Y_i = m(\mathbf{X}_i^{mc}) + \epsilon_i$ for every $i = 1, \dots, n$ and $m_c = 1, \dots, MC$. On every generated data set $\mathcal{D}_n^{mc}$, the empirical permutation importance based on Out-of-Bag samples $I_{n, M; m_c}^{OOB}(j)$, $j \in \{1, \dots, p\}$ is then computed. By the strong law of large number, we can guarantee almost surely that 
 \begin{align}
 \bar{I}_{n , M;\cdot}^{OOB}(j) := \frac{1}{MC} \sum\limits_{m_c = 1}^{MC} I_{n,M; m_c}^{OOB}(j) \longrightarrow \mathbb{E}[ I_{n, M}^{OBB}(j) ], \label{MCcomputation}
 \end{align}  
 as $MC \rightarrow \infty$,  which should give some practical insights into Theorem \ref{UnbiasednessPermutationImportance}. Different sample sizes of the form $n \in \{50,100,500,1000\}$ should also reflect the behavior of the permutation importance as prescribed in Theorem \ref{UnbiasednessPermutationImportance}. Throughout our simulations, we used $M = 1,000$ decision trees in the Random Forest model and trained it using sampling without replacement of $a_n = \lceil 2/3n \rceil < n$ data points. \\
 Regarding the noise $\epsilon$, a centered Gaussian distribution with homoscedastic variance $\sigma^2$ is assumed. As explained at the end of Section \ref{PermImpRF}, the discriminative power of the permutation importance measure clearly depends on the signal-to-noise ratio. In order to explore this effect, a signal-to-noise ratio of $SN \in \{0.5, 1,3, 5\}$ is considered. That is, the residual variance $\sigma^2$ is determined by setting $\sigma^2 = Var(m(\mathbf{X}_1)) \cdot SN^{-1}$.\\
We additionally generated data under high-dimensional settings, i.e. for $\boldsymbol{\beta}_{1} = [2,4,2,-3,1, \boldsymbol{0}^\top]^\top \in \R^{n+ 5}$ and $n \in \{50,100,500,1000\}$, we generated $\mathcal{D}_n^{m_c}$ and computed the permutation importance for every Monte-Carlo set $\mathcal{D}_n^{m_c}$. This leads to regression problems of the type $p > n$, for which Theorem \ref{UnbiasednessPermutationImportance} - unless not any of the given assumptions are violated - should also be valid.

 \subsection{Simulation Results} 

In this section, we present the simulation result for all four models $\ref{LinModel}. - \ref{NonContModel}.$ with $p = 10$ and a  sample size of $n  \in \{ 50, 1000 \} $. The results for the other sample sizes are moved to the supplement. Note that the solid black lines in the boxplots represented in Figure \ref{LinearPlot} to \ref{PlotNonCont}, refer not to the median, but to the empirical mean $ \bar{I}_{n, M;  \cdot}^{OOB}(j)$ as computed in  $(\ref{MCcomputation})$. The blue star point \textcolor{blue}{$\star$} in the plots refer to the expected value of the permutation importance based on Out-of-Bag samples. For additive models such as the linear and polynomial model, a direct computation of $\boldsymbol{I}$ could be obtained using equation $(\ref{TheoreticalQuantity})$. For non-additive link-functions, such as in the trigonometric or non-continuous case, the results given in Proposition \ref{Proposition} are used and approximated with additional $1,000$ Monte-Carlo iterations.  \\
Figure \ref{LinearPlot} gives boxplots of the permutation importance of all ten variables over all Monte-Carlo iterations for the \textbf{linear model}. It is apparent that in case of small sample sizes (left panel), the permutation importance had difficulties in clearly distinguishing informative and non-informative variables. This is in line with the asymptotic results obtained in Theorem  \ref{UnbiasednessPermutationImportance}.  The simulation results reveal that this depends on the signal-to-noise ratio and the scale of the regression coefficient, as discussed in Section \ref{PermImpRF}. For a signal-to-noise ratio less than $1$, a clear distinction was rather hard. Under the same sample size, with a signal-to-noise ratio larger than $1$, the permutation importance could distinguish informative and non-informative variables clearer. Smaller regression coefficients being close to $0$ such as $\beta_{0,5}$ resulted into lower permutation importance values. This is in line with equation $(\ref{TheoreticalQuantity})$, which results into $I(5) = \beta_{0,5}^2/6 = 1/6 \le \min\limits_{j \in \{1, \dots, 5\}} I(j)$. For larger sample sizes (right panel), the distinction power of the permutation importance is stronger making the dependence towards the signal-to-noise ratio weaker, as shown in Section \ref{PermImpRF}, considering the asymptotic of $I_{n, M}^{OOB}(j)$, $j \in \mathcal{S}$. \\
Regarding the \textbf{polynomial model}, the distinction power of the permutation importance increased, which can be extracted from Figure \ref{PlotPoly}. Under this setting, a sufficiently large signal-to-noise ratio could lead to a stronger distinction even for small sample sizes like $n = 50$ (left panel). Larger sample sizes emphasized the distinction making the selection clearer  and more independent towards the signal-to-noise ratio as shown in Section \ref{PermImpRF} by considering the cut criterion used in the Random Forest.  In addition, the empirical mean of the simulated result approached its theoretical, asymptotic counterpart $\boldsymbol{I}$ as proven in Theorem \ref{UnbiasednessPermutationImportance}. 

\begin{figure}[h!]
	\centering
	\mbox{\subfigure[$n = 50$]{\includegraphics[width=3in]{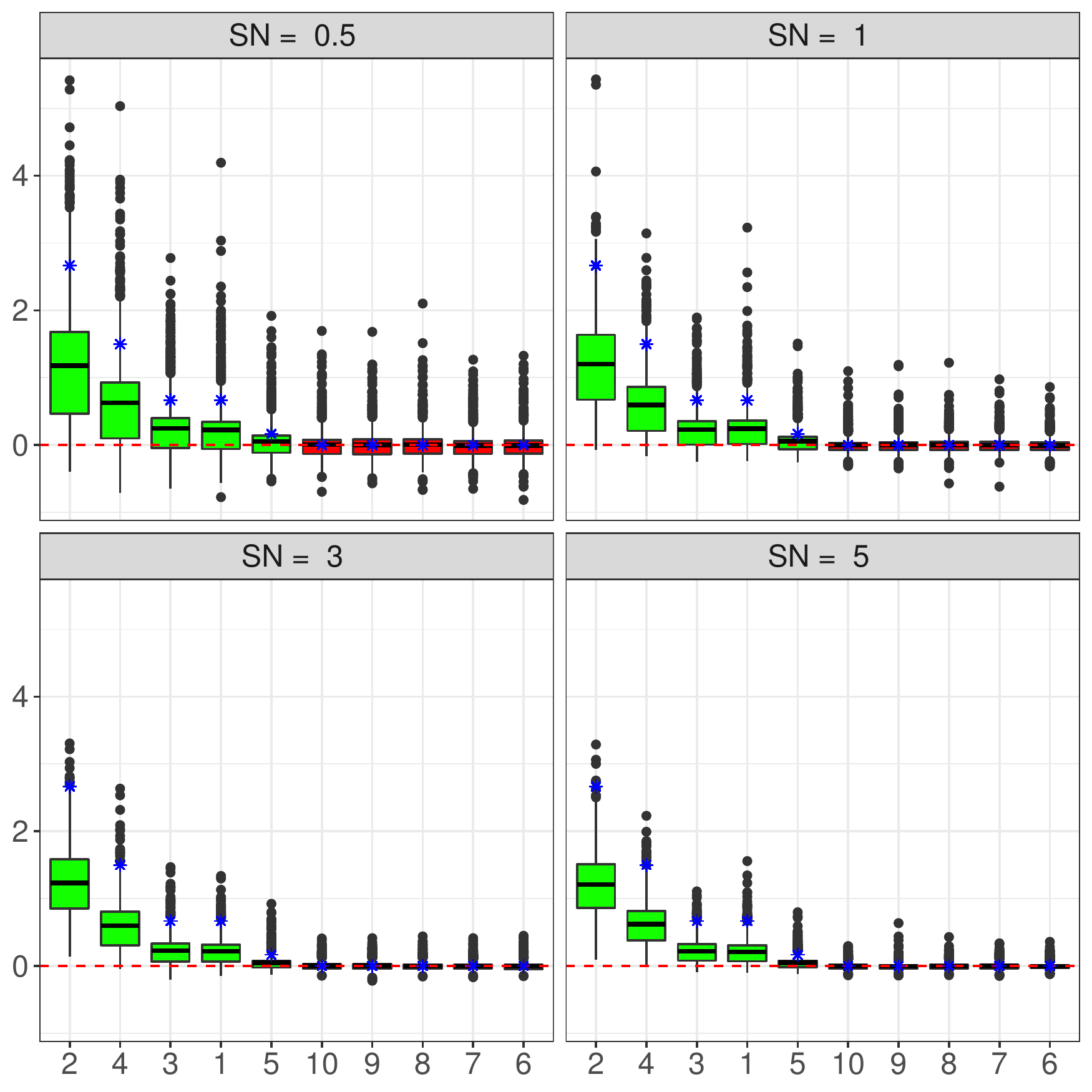}}
		\quad
		\subfigure[$n = 1,000$]{\includegraphics[width=3in]{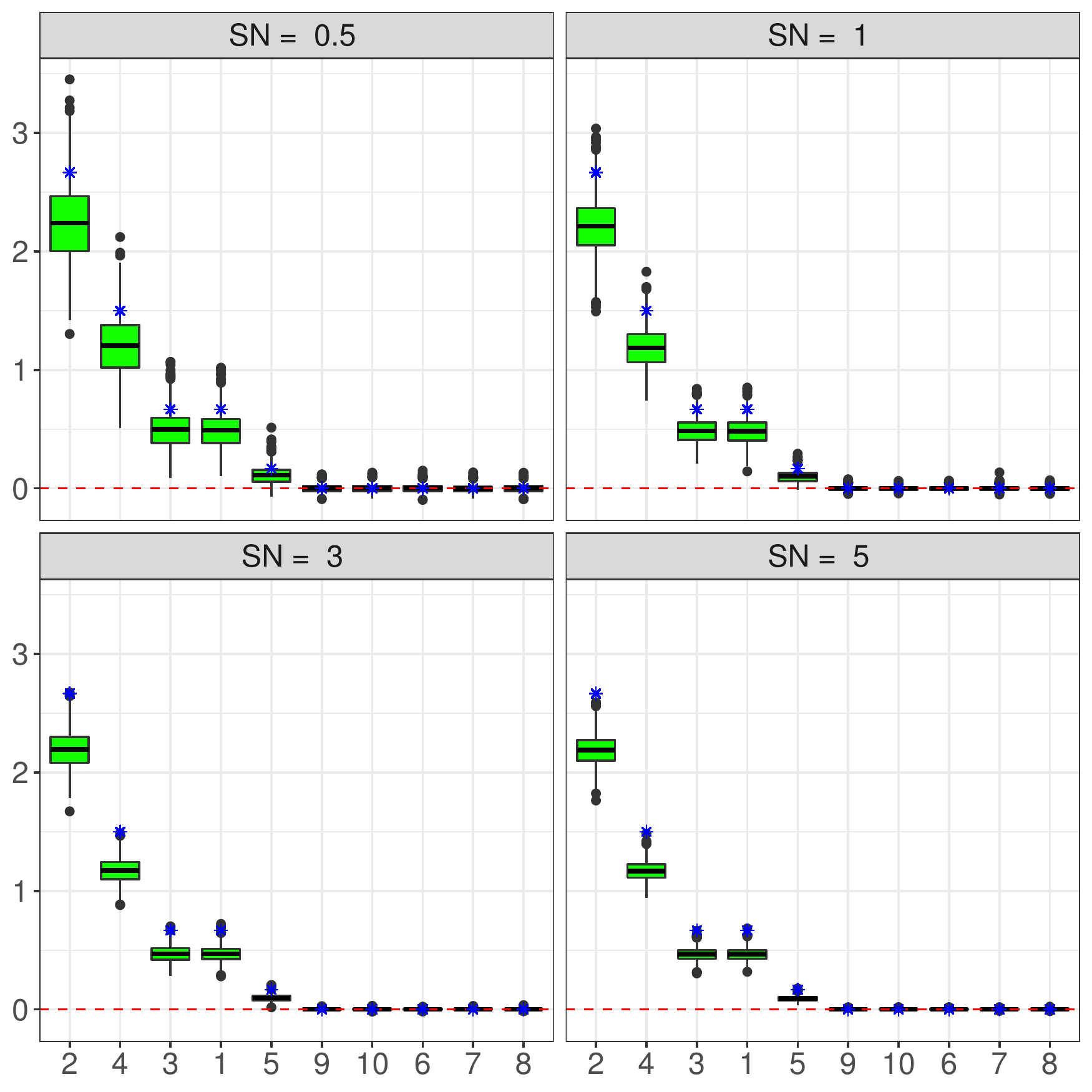} }}
	\caption{Permutation importance with various signal-to-noise ratios under a \textbf{linear model} as described in $\ref{LinModel}. $ using $MC = 1,000$ Monte-Carlo iterations with a sample size of (a) $ n  = 50$ and (b) $n = 1,000$. The solid line refers to the empirical mean $\bar{I}_{n, M; \cdot }^{OOB}$ and \textcolor{blue}{$\star$} to  its expectation. }
	\label{LinearPlot}
\end{figure}

\begin{figure*}[h]
	\centering
	\mbox{\subfigure[$n = 50$]{\includegraphics[width=3in]{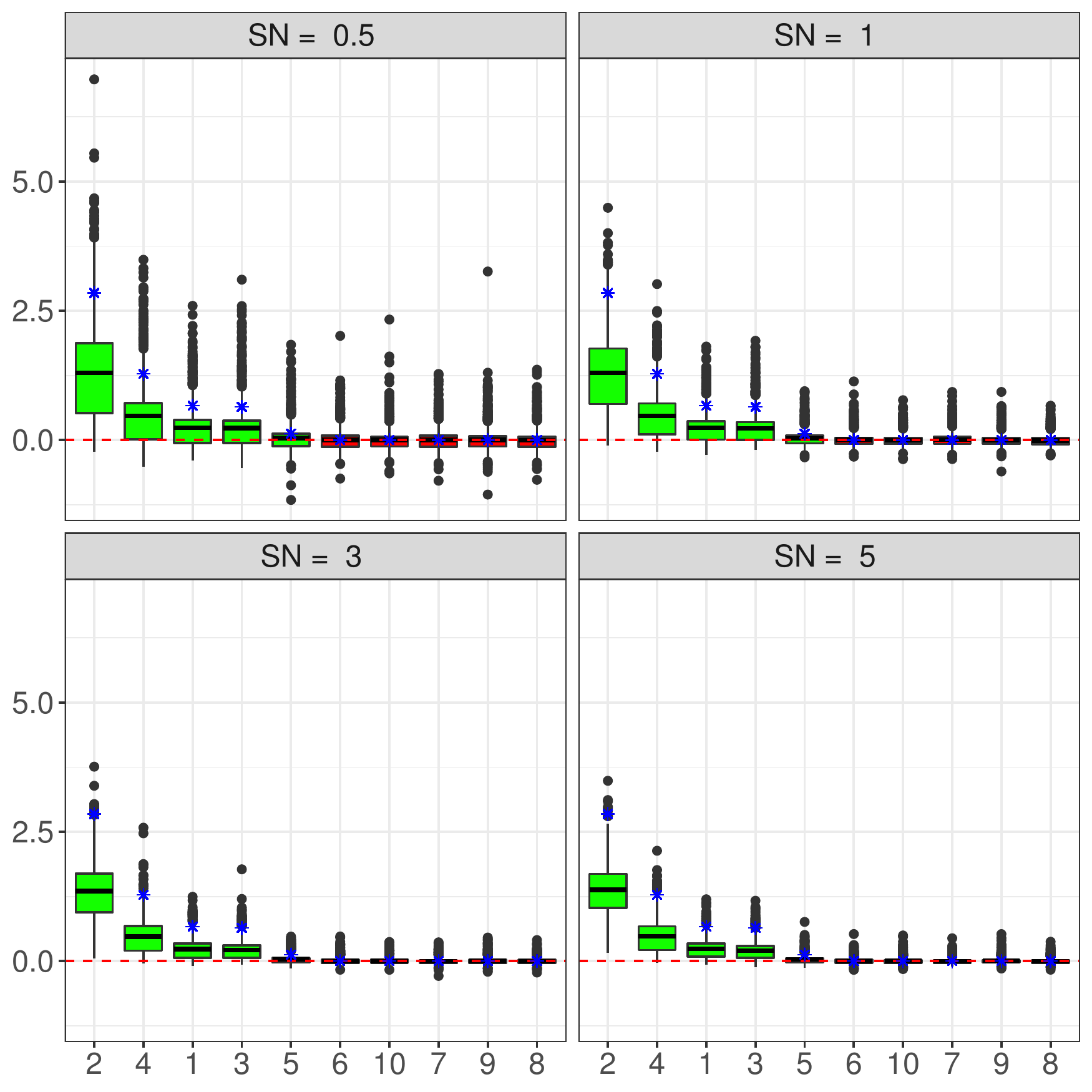}}
		\quad
		\subfigure[$n = 1,000$]{\includegraphics[width=3in]{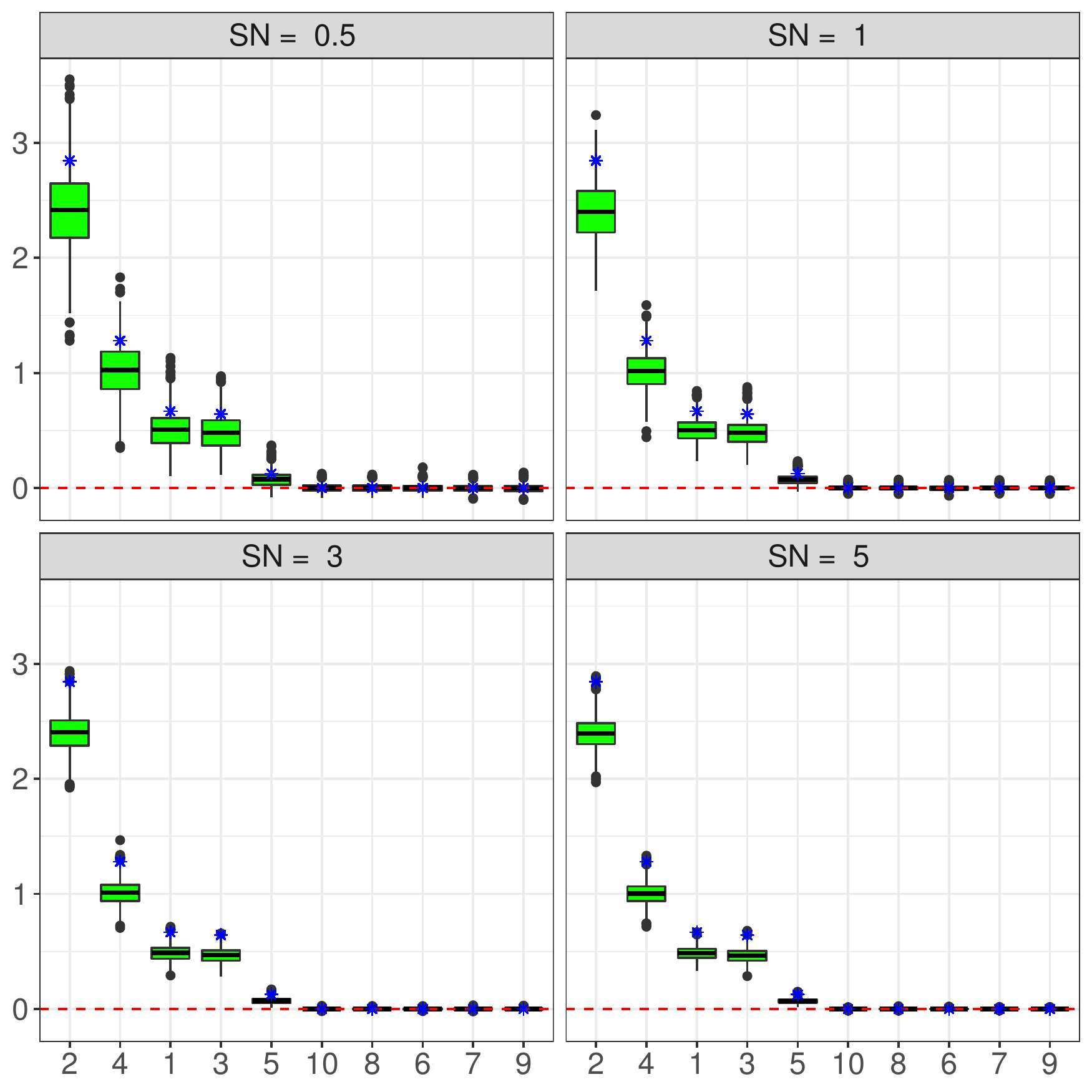} }}
	\caption{Permutation importance with various signal-to-noise ratios under a \textbf{polynomial model} as described in $\ref{PolyModel}. $ using $MC = 1,000$ Monte-Carlo iterations with a sample size of (a) $ n  = 50$ and (b) $n = 1,000$. The solid line refers to the empirical mean $\bar{I}_{n, M; \cdot }^{OOB}$ and \textcolor{blue}{$\star$} to its expectation.}
	\label{PlotPoly}
\end{figure*}
\FloatBarrier

\begin{figure*}[h!]
	\centering
	\mbox{\subfigure[$n = 50$]{\includegraphics[width=3in]{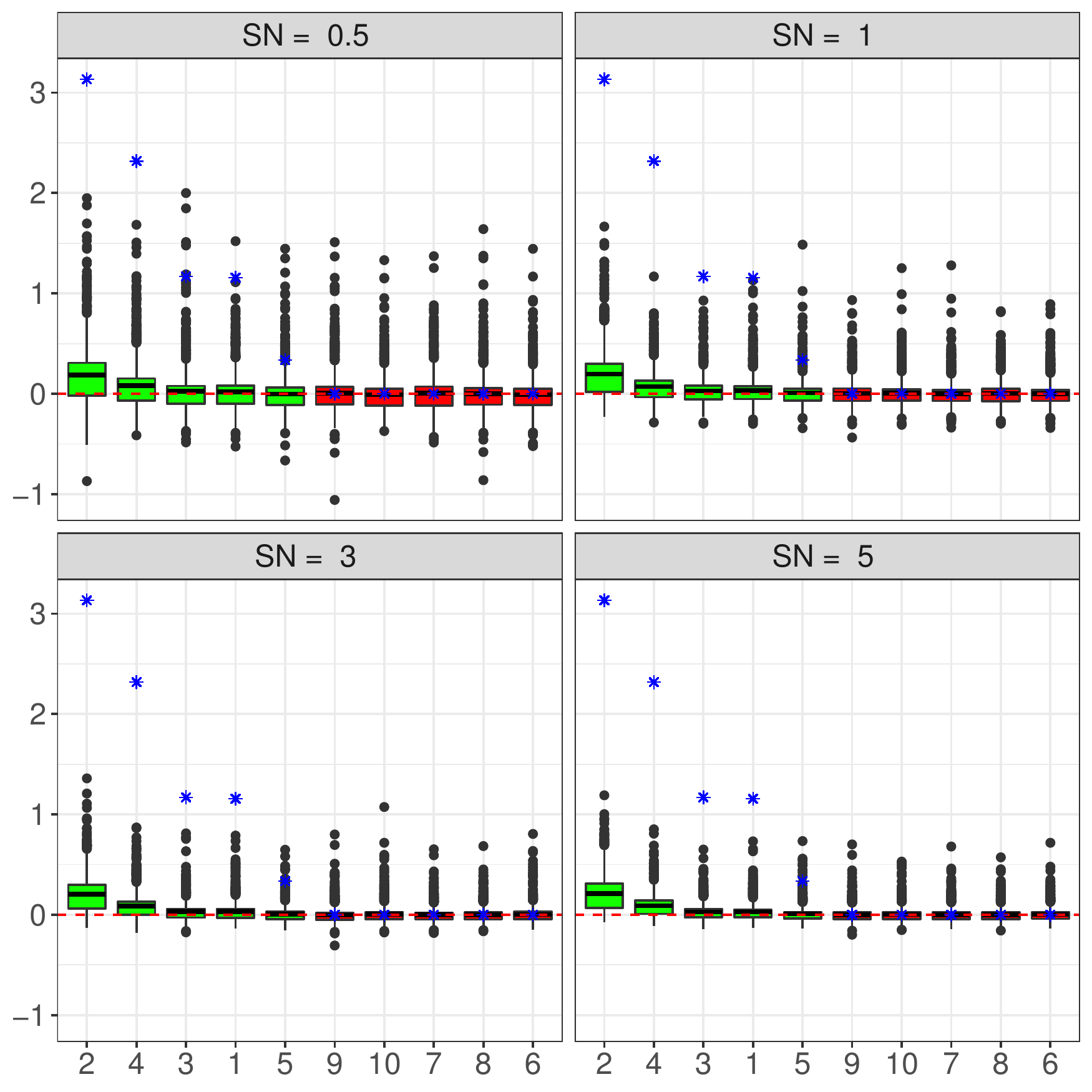}}
		\quad
		\subfigure[$n = 1,000$]{\includegraphics[width=3in]{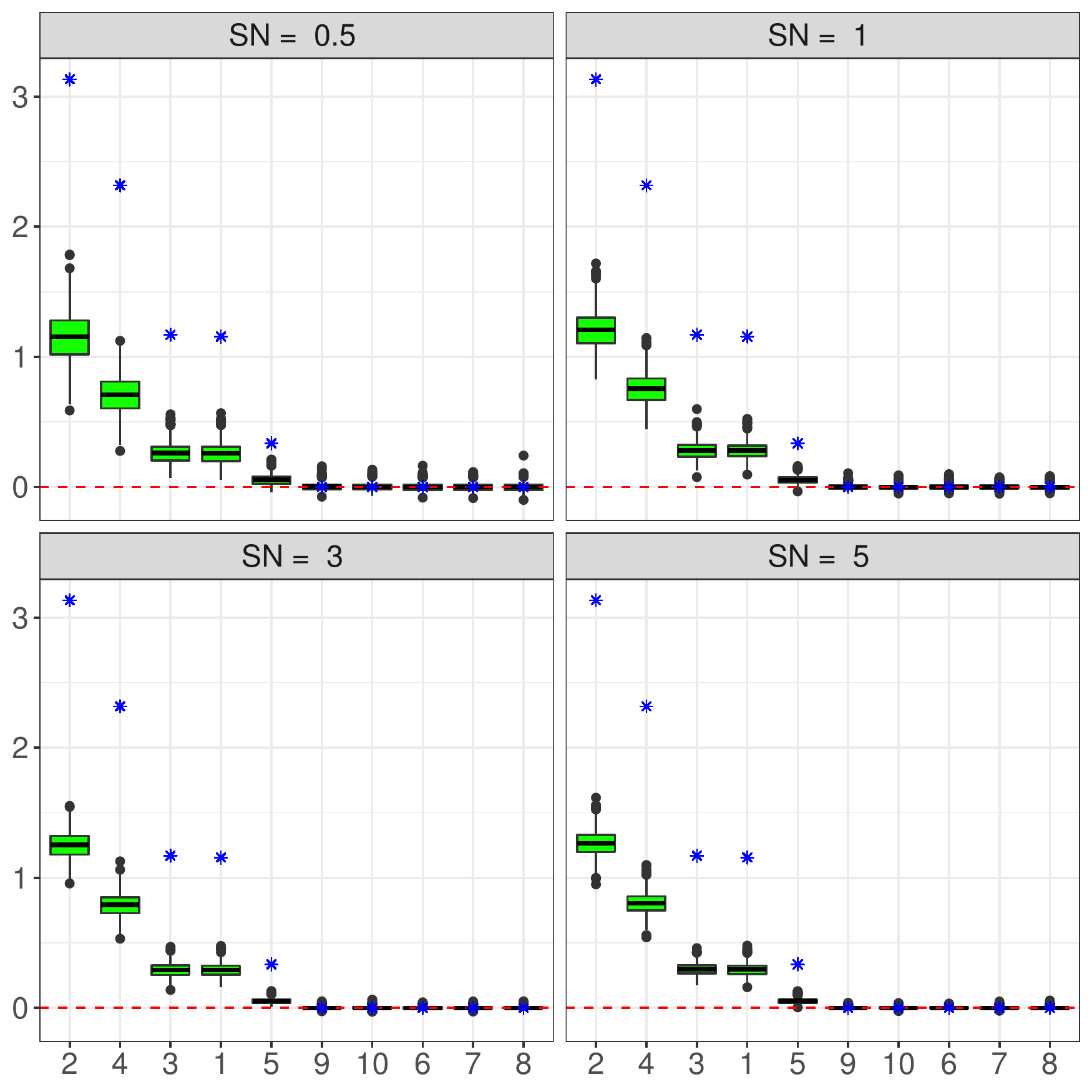} }}
	\caption{Permutation importance with various signal-to-noise ratios under a \textbf{trigonometric model} as described in $\ref{SinModel}. $ using $MC = 1,000$ Monte-Carlo iterations with a sample size of (a) $ n  = 50$ and (b) $n = 1,000$. The solid line refers to the empirical mean $\bar{I}_{n, M; \cdot }^{OOB}$ and \textcolor{blue}{$\star$} to  a Monte-Carlo approximation of its expectation. }
	\label{PlotSin}
\end{figure*}

\begin{figure*}[h!]
	\centering
	\mbox{\subfigure[$n = 50$]{\includegraphics[width=3in]{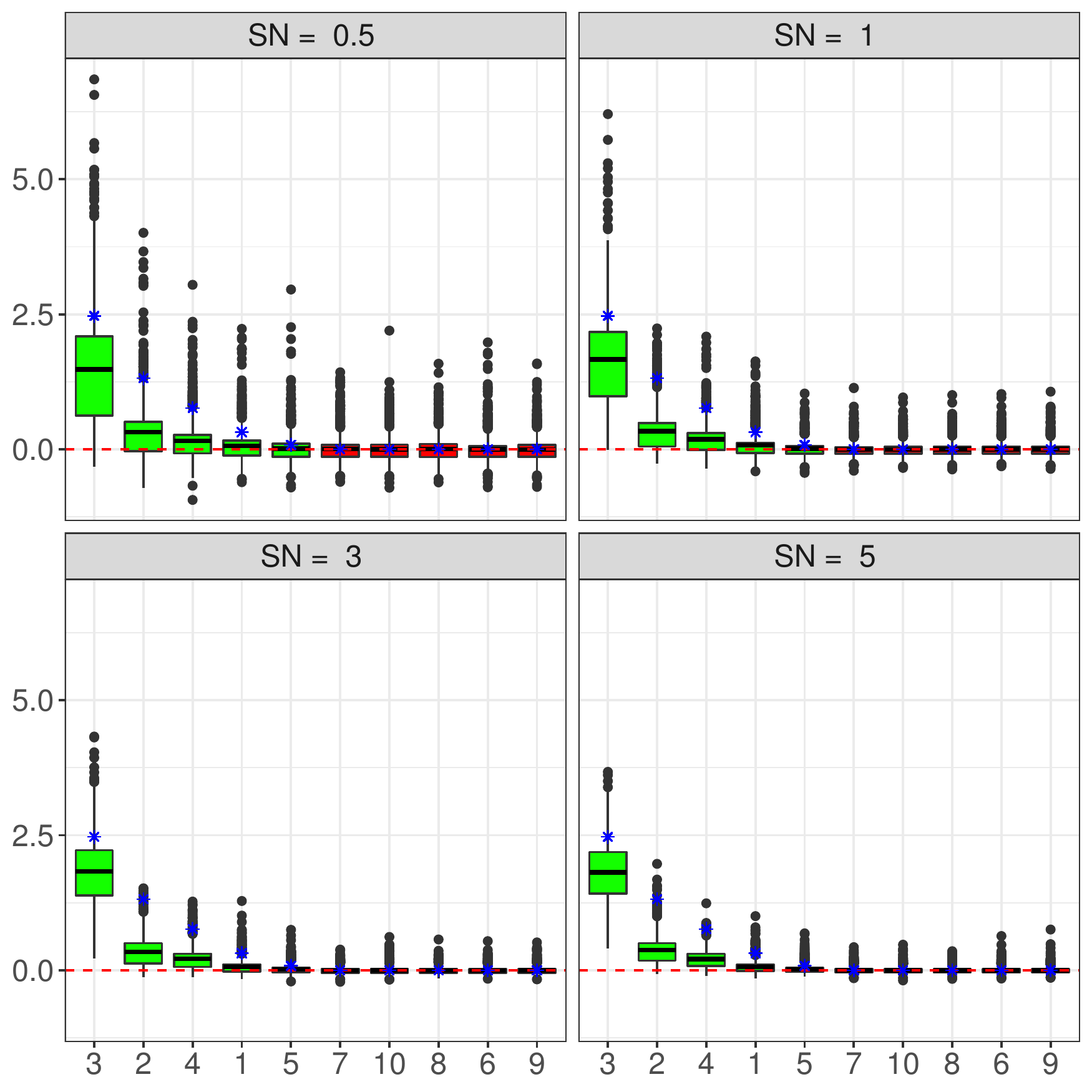}}
		\quad
		\subfigure[$n = 1,000$]{\includegraphics[width=3in]{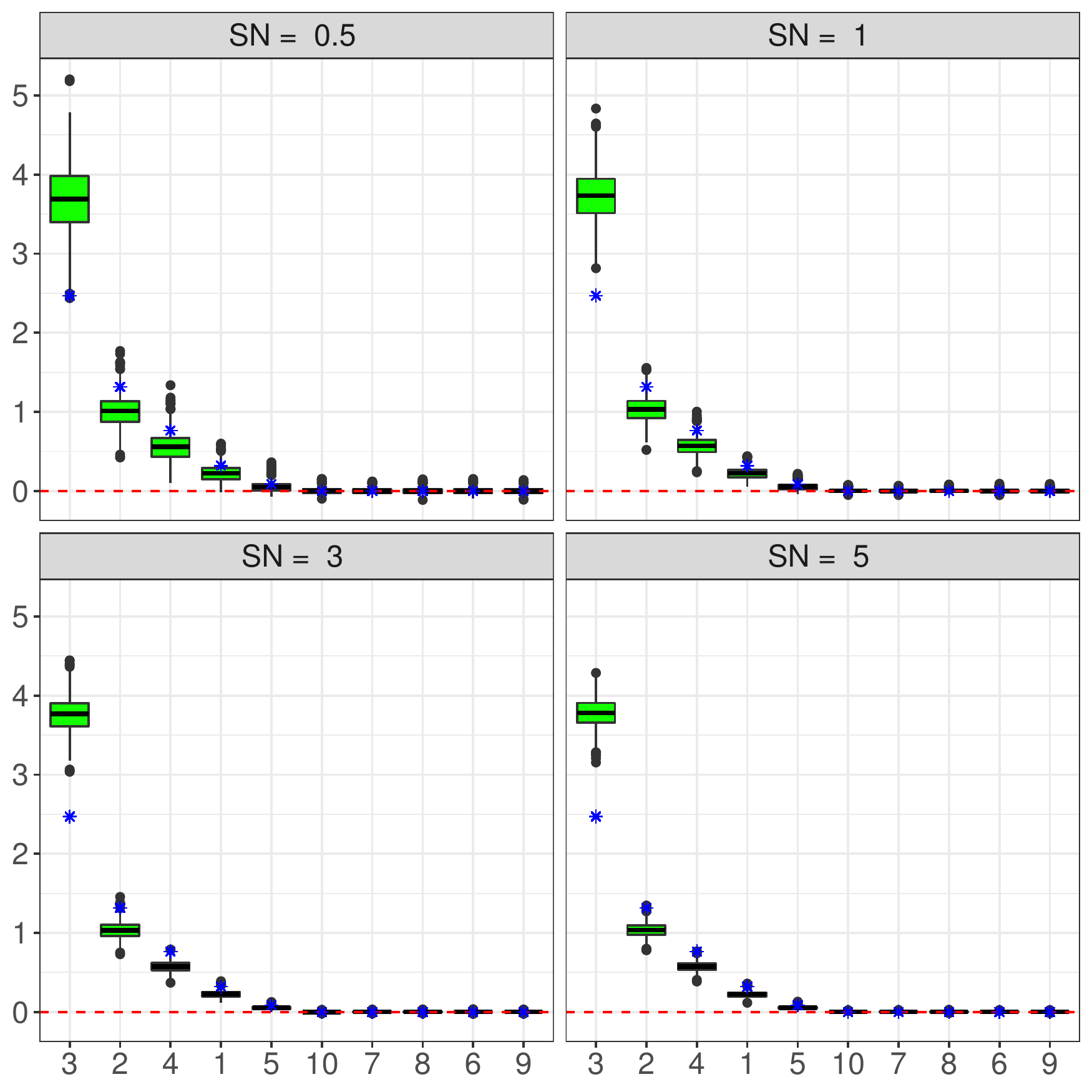} }}
	\caption{Simulation results for the permutation importance with various signal-to-noise ratios under a \textbf{non-continuous model} as described in $\ref{NonContModel}. $ using $MC = 1,000$ Monte-Carlo iterations with a sample size of (a) $ n  = 50$ and (b) $n = 1,000$. The solid line refers to the empirical mean $\bar{I}_{n, M; \cdot }^{OOB}$ and \textcolor{blue}{$\star$} to  a Monte-Carlo approximation of its expectation. }
	\label{PlotNonCont}
\end{figure*}
\FloatBarrier
For the polynomial model, we can also make use of equation $(\ref{TheoreticalQuantity})$, which will lead us to $I(j) = 2 \beta_j^2 \left( \frac{1}{2j+1}- \frac{1}{(j+1)^2} \right) $ for $j \in \mathcal{S}$. This justifies the relatively small values of $I_{n, M}^{OOB}(5)$, which should lie around $25/198 \approx0.13$.  \\
Regarding the \textbf{trigonometric} link function, the permutation importance measure lost in separating force when the sample size was relatively small. Here, a larger signal-to-noise ration was helpful, but for weak signals such as $\beta_5$, a clear distinction was rather hard. The results turned quickly into the right direction, when the sample size increased (right panel), as illustrated in Figure \ref{PlotSin}. In the latter scenario, the permutation importance was able to distinguish between elements in $\mathcal{S}$ and $\{1, \dots, p\}  \setminus \mathcal{S}$ while the empirical mean approached its theoretical counterpart $\boldsymbol{I}$. This was rather independent of the signal-to-noise ratio, as discussed in Section \ref{PermImpRF}. Note that under this model, equation $(\ref{TheoreticalQuantity})$ cannot be applied. However, it seems that a stronger or weaker signal resulted into lower or higher permutation importance.  \\
Moving to the \textbf{non-continuous} case with linear sub-functions, a stronger distinction power could be obtained compared to the linear link function. This, although equation $\ref{TheoreticalQuantity}$ is not applicable. A detailed result of the permutation importance measure under this setting can be extracted from Figure $\ref{PlotNonCont}$. There, the boxplot indicated a strong discriminative power towards non-informative variables for larger data sets, independent of the signal-to-noise ratio. The empirical mean of the simulated importance measures approached its theoretical counterpart $\boldsymbol{I}$ for an increased sample size. In addition, more importance is put on variable $3$ compared to the other frameworks. This arises from the usage of the third variable for both, the localization of the discontinuity point and its contribution to the response through the linear sub-function. However, this effect should vanish asymptotically according to Theorem \ref{UnbiasednessPermutationImportance}, as long as the assumptions are met. \\
Under all settings, it is worth to notice that the permutation importance resulted into larger variability, if the variables were informative. For non-informative variables, the Random Forest was \textit{sure} which variables were non-informative, especially when sample size increased. In fact, under all simulation settings, the RFPIM attained values very close to zero. This supports the findings in Theorem \ref{UnbiasednessPermutationImportance} for unimportant features, as the permutation importance is exactly unbiased in this case. \\

The boxplots of the permutation importance for the \textbf{high-dimensional} settings are summarized in Figures $5$ - $8$ given in the supplement. Under this framework, the linear model (see Figure $5$ in the supplement) lost in distinction power compared to $p < n$ problems, especially when the sample size was relatively small. Although $p >n$, an increase in $n$  led to an increase in separation force between variables in $\mathcal{S}$ and $\{1, \dots, p \}$ making the results clearer for $n \ge 500$. The empirical mean of the permutation importance moved closer to its theoretical counterpart $I(j)$, $j = 1, \dots, 5$. For $j \in \{6,\dots, 10\}$, they were almost exactly to zero as proven in Theorem \ref{UnbiasednessPermutationImportance}. There is also an increase in variation under the high-dimensional setting. Regarding the polynomial model (see Figure $6$ in the supplement), similar results could be obtained compared to the $p < n $ regression problem. However, the permutation importance was slightly downsized for all variables, but the distinction force was similar. Under the trigonometric function with $p >n$ (see Figure $7$ in the supplement), the permutation importance lost in separation force when the sample size was small. Evaluable results could be obtained for $n = 1,000$, but the permutation measure  was again downsized for all variables compared to its analogon under $p < n$. The simulation reveals that the convergence of the expectation is slower compared to its $p < n$ analogon. The non-continuous case (see Figure $8$ in the supplement) led to similar results than under the scenario of $p$  being less than $n$, with the exception that the permutation importance was again slightly downsized for all variables again. \\
\textbf{Final Thoughts.} Under both settings, i.e. $p < n $ and $p >n$, the permutation importance measure ranked the variables correctly according to the results given in equation $(\ref{TheoreticalQuantity})$ for the linear and polynomial model. The ranking remained the same for the trigonometric case, but was slightly changed when the sample size was rather small in high-dimensional settings. The ranking of the variables changed under the non-continuous model, where additional importance was set to variable $3$ for playing the role of a discontinuity point and its systematic influence on $Y$ through the sub-function. However, according to our findings, this effect should vanish asymptotically.

\section{Conclusion}
We proved the (asymptotic) unbiasedness of the permutation importance measure originating from the Random Forest for regression models. Our results are mainly based on assuming that features are independent, and hence uncorrelated while requiring that the Random Forest is $L_2$-consistent. Furthermore, we identified main drivers for the quality of the variable selection process such as the signal-to-noise ratio by explicitly considering the cut criterion of the Random Forest model. An extensive simulation study has been conducted for low- $(p < n)$ and high-dimensional $(p > n)$ regression frameworks. The results support our theoretical findings: even under high-dimensional settings, the permutation importance was able to correctly select among informative features, when the sample size was sufficiently large. Our findings also indicate that potential future research is worth to be conducted on $(i)$ the consistency of the involved cut-criterion and $(ii)$ the (asymptotic) distribution of the Random Forest permutation importance as a preliminary step towards the construction of valid statistical testing procedures for feature selection.

\begin{center}
	\section*{Acknowledgement}
\end{center}

We are very thankful to G\'{e}rard Biau and Erward Scornet for fruitful disucssions on Random Forest related issues during a scholary visit at the Sorbonne Universit\'{e} and the \'{E}cole Polytechnique. \\

\section{Appendix.}

In this section we state the proofs of Propositions \ref{HelpingProposition} and \ref{Proposition} and Theorem \ref{UnbiasednessPermutationImportance}. Additional proofs mentioned in the article are shifted at the end of this section.

\begin{proof}[Proof of Proposition \ref{HelpingProposition}]
	Let $i \in \{1, \dots, n \}$ be fixed and $\mathbf{X}_i \in \mathcal{D}_n$. Let $\{\boldsymbol{\Theta}_t\}_{t = 1}^M$ be the sequence of iid generic random vectors on the probability space $(\Omega_\Theta, \mathcal{F}_\Theta, \mathbb{P}_\Theta)$ being responsible for the sampling procedure and the feature sub-spacing  in the Random Forest algorithm. Note that the generic random vector can then be decomposed into $\boldsymbol{\Theta}_t = [ \boldsymbol{\Theta}_t^{(1)}, \boldsymbol{\Theta}_t^{(2)}]^\top$, where $\boldsymbol{\Theta}_t^{(1)} \in \{0,1\}^n$ indicates whether a certain observation has been selected in tree $t$ and $\boldsymbol{\Theta}_t^{(2)}$ models feature sub-spacing. Furthermore, denote with $Z_i = Z_i(M)$ the number of the $M$ regression trees not containing the $i$-th observation. Then we can conclude that 
	
	\begin{align*}
		Z_i(M) \sim Bin(M, c_n), \quad \text{ where } \quad c_n  = \begin{cases}
		 1- a_n/n  &\text{ for subsampling,} \\
		 (1 - 1/n)^n & \text{ for bootstrapping with replacement,}
		\end{cases}
	\end{align*}
	with $c_n > 0$. Since $Z_i(M) = \sum\limits_{ \ell = 1}^M B_\ell$, with $B_\ell \sim Bernoulli(c_n)$ independent and identically distributed under $\mathbb{P}_{\Theta}$, it follows by the strong law of large numbers that $ V_ {n, M} := Z_i(M)/M \stackrel{a.s.}{\longrightarrow} \mathbb{E}[ B_1 ] = c_n$, as $M \rightarrow \infty$. This implies that $Z_i(M) \stackrel{a.s.}{\longrightarrow} \infty$, as $M \rightarrow \infty$. Assuming without loss of generality that the first $Z_i(M)$ decision trees do not contain the $i$-th observation, this will yield to 
	\begin{align}
		R_{n, M} := \frac{1}{Z_i(M)} \sum\limits_{ t  = 1}^{Z_i(M)} m_{n, 1}(\mathbf{X}_i; \boldsymbol{\Theta}_t,  \mathcal{D}_n)  \longrightarrow m_n^{OOB}(\mathbf{X}_i) \quad \mathbb{P}_\Theta - a.s. \text{ as } M \rightarrow \infty,
	\end{align}
	where $m_n^{OOB}(\mathbf{X}_i) = \mathbb{E}_{ \boldsymbol{\Theta}_{[i]} }[ m_{n, 1}(\mathbf{X}_i; \boldsymbol{\Theta}_{[i]}, \mathcal{D}_n) ]$ with $\boldsymbol{\Theta}_{[i]} = [\boldsymbol{\Theta}^{(1)}, \Theta^{(2)}]$, such that $\Theta_{i}^{(1)} = 0$. 
	Now, let $\mathbf{K}_{n, M} = [V_{n, M}, R_{n, M}]^\top \in \R^2$ and set $N = N_1 \cup N_2$, where $N_1 = \{ \omega \in \Omega_\Theta: V_{n, M}(\omega) \nrightarrow c_n \}$ and $N_2 = \{ \omega \in\Omega_\Theta : R_{n, M}(\omega) \nrightarrow m_n^{OOB}(\mathbf{X}_i) \}$. Since $\mathbb{P}_\Theta(N_1) = \mathbb{P}_\Theta(N_2) = 0$, it follows immediately that $0 \le \mathbb{P}_\Theta(N)  = \mathbb{P}_\Theta(N_1) + \mathbb{P}_\Theta(N_2) = 0$, i.e. $N$ is a null-set. Hence, 
	\begin{align}\label{ConvInMsense}
		\mathbf{K}_{n, M} \longrightarrow  [c_n, m_n^{OOB}(\mathbf{X}_i)]^\top,  \quad  \mathbb{P}_\Theta - \text{ almost-surely  as } M \rightarrow \infty. 
	\end{align}

	  Since $\{ \boldsymbol{\Theta}_t \}_{t = 1}^M$ is a sequence of iid random variables, we can again assume without loss of generality, that the first $ Z_i(M)$ do not contain the $i$-th observation. Therefore, we can conclude that 
	\begin{align}\label{InfForestModification}
		 \frac{1}{M} \sum\limits_{ t= 1}^M m_{n , 1}(\mathbf{X}_i; \boldsymbol{\Theta}_t ) \mathds{1}\{ \mathbf{X}_i \text{ has not been selected} \} &= \frac{Z_i(M)}{M} \frac{1}{Z_i(M)} \sum\limits_{ t  = 1}^{Z_i(M)} m_{n, 1}(\mathbf{X}_i; \boldsymbol{\Theta}_t) \notag  \\
		&\longrightarrow c_n \cdot m_{n}^{OOB}(\mathbf{X}_i),
	\end{align}
	$\mathbb{P}_\Theta -  \text{ almost-surely}$ as $M \rightarrow \infty$. The convergence follows by applying the continuous mapping theorem on the function $g(x,y) = x \cdot y$ using $\mathbf{K}_{n, M}$ and $(\ref{ConvInMsense})$. 
\end{proof}

\begin{proof}[Proof of Proposition \ref{Proposition}]
	Let $\mathbf{X} = [X_1, \dots, X_p]^\top \in \R^p$ be an independent copy of $\mathbf{X}_1$ such that $Y = \widetilde{m}(\mathbf{X}) + \epsilon$ as in regression model $(\ref{RegModel})$. Furthermore, Let $j \in \{1, \dots, p\} \setminus \mathcal{S}$, i.e. $j$ is non-informative. According to our definition of being non-informative and the assumption that there are no dependencies among the features $\{X_j\}_{j = 1}^p$, this will lead us to $Y$ being indepdendent of $X_j$, while $X_j$ is also independent towards all other features $X_\ell$, $\ell \neq j  \in \{1, \dots, p\}$. Denoting with $\mathbf{X}_j = [X_1, \dots, X_{j-1},Z_j, X_{j+1} , \dots, X_p]^\top \in \R^p$, while $Z_j$ is an independent copy of $X_j$, independent of $X_\ell$ and $Y$ for all $\ell \neq j$, this will yield to $[\mathbf{X}_j^\top, Y]^\top \stackrel{d}{=} [\mathbf{X}^\top, Y]^\top$. Hence, we will obtain 
	\begin{align}
		I(j) &= \mathbb{E}[(Y -  \widetilde{m}(\mathbf{X}_j))^2] - \mathbb{E}[( Y - \widetilde{m}(\mathbf{X}))^2] \notag \\
		&= \mathbb{E}[ (Y - \widetilde{m}(\mathbf{X}))^2 ] - \mathbb{E}[ (Y - \widetilde{m}(\mathbf{X}))^2] \notag \\	
		&= 0. 
	\end{align} 
	
	On the other hand, if $j \in \mathcal{S}$, i.e. $j $ is informative, than we can deduce the following computations, where the third equation follows from the independence of $\mathbf{X}$ and $\epsilon$ together with $\mathbb{E}[\epsilon] = 0$. The second last equality follows from assumption $\ref{Ass1}$ leading to $\mathbf{X}_j \stackrel{d}{= } \mathbf{X}$. 
	\begin{align}
		I(j) &= \mathbb{E}[(Y - \widetilde{m}(\mathbf{X}_j))^2] - \mathbb{E}[(Y - \widetilde{m}(\mathbf{X}))^2] \notag \\
		&= \mathbb{E}[ ( Y - \widetilde{m}(\mathbf{X}) + \widetilde{m}(\mathbf{X}) - \widetilde{m}(\mathbf{X}_j) )^2 ] - \mathbb{E}[(Y - \widetilde{m}(\mathbf{X}))^2] \notag \\
		&= \mathbb{E}[ ( \widetilde{m}(\mathbf{X}) - \widetilde{m}(\mathbf{X}_j) )^2 ] + 2\mathbb{E}[ \epsilon ( \widetilde{m}(\mathbf{X}) - \widetilde{m}(\mathbf{X}_j)) ) ] \notag \\
		&= \mathbb{E}[ ( \widetilde{m}(\mathbf{X}) - \widetilde{m}(\mathbf{X}_j) )^2 ].
	\end{align}
\end{proof}

\begin{proof}[Proof of Theorem \ref{UnbiasednessPermutationImportance}]
		Let $j \in \{1, \dots, p\}$, $i \in \{1, \dots, n\}$ and $t \in \{1, \dots, M\}$ be fixed but arbitrary and assume that the Random Forest sampling mechanism is restricted to sampling $a_n \in \{1, \dots, n\}$ points without replacement such that $a_n < n$. Denote with $\mathcal{D}_n^{(t)}$ the collection of points selected for tree $t \in \ {1,\dots, M}$. Then we denote with $\mathcal{D}_n^{-(t)} = \mathcal{D}_n \setminus \mathcal{D}_n^{(t)}$ the subset of $\mathcal{D}_n$ in tree $t \in \{ 1, \dots, M \}$ with cardinality $\gamma_n$ for which its elements have not been selected during the sampling procedure. Note that the cardinality of $\mathcal{D}_n^{-(t)}$ remains fixed for all $t = 1, \dots, M$ and is given by $\gamma_n = n - a_n$, which is different to sampling with replacement.  In addition, we set $\mathcal{D}_{n, \mathbf{X}}^{-(t)} = \{ \mathbf{X}_i : [ \mathbf{X}_i^\top, Y_i ]^\top \in \mathcal{D}_n^{-(t)} \}$ to be the set of all features $\mathbf{X}$ that belong to $\mathcal{D}_n^{-(t)}$, i.e. that have been selected during resampling. Then we recall from $(\ref{ImportanceDefinition})$ that the permutation variable importance based on OOB estimates is given by 
		\begin{align}\label{PermutationImportance}
			I_{n, M}^{OOB}(j) &= \frac{1}{M \gamma_n} \sum\limits_{t = 1}^M  \sum\limits_{i \in \mathcal{D}_n^{-(t) } }\left\{  (Y_i  - m_{n, 1}(\mathbf{X}_i^{\pi_{j,t} };\boldsymbol{ \Theta}_t ) )^2   - (Y_i -  m_{n, 1}(\mathbf{X}_i; \boldsymbol{ \Theta}_t ) )^2  \right\} \notag \\
			&= \frac{1}{M \gamma_n} \sum\limits_{t = 1}^M  \sum\limits_{i  = 1}^n\left\{  (Y_i  - m_{n, 1}(\mathbf{X}_i^{\pi_{j,t} }; \boldsymbol{ \Theta}_t ) )^2   - (Y_i -  m_{n, 1}(\mathbf{X}_i; \boldsymbol{\Theta}_t ) )^2  \right\} \mathds{1}\{ \mathbf{X}_i \in \mathcal{D}_n^{-(t)}  \},
		\end{align}
		where $\pi_{j, t} $ is a \textit{real} permutation of the $j$-th covariable in $\mathcal{D}_{n, \mathbf{X}}^{-(t)}$, where we call a permutation as \textit{real}, if $\pi_{j, t} \in \{ \pi \in \mathcal{S}_{\gamma_n} : \pi(i) \neq i  \} =: \mathcal{V}$ and $\mathcal{S}_{\gamma_n}$ is the symmetric group.   Although we did not yet specify the dependence of $\mathcal{D}_n^{(t)}$ and $\mathcal{D}_n^{-(t)}$ towards the generic random vector $\boldsymbol{\Theta}_t$ in the Random Forest mechanism, it is worth to notice that in fact, $\mathcal{D}_n^{(t)} = \mathcal{D}_n^{(t)}(\boldsymbol{\Theta}_t)$ and $\mathcal{D}_n^{-(t)} =  \mathcal{D}_{n}^{-(t)} (\boldsymbol{\Theta}_t) $. \\
		Then, the following results can be obtained:
		\begin{align} \label{First}
			\mathbb{E}[ (Y_i - \widetilde{m}(\mathbf{X}_i))^2 \mathds{1}\{ \mathbf{X}_i \in \mathcal{D}_{n, \mathbf{X}}^{-(t)} \} ] &= \mathbb{E}[ \mathbb{E} [ (Y_i - \widetilde{m}(\mathbf{X}_i))^2 \mathds{1}\{ \mathbf{X}_i \in \mathcal{D}_{n, \mathbf{X}}^{-(t)} \} | \mathcal{D}_n ] ] \notag \\
			&= \mathbb{E}[ (Y_i - \widetilde{m}(\mathbf{X}_i))^2 \mathbb{P}[ \mathbf{X}_i \in \mathcal{D}_{n, \mathbf{X}}^{-(t)}(\Theta_t)  | \mathcal{D}_n]  ] \notag\\
			&=  \mathbb{E}[ (Y_i - \widetilde{m}(\mathbf{X}_i))^2 (1 -  \mathbb{P}[ \mathbf{X}_i \notin \mathcal{D}_{n, \mathbf{X}}^{-(t)}(\Theta_t) | \mathcal{D}_n ] )   ]  \notag \\
			&= \left( 1 - \frac{{n- 1 \choose a_n - 1}}{ { n \choose a_n } } \right) \mathbb{E}[ (Y_i  - \widetilde{m}(\mathbf{X}_i))^2 ]  \notag \\
			&=  \frac{n - a_n}{n} \mathbb{E}[ (Y_i  - \widetilde{m}(\mathbf{X}_i))^2 ]
		\end{align}
		The second equality follows from the measurability of $(Y_i - m(\mathbf{X}_i))$ and $ \mathbb{P}[ \mathbf{X}_i \notin \mathcal{D}_{n, \mathbf{X}}^{-(t)}(\Theta_t) | \mathcal{D}_n]$ is the probability of not selecting a fixed observation $i$ among $n$ elements, when resampling is conducted without replacement. \\
		
		Returning to the sequence of iid generic random vectors $\{ \boldsymbol{\Theta}_t \}_{t = 1}^M$, we recall that we can separate each generic random vector into $\boldsymbol{\Theta}_t  = [ \boldsymbol{\Theta}_t^{(1)}, \boldsymbol{\Theta}_t^{(2)}]$, where $\boldsymbol{\Theta}_t^{(1)}$ models the sampling mechanism prior to tree construction and $\boldsymbol{\Theta}_t^{(2)}$ is the random variable modeling feature sub-spacing during the tree construction. Note that in case of $m_{try} = p$, it follows that $\boldsymbol{\Theta}_t = \boldsymbol{\Theta}_t^{(1)}$.  Furthermore, $\boldsymbol{\Theta}_t^{(1)}$ can be decomposed into 
		
		\begin{align}\label{Decomposition}
			\boldsymbol{\Theta}_t^{(1)} &= [\Theta_{1,t}^{(1)}, \dots, \Theta_{n, t}^{(1)} ]^\top \in \{ 0,1\}^n,
		\end{align}
		where each entry $\Theta_{\ell, t}^{(1)}$, $1 \le \ell \le n$ is Bernoulli distributed indicating whether observation $\ell$ has been selected during the sampling procedure. For sampling without replacement the sequence $\{ \Theta_{\ell, t}^{(1)} \}_{\ell = 1}^n$ does not consist of independent random variables. However, it holds that $\sum\limits_{ \ell = 1}^n \Theta_{\ell, t} = a_n$ and that $\boldsymbol{\Theta}_t^{(1)}$ is independent of $(\mathbf{X}_i, Y_i, \boldsymbol{\Theta}_t^{(2)})$ for all $t = 1, \dots, M$ and all $i = 1, \dots, n$. Let $\Delta_n(\mathbf{X}_i, Y_i, \boldsymbol{\Theta}_t)  = \Delta_n(\mathbf{X}_i, Y_i, \boldsymbol{\Theta}_t^{(1)}, \boldsymbol{\Theta}_t^{(2)}) := (\widetilde{m}(\mathbf{X}_i) - m_{n, 1}(\mathbf{X}_i, \boldsymbol{\Theta}_t^{(1)}, \boldsymbol{\Theta}_t^{(2)}))^2$, declare $\mathbf{X}_i '$ as an independent copy of $\mathbf{X}_i$ independent of $m_{n, 1}$ and set $\mathcal{G} = \{ [v_1, \dots, v_n]^\top \in \{0,1\}^n : v_1 + \dots + v_n = a_n  \}$ and $\mathcal{G}_i := \{ \mathbf{v} \in \mathcal{G} : v_i = 0 \}$. Then we observe the following equality
		\begin{align}\label{ChangeIndependence}
			\mathbb{E}[ \Delta_n(\mathbf{X}_i, Y_i, \boldsymbol{\Theta}_t^{(1)},\boldsymbol{\Theta}_t^{(2)}) \mathds{1}\{ \Theta_{i, t}^{(1)} = 0 \} ] &= \sum\limits_{ \boldsymbol{\ell} \in \mathcal{G}} \mathbb{E}[ \Delta_n(\mathbf{X}_i, Y_i, \boldsymbol{\Theta}_t^{(1)}, \boldsymbol{\Theta}_t^{(2)}) \mathds{1}\{ \Theta_{i, t}^{(1)} = 0 \} | \boldsymbol{\Theta}_t^{(1)} = \boldsymbol{\ell} ] \cdot \mathbb{P}[\boldsymbol{\Theta}_t^{(1)} = \boldsymbol{\ell}] \notag \\
			&= \sum\limits_{ \boldsymbol{\ell} \in \mathcal{G}_i} \mathbb{E}[ \Delta_n(\mathbf{X}_i, Y_i, \boldsymbol{\Theta}_t^{(1)}, \boldsymbol{\Theta}_t^{(2)}) |  \boldsymbol{\Theta}_t^{(1)} = \boldsymbol{\ell} ]  \cdot \mathbb{P}[ \boldsymbol{\Theta}_t^{(1)} = \boldsymbol{\ell}] \notag \\
			&= \sum\limits_{ \boldsymbol{\ell} \in \mathcal{G}_i  } \mathbb{E}[ \Delta_n(\mathbf{X}_i', Y_i', \boldsymbol{\Theta}_t^{(1)}, \boldsymbol{\Theta}_t^{(2)}) | \boldsymbol{\Theta}_t^{(1)} = \boldsymbol{\ell} ] \cdot \mathbb{P}[ \boldsymbol{\Theta}_t^{(1)} = \boldsymbol{\ell} ]  \notag\\ 
			&= \mathbb{E}[ \Delta_n(\mathbf{X}_i', Y_i', \boldsymbol{\Theta}_t^{(1)}, \boldsymbol{\Theta}_t^{(2)}) \mathds{1}\{ \Theta_{i, t}^{(1)} = 0 \} ],			
		\end{align}
		where the second last equality follows from the independence of $\boldsymbol{\Theta}_{t}^{(1)}$ and $ (\mathbf{X}_i, Y_i, \boldsymbol{\Theta}_t^{(2)}) $ and $(\mathbf{X}_i, Y_i, \boldsymbol{\Theta}_t^{(1)}, \boldsymbol{\Theta}_t^{(2)}) \stackrel{d}{=} (\mathbf{X}_i', Y_i', \boldsymbol{\Theta}_t^{(1)}, \boldsymbol{\Theta}_t^{(2)} )$. Now, using $(\ref{ChangeIndependence})$, we obtain
		
		\begin{align}\label{Second}
			0 \le \mathbb{E}[ (\widetilde{m}(\mathbf{X}_i) - m_{n, 1}(\mathbf{X}_i; \boldsymbol{\Theta}_t) )^2 \mathds{1}\{ \mathbf{X}_i  \in \mathcal{D}_{n, \mathbf{X}}^{-(t)}  \} ] \notag &= \mathbb{E}[ \Delta_n(\mathbf{X}_i, Y_i,  \boldsymbol{\Theta}_t^{(1)}, \boldsymbol{\Theta}_t^{(2)} ) \mathds{1}\{ \mathbf{X}_i \in \mathcal{D}_{n, \mathbf{X}}^{-(t)} \} ] \notag \\ 
			&= \mathbb{E}[ \Delta_n( \mathbf{X}_i, Y_i, \boldsymbol{\Theta}_t^{(1)}, \boldsymbol{\Theta}_t^{(2)} ) \mathds{1}\{ \Theta_{i, t}^{(1)} = 0 \} ] \notag \\
			&= \mathbb{E}[ \Delta_n(\mathbf{X}_i', Y_i', \boldsymbol{\Theta}_t^{(1)}, \boldsymbol{\Theta}_t^{(2)}) \mathds{1}\{ \Theta_{i, t}^{(1)} = 0 \} ] \notag \\
			&= \mathbb{E}[   \Delta_n(\mathbf{X}_i', Y_i', \boldsymbol{\Theta}_t^{(1)}, \boldsymbol{\Theta}_t^{(2)}) \mathds{1}\{ \mathbf{X}_i \in \mathcal{D}_{n, \mathbf{X}}^{-(t)} \} ] \notag \\
			&= \mathbb{E}[ (\widetilde{m}(\mathbf{X}_i') - m_{n, 1}(\mathbf{X}_i'; \boldsymbol{\Theta}_t))^2 \mathds{1} \{ \mathbf{X}_i \in \mathcal{D}_{n, \mathbf{X}}^{-(t)} \} ] \notag \\
			&=: C_{n, i, t}.
		\end{align}
		
		Note that the random tree estimate $m_{n, 1}(\mathbf{X}_i'; \boldsymbol{\Theta}_t)$ can be rewritten into 
		\begin{align}\label{AlternativeRandomTree}
			m_{n, 1}(\mathbf{X}_i'; \boldsymbol{\Theta}_t) &= \sum\limits_{j = 1}^n W_{n, j}(\mathbf{X}_i'; \boldsymbol{\Theta}_t) Y_j, 
		\end{align}
		where $W_{n, j}(\mathbf{X}_i'; \boldsymbol{\Theta}_t) = \frac{\mathds{1}\{ \mathbf{X}_j \in A_{n}(\mathbf{X}_i'; \boldsymbol{\Theta}_t)  \}}{N_n(A_n(\mathbf{X}_i'; \boldsymbol{\Theta}_t ) )} $ with $A_n(\mathbf{X}_i'; \boldsymbol{\Theta}_t)$ being the hyper-rectangular cell containing $\mathbf{X}_i'$ under the random tree constructed by $\boldsymbol{\Theta}_t$ and $N_n(A_n(\mathbf{X}_i'; \boldsymbol{\Theta}_t))$ the number of observations falling in that hyper-rectangular cell. This way, one can deduce that $0 \le W_{n, j}(\mathbf{X}_{i}'; \boldsymbol{\Theta}_t) \le 1$ for all $j = 1, \dots, n$ and $\sum\limits_{j = 1}^n W_{n, j}(\mathbf{X}_i'; \boldsymbol{\Theta}_t) = 1$. Since $ K :=\sup\limits_{\mathbf{x}} |  \widetilde{m}(\mathbf{x})| <\ \infty$ by $\ref{Ass2}$ one obtains $\mathbb{E}[Y_1^2] = \mathbb{E}[\widetilde{m}(\mathbf{X}_1)^2] + \sigma^2 < K^2  + \sigma^2 < \infty$ and together with the Cauchy-Schwarz inequality, it holds for all $n \in \N$ that 
		\begin{align}\label{Finiteness}
			C_{n, i, t} &\le \mathbb{E}[ ( | \widetilde{m}(\mathbf{X}_i') - m_{n, 1}(\mathbf{X}_i'; \boldsymbol{\Theta}_t )| )^2  ]  \le \mathbb{E}[ ( |\widetilde{m}(\mathbf{X}_i')| + |m_{n, 1}(\mathbf{X}_i'; \boldsymbol{\Theta}_t )| )^2 ] \notag  \\
			&\le K^2 +2 K \cdot \mathbb{E}\left[  \left|\sum\limits_{j = 1}^n  W_{n, j}(\mathbf{X}_i'; \boldsymbol{\Theta}_t) Y_j \right|^2 \right]^{1/2}  + \mathbb{E}\left[ \left|\sum\limits_{j = 1}^n  W_{n,j}(\mathbf{X}_i'; \boldsymbol{\Theta}_t) Y_j \right|^2 \right] \notag \\
			&\le K^2 +2K \cdot \mathbb{E}\left[ \left( \sum\limits_{j = 1}^n   W_{n, j}(\mathbf{X}_i' ; \boldsymbol{\Theta}_t) |Y_j| \right)^2 \right]^{1/2} + \mathbb{E}\left[ \left(\sum\limits_{j = 1}^n W_{n, j}(\mathbf{X}_i'; \boldsymbol{\Theta}_t) |Y_j|\right)^2 \right] \notag \\
			&\le K^2 + 2K\cdot \mathbb{E}\left[ \left(\sum\limits_{j = 1}^n |Y_j| \right)^2 \right]^{1/2} + \mathbb{E}\left[ \left( \sum\limits_{j = 1}^n |Y_j|  \right)^2 \right] \notag \\
			&\le K^2 + 2K n (\mathbb{E}[ Y_1^2 ])^{1/2} + n^2 \mathbb{E}[ Y_1^2 ]			 < \infty.
 		\end{align}

		Set $\Delta_{n, i }(\boldsymbol{\Theta}_t) = \widetilde{m}(\mathbf{X}_i) - m_{n, 1}( \mathbf{X}_i; \boldsymbol{\Theta}_t )$ and recall that $\epsilon_i = Y_i - \widetilde{m}(\mathbf{X}_i)$ according to model $(\ref{RegModel})$. Then it follows from the law of total probability that 
		\begin{align}\label{Third}
			\mathbb{E}[ (Y_i - \widetilde{m}(\mathbf{X}_i))(\widetilde{m}(\mathbf{X}_i) - m_{n, 1}(\mathbf{X}_i; \boldsymbol{\Theta}_t))  \mathds{1}\{ \mathbf{X}_i \in \mathcal{D}_{n, \mathbf{X}}^{-(t)} \}] \notag &= \mathbb{E}[ \epsilon_i \cdot \Delta_{n,i}(\boldsymbol{\Theta}_t)  \mathds{1}\{ \mathbf{X}_i \in \mathcal{D}_{n, \mathbf{X}}^{-(t)} \}]  \notag\\
			&= \mathbb{P}[ \Theta_{i, t}^{(1)} = 0 ] \cdot \mathbb{E}[ \epsilon_i \cdot \Delta_{n,i}(\boldsymbol{\Theta}_t) |\Theta_{i, t}^{(1)} = 0 ]    \notag\\
			&= \frac{\gamma_n}{n} \cdot \mathbb{E}[ \epsilon_i | \Theta_{i, t}^{(1)} = 0 ]  \cdot \mathbb{E}[ \Delta_{n,i}(\Theta_t)  | \Theta_{i, t}^{(1)} = 0 ] \notag \\
			&= 0,
		\end{align}
		since given the condition $\mathbf{X}_i \in \mathcal{D}_{n, \mathbf{X}}^{-(t)}$, or equivalently, $\Theta_{i, t}^{(1)} = 0$, $\epsilon_i$ is independent of $\Delta_{n, i}(\boldsymbol{\Theta}_t) $. Furthermore note that we used the independence of $\epsilon_i $ towards $\mathbf{X}_i$ and $\Theta_{i, t}^{(1)}$ leading to $\mathbb{E}[ \epsilon_i | \Theta_{i, t}^{(1)} = 0 ] = \mathbb{E}[\epsilon_i] = 0$. \\
		
Hence, combining the results from $(\ref{First})$, $(\ref{Second})$ and $(\ref{Third})$, we obtain
\begin{align}\label{FinalFirst}
	\mathbb{E}[ (Y_i - m_{n, 1}(\mathbf{X}_i; \boldsymbol{\Theta}_t))^2 \mathds{1}\{ \mathbf{X}_i \in \mathcal{D}_{n, \mathbf{X}}^{-(t)} \} ] &= \mathbb{E}[ (Y_i - \widetilde{m}(\mathbf{X}_i))^2 \mathds{1}\{ \mathbf{X}_i \in \mathcal{D}_{n, \mathbf{X}}^{-(t)} \}]  \notag \\
	&\quad + 2\mathbb{E}[(Y_i - \widetilde{m}(\mathbf{X}_i))(\widetilde{m}(\mathbf{X}_i ) - m_{n, 1}(\mathbf{X}_i; \Theta_t))  \mathds{1}\{ \mathbf{X}_i \in \mathcal{D}_{n, \mathbf{X}}^{-(t)} \} ]   \notag \\
	&\quad + \mathbb{E}[ (\widetilde{m}(\mathbf{X}_i) - m_{n, 1}(\mathbf{X}_i; \boldsymbol{\Theta}_t))^2  \mathds{1}\{ \mathbf{X}_i \in \mathcal{D}_{n, \mathbf{X}}^{-(t)} \} ] \notag \\
	&= \frac{n - a_n}{n}\mathbb{E}[(Y_i - \widetilde{m}(\mathbf{X}_i))^2] + C_{n, i, t}
\end{align}

Defining $\tilde{\mathbf{X}}_{j, i} = [X_{1,i}, \dots X_{j-1,i}, Z_{ j}, X_{j+1, i}, \dots, X_{p, i} ]^\top$ for $i = 1, \dots, n$, where $Z_{ j}$ is independent of $[X_{1, i}, \dots, X_{j-1,i}, X_{j+1,i}, \dots, X_{p, i}]$ and $\epsilon_i$ and $Y_i$, but has the same marginal distribution as $X_{j} \stackrel{d}{=} X_{j, i}$, we can deduce that for any arbitrary measurable function $f: \R^p \longrightarrow \R$ and $\pi \in \mathcal{V}$, it holds: 
		\begin{align}\label{LemmaPermutation}
			\mathbb{E}[ f( X_{1,i}, \dots, X_{j , \pi(i)}, \dots, X_{ p, i}  )  ] &= \mathbb{E}[  \mathbb{E}[ f(X_{1, i}, \dots, X_{j, \pi(i)}, \dots, X_{p, i})  | \pi ] ]  \notag\\
			&= \mathbb{E}[f(\tilde{\mathbf{X}}_{j, i})],
		\end{align}
		since $\mathbb{E}[ f( X_{1, i}, \dots, X_{j, \pi(i)}, \dots, X_{p, i} )  | \pi] \stackrel{d}{=} \mathbb{E}[f(\tilde{\mathbf{X}}_{j, i})]$ due to the independence of the samples. \\
		Now, following exactly the same calculation rules as in the derivation of equation $(\ref{First})$, while also using $(\ref{LemmaPermutation})$, we receive
		\begin{align} \label{FirstPermutation}
			\mathbb{E}[ (Y_i - \widetilde{m}(\mathbf{X}_i^{\pi_{j, t}} ))^2 \mathds{1}\{ \mathbf{X}_i \in \mathcal{D}_{n, \mathbf{X}}^{-(t)} \} ]  &= \frac{n - a_n}{n} \mathbb{E}[ (Y_i - \widetilde{m}(\mathbf{X}_i^{\pi_{j, t}}))^2  ]  \notag \\
			&= \frac{n - a_n}{n}  \mathbb{E}[ (Y_i - \widetilde{m}(\tilde{\mathbf{X}}_{j,i}))^2 ].
		\end{align}
		
Now denote with $\tilde{\mathbf{X}}_{j, i}'$ an independent copy of $\tilde{\mathbf{X}}_{j,i}$ independent of $m_{n, 1}$. Since sampling is restricted to without replacement, the permutation $\pi_{j, t}$ is independent of $\boldsymbol{\Theta}_t$, $\mathcal{D}_n$ and hence independent of $\mathcal{D}_{n, \mathbf{X}}^{-(t)}$. This would be different if sampling is conducted with replacement, since the cardinality of $\mathcal{D}_{n, \mathbf{X}}^{-(t)}$ would be random leading to the dependence of $\pi_{j, t}$ towards $\boldsymbol{\Theta}_t$. This independence allows us to conduct the following computations
\begin{align}
	0 & \le \mathbb{E}[ (\widetilde{m}(\mathbf{X}_i^{\pi_{j, t}}) - m_{n, 1}(\mathbf{X}_i^{\pi_{j, t}}; \boldsymbol{\Theta}_t) )^2 \mathds{1}\{ \mathbf{X}_i  \in \mathcal{D}_{n, \mathbf{X}}^{-(t)}  \} ] \notag \\
	&=  \mathbb{E}[ (\widetilde{m}(\tilde{\mathbf{X}}_{j, i}) - m_{n, 1}(\tilde{\mathbf{X}}_{j, i}; \boldsymbol{\Theta}_t) )^2   \mathds{1}\{ \Theta_{i, t}^{(1)} = 0 \} ] \tag{33a}\label{SecondPermutationa} \\	
	&= \mathbb{E}[ (\widetilde{m}(\tilde{\mathbf{X}}_{j, i}') - m_{n, 1}(\tilde{\mathbf{X}}_{j, i}'; \boldsymbol{\Theta}_t) )^2   \mathds{1}\{ \Theta_{i, t}^{(1)} = 0 \} ]  \tag{33b} \label{SecondPermutationb}\\ 
	&= \mathbb{E}[ (\widetilde{m}(\mathbf{X}_{ i}') - m_{n, 1}(\mathbf{X}_{i}'; \boldsymbol{\Theta}_t) )^2   \mathds{1}\{ \Theta_{i,t}^{(1)} = 0 \}   ] \notag \\
	&= \mathbb{E}[ (\widetilde{m}(\mathbf{X}_i') -m_{n, 1}( \mathbf{X}_i'  ; \boldsymbol{\Theta}_t))^2 \mathds{1}\{ \mathbf{X}_i \in \mathcal{D}_{n, \mathbf{X}}^{-(t)} \} ] = C_{n, i, t},\label{SecondPermutation} 
\end{align} 
where equality $(\ref{SecondPermutationa})$ follows from applying $(\ref{LemmaPermutation})$, equality $(\ref{SecondPermutationb})$ from the calculation results obtained from equation $(\ref{ChangeIndependence})$ and $(\ref{Second})$ and the second last equality from $\mathbf{X}_i' \stackrel{d}{=} \tilde{\mathbf{X}}_{j,i}'$ together with the independence property towards all other random elements, under the event that $\Theta_{i, t}^{(1)} = 0$.

Similarly, set $\tilde{\Delta}_{n, i}^{(j)}(\boldsymbol{\Theta}_t) = \widetilde{m}(\tilde{\mathbf{X}}_{j,i}) - m_{n, 1}(\tilde{\mathbf{X}}_{j, i}; \boldsymbol{\Theta}_t)$ and $\tilde{\epsilon}_{j, i} = Y_i - \widetilde{m}(\tilde{\mathbf{X}}_{j, i})$. Then, recall from model $(\ref{RegModel})$ that
\begin{align}\label{MomentResiduals}
	\mathbb{E}[ \tilde{\epsilon}_{j, i}  | \mathbf{X}_i \in \mathcal{D}_{n, \mathbf{X}}^{-(t)}  ] &= \mathbb{E}[ Y_i - \widetilde{m}(\tilde{\mathbf{X}}_{j, i}) | \Theta_{i, t}^{(1)} = 0 ] \notag \\
	&= \mathbb{E}[ \widetilde{m}(\mathbf{X}_i) + \epsilon_i - \widetilde{m}(\tilde{\mathbf{X}}_{j, i}) | \Theta_{i, t}^{(1)} = 0 ] \notag \\ 
	&= \mathbb{E}[ \widetilde{m}(\mathbf{X}_i)  | \Theta_{i, t}^{(1)} = 0]   + \mathbb{E}[ \epsilon_i | \Theta_{i, t}^{(1)} = 0]  - \mathbb{E}[ \widetilde{m}(\tilde{\mathbf{X}}_{j, i}) | \Theta_{i, t}^{(1)} = 0 ] \notag \\
	&= \mathbb{E}[ \epsilon_i ]  + \mathbb{E}[ \widetilde{m}(\mathbf{X}_i) ] - \mathbb{E}[\widetilde{m}( \tilde{\mathbf{X}}_{j, i} )] =  \mathbb{E}[ \widetilde{m}(\mathbf{X}_i) ]  - \mathbb{E}[ \widetilde{m}(\mathbf{X}_i) ] \notag  \\
	&= 0, 
\end{align}
where we explicitly used assumption $\ref{Ass1}$ in the second-last equality equality and the independence of $\boldsymbol{\Theta}_t^{(1)}$ towards $\epsilon_i$ and $\mathbf{X}_i$ in the fourth equality. Now, consider

\begin{align}\label{ResidualAndDelta}
	\mathbb{E}[ \tilde{\epsilon}_{j,i} \cdot  \tilde{\Delta}_{n, i}^{(j)}(\boldsymbol{\Theta}_t) \cdot \mathds{1} \{\Theta_{i, t}^{(1)} = 0 \}  ] &=  \mathbb{E}[ \epsilon_i \cdot \tilde{\Delta}_{n, i}^{(j)}(\boldsymbol{\Theta}_t) \cdot \mathds{1} \{\Theta_{i, t}^{(1)} = 0 \}] + \notag  \\
	&+ \mathbb{E}[ ( \widetilde{m}(\mathbf{X}_i ) - \widetilde{m}(\tilde{ \mathbf{X} }_{j, i}) ) ( \widetilde{m}( \tilde{\mathbf{X}}_{j, i} ) - m_{n, 1}( \tilde{\mathbf{X}}_{j, i} ; \boldsymbol{\Theta}_t  ) )  \cdot \mathds{1} \{\Theta_{i, t}^{(1)} = 0 \} ] \notag \\
	&= \mathbb{P}[ \Theta_{i, t}^{(1)} = 0 ] \cdot  \mathbb{E}[\epsilon_i ] \cdot  \mathbb{E}[ \tilde{\Delta}_{n, i}^{(j)}(\boldsymbol{\Theta}_t)  | \Theta_{i, t}^{(1)} = 0 ] \notag \\ 
    &+ \mathbb{E}[ ( \widetilde{m}(\mathbf{X}_i ) - \widetilde{m}(\tilde{ \mathbf{X} }_{j, i}) ) ( \widetilde{m}( \tilde{\mathbf{X}}_{j, i} ) - m_{n, 1}( \tilde{\mathbf{X}}_{j, i} ; \boldsymbol{\Theta}_t  ) )  \cdot \mathds{1} \{\Theta_{i, t}^{(1)} = 0 \} ]  \notag \\
	&= \mathbb{E}[ ( \widetilde{m}(\mathbf{X}_i ) - \widetilde{m}(\tilde{ \mathbf{X} }_{j, i}) ) ( \widetilde{m}( \tilde{\mathbf{X}}_{j,i} ) - m_{n, 1}( \tilde{\mathbf{X}}_{ j, i}; \boldsymbol{\Theta}_t ) )  \cdot \mathds{1} \{\Theta_{i, t}^{(1)} = 0 \} ] \notag \\
	&= \mathbb{P}[ \Theta_{i, t}^{(1)} = 0 ] \cdot \mathbb{E}[  ( \widetilde{m}(\mathbf{X}_i ) - \widetilde{m}(\tilde{ \mathbf{X} }_{j, i}) ) ( \widetilde{m}( \tilde{\mathbf{X}}_{j, i} ) - m_{n, 1}( \tilde{\mathbf{X}}_{j, i}; \boldsymbol{\Theta}_t ) ) | \Theta_{i, t }^{(1)} = 0 ] \notag \\
	&=  \frac{\gamma_n}{n} \cdot Cov_{\Theta_{i, t}^{(1)} = 0}\left( \{ \widetilde{m}(\mathbf{X}_i) - \widetilde{m}( \tilde{\mathbf{X}}_{j, i} )\} ; \{ (\widetilde{m}(\tilde{\mathbf{X}}_{j, i}) - m_{n, 1}(\tilde{\mathbf{X}}_{j, i} ; \boldsymbol{\Theta}_t) ) \} \right) \notag \\
	&=: \frac{\gamma_n}{n}\cdot \xi_{n, i}^{(j)}(\boldsymbol{\Theta_t})
\end{align}
The second equality follows from the law of total expectation and the independence of $\epsilon_i$ and $ \tilde{\Delta}_{n, i}^{(j)} $ under the event that $\Theta_{i, t}^{(1)} = 0$, i.e. that the $i$-th observation has not been selected during training. The third equality follows from equation $(\ref{MomentResiduals})$. The second last equality follows from the fact that $ \mathbb{E}[ \widetilde{m}(\mathbf{X}_i)  - \widetilde{m}(\tilde{\mathbf{X}}_{j, i} ) ] = \mathbb{E}[\widetilde{m}(\mathbf{X}_i)  ] - \mathbb{E}[\widetilde{m}(\tilde{\mathbf{X}}_{j, i} )] = 0$, since $\widetilde{m}(\mathbf{X}_i) \stackrel{d}{ = } \widetilde{m}(\tilde{\mathbf{X}}_{j, i} )$. Finally, we can now obtain
\begin{align}\label{ThirdPermutation}
&\mathbb{E}[ (Y_i - \widetilde{m}(\mathbf{X}_i^{\pi_{j, t}}))(\widetilde{m}(\mathbf{X}_i^{\pi_{j, t}}) - m_{n, 1}(\mathbf{X}_i^{\pi_{j, t}}; \boldsymbol{\Theta}_t))  \mathds{1}\{ \mathbf{X}_i \in \mathcal{D}_{n, \mathbf{X}}^{-(t)} \}] \notag \\
&  = \mathbb{E}[ (Y_i - \widetilde{m}(\tilde{\mathbf{X}}_{j, i} ))(\widetilde{m}(\tilde{\mathbf{X}}_{j, i}) - m_{n, 1}(\tilde{\mathbf{X}}_{j, i}; \boldsymbol{\Theta}_t))  \mathds{1}\{  \Theta_{i, t}^{(1)} = 0 \}]  \notag \\
 &= \mathbb{E}[ \tilde{\epsilon}_{i, j} \cdot \tilde{\Delta}_{n, i}^{(j)}( \boldsymbol{\Theta}_t)  \cdot \mathds{1}\{ \Theta_{i, t}^{(1)} = 0  \}]  \notag\\
&= \frac{\gamma_n}{n} \cdot \xi_{n, i }^{(j)}(\boldsymbol{\Theta_t})
\end{align}
In the second equality, we used $(\ref{LemmaPermutation})$, while the last equality follows from applying equation $(\ref{ResidualAndDelta})$.  \\ 
Using the results from $(\ref{FirstPermutation})$, $(\ref{SecondPermutation})$ and $(\ref{ThirdPermutation})$, one can now obtain:
\begin{align}\label{FinalSecond}
\mathbb{E}[ (Y_i - m_{n, 1}(\mathbf{X}_i^{\pi_{j, t}}; \Theta_t))^2 \mathds{1}\{\mathbf{X}_i \in \mathcal{D}_{n, \mathbf{X}}^{-(t)}  \} ] &= \mathbb{E}[ (Y_i - \widetilde{m}(\mathbf{X}_i^{\pi_{j, t}}))^2  \mathds{1}\{ \mathbf{X}_i \in \mathcal{D}_{n, \mathbf{X}}^{-(t)} \}  ] \notag \\
&\quad +  \mathbb{E}[ (\widetilde{m}(\mathbf{X}_i^{\pi_{j, t}}) - m_{n, 1}(\mathbf{X}_i^{\pi_{j, t}}; \Theta_t) )^2 \mathds{1}\{ \mathbf{X}_i  \in \mathcal{D}_{n, \mathbf{X}}^{-(t)}  \} ] \notag  \\
& \quad +  2 \mathbb{E}[ \tilde{\epsilon}_{i, j} \cdot  \tilde{\Delta}_{n, i}^{(j)}(\Theta_t) \mathds{1}\{ \mathbf{X}_i \in \mathcal{D}_{n, \mathbf{X} }^{-(t)} \} ]   \notag \\
&=  \frac{n - a_n}{n} \mathbb{E}[ ( Y_i - \widetilde{m}(\tilde{\mathbf{X}}_{j, i}))^2 ] + C_{n, i, t} + \frac{2\gamma_n}{n}\cdot  \xi_{n, i }^{(j)}(\boldsymbol{\Theta}_t)
\end{align}

Finally, using $(\ref{FinalFirst})$ and $(\ref{FinalSecond})$ together with $(\ref{Finiteness})$, we obtain 
\begin{align}\label{AlmostFinished}
	\mathbb{E}[I_{n, M}^{(OOB)}(j)]  &= \frac{1}{M \gamma_{n}} \sum\limits_{t = 1}^M \sum\limits_{i = 1}^n \mathbb{E}[\{ (Y_i - m_{n, 1}(\mathbf{X}_i^{\pi_{j, t}}; \boldsymbol{\Theta}_t))^2 - (Y_i - m_{n, 1}(\mathbf{X}_i; \boldsymbol{\Theta}_t))^2 \} \mathds{1}\{ \mathbf{X}_i \in \mathcal{D}_{n, \mathbf{X}}^{-(t)} \} ] \notag \\
	&= \frac{1}{M\gamma_n} \sum\limits_{t = 1}^M \sum\limits_{ i = 1}^n \left\{ \frac{n - a_n}{n} \{ \mathbb{E}[( Y_i - \widetilde{m}(\tilde{\mathbf{X}}_{j, i}))^2 ] - \mathbb{E}[(Y_i - \widetilde{m}(\mathbf{X}_i))^2] \} + C_{n, i,t} - C_{n, i,t} + \frac{2 \gamma_n}{n} \cdot \xi_{n, i}^{(j)}(\boldsymbol{\Theta}_t) \right\} \notag \\
	&= \frac{ n-a_n}{\gamma_n} \left\{\mathbb{E}[(Y_1 - \widetilde{m}(\tilde{\mathbf{X}}_{j, 1}))^2] - \mathbb{E}[(Y_1 - \widetilde{m}(\mathbf{X}_1))^2] \right\} + \frac{2}{\gamma_n} \sum\limits_{i = 1}^n \left(  \frac{1}{M}\sum\limits_{t = 1}^M \frac{\gamma_n}{n} \xi_{n, i}^{(j)}(\boldsymbol{\Theta}_t)  \right) \notag \\
	&=  \mathbb{E}[(Y_1 - \widetilde{m}(\tilde{\mathbf{X}}_{j, 1}))^2] - \mathbb{E}[(Y_1 - \widetilde{m}(\mathbf{X}_1))^2]  + 2 \cdot  \left(  \frac{1}{M} \sum\limits_{t = 1}^M \xi_{n, 1}^{(j)}( \boldsymbol{\Theta_t} )  \right)
\end{align}

where the second last equality follows from the identical distribution (in $i$) of the sequence $\{ Y_i - m(\tilde{\mathbf{X}}_{j,i}) \}_{i = 1}^n$, respectively $\{ Y_i - m(\mathbf{X}_i) \}_{i = 1}^n$. The last equality follows from the identical distribution of the sequence $\{ \xi_{n, i }^{(j)}(\boldsymbol{\Theta}_t) \}_{i = 1}^n$.\\

Without loss of generality, assume that the first $1 \le s \le p$ features are informative, i.e. $\mathcal{S} = \{1, \dots, s\}$ and define $\mathbf{X}_{i; \mathcal{S}} =  [ X_{1,i}, X_{2,i}, \dots, X_{s, i} ]^\top \in \R^s$, the $i$-th random vector reduced to informative features characterized by $\mathcal{S}$. Similarly, let $\tilde{\mathbf{X}}_{j, i; \mathcal{S}}$ be the reduced random vector of $\tilde{\mathbf{X}}_{j, i}$, in which the $j$-th position is substituted by $Z_j$, with $1 \le j \le s$ . 

We distinguish between two cases: First, let $j \in \mathcal{S}^{C} = \{ 1, \dots, p \} \setminus \mathcal{S}.$ Under this scenario, we know that $\widetilde{m}( \tilde{\mathbf{X}}_{j, 1} ) = \widetilde{m}(\mathbf{X}_1) = m(\mathbf{X}_{1; \mathcal{S}})$. Hence, we have 
\begin{align}\label{NotInfCovariance}
	\xi_{n, 1}^{(j)}(\boldsymbol{\Theta}_t)&=  Cov_{\Theta_{1, t}^{(1)} = 0}\left( \{\widetilde{m}(\mathbf{X}_1) - \widetilde{m}( \tilde{\mathbf{X}}_{j, 1} )\} ; \{ (\widetilde{m}(\tilde{\mathbf{X}}_{j, 1}) - m_{n, 1}(\tilde{\mathbf{X}}_{ j, 1} ; \boldsymbol{\Theta}_t) ) \cdot \mathds{1}\{ \Theta_{1, t}^{(1)} = 0  \} \} \right) \notag \\
	&= Cov_{\Theta_{1, t}^{(1)} = 0}\left(0; \{ (\widetilde{m}(\tilde{\mathbf{X}}_{j, 1}) - m_{n, 1}(\tilde{\mathbf{X}}_{j, 1} ; \boldsymbol{\Theta}_t) ) \cdot \mathds{1}\{ \Theta_{1, t}^{(1)} = 0  \} \right) = 0.
\end{align}
 Therefore, it immediately follows by applying $(\ref{AlmostFinished})$ and $(\ref{NotInfCovariance})$ that 
 \begin{align}
 	\mathbb{E}[I_{n, M}^{(OOB)}(j) ] &= \mathbb{E}[ (Y_1 - \widetilde{m}( \tilde{\mathbf{X}}_{j, 1} ))^2]  - \mathbb{E}[ (Y_1 - \widetilde{m}(\mathbf{X}_1) )^2 ] \notag \\
 	&= \mathbb{E}[( Y_1 - m(\mathbf{X}_{1; \mathcal{S}}) )^2] -  \mathbb{E}[(  Y_1 - m(\mathbf{X}_{1; \mathcal{S}}) )^2] \notag \\
 	&= 0 = I(j).
 \end{align}
 
 Secondly, let $j \in \mathcal{S}$ be informative. Then notice that 
\begin{align}\label{AsymptoticStep}
	\frac{\gamma_n}{n}\frac{1}{M} \sum\limits_{t = 1}^M \xi_{n, 1}^{(j)}(\boldsymbol{\Theta}_t) 	&=  \frac{\gamma_n}{n}\frac{1}{M} \sum\limits_{ t = 1}^M Cov_{ \Theta_{i, t}^{(1)} = 0 } \left( \{\widetilde{m}(\mathbf{X}_1) - \widetilde{m}( \tilde{\mathbf{X}}_{j, 1} )\} ; \{ \widetilde{m}(\tilde{\mathbf{X}}_{j, 1}) - m_{n, 1}(\tilde{\mathbf{X}}_{ j, 1} ; \boldsymbol{\Theta}_t) )  \} \right)  \notag \\
	&= \frac{\gamma_n}{n} \frac{1}{M} \sum\limits_{ t = 1}^M \mathbb{E}[ ( m(\mathbf{X}_{1; \mathcal{S}})   - m(\tilde{\mathbf{X}}_{j, 1; \mathcal{S}}) )  \cdot  ( \widetilde{m}(\tilde{\mathbf{X}}_{ j, 1}) - m_{n, 1} (\tilde{\mathbf{X}}_{j, 1} ; \boldsymbol{\Theta}_t ) ) | \Theta_{1, t}^{(1)} = 0 ] \notag \\
	&= \frac{1}{M} \sum\limits_{ t= 1}^M \mathbb{E}[ ( m(\mathbf{X}_{1; \mathcal{S}}) - m(\tilde{\mathbf{X}}_{ j, 1; \mathcal{S}} ) ) \cdot ( \widetilde{m}(\tilde{\mathbf{X}}_{j, 1}) - m_{n , 1}(\tilde{\mathbf{X}}_{ j, 1 }; \boldsymbol{\Theta}_t )  ) \cdot \mathds{1}\{ \Theta_{1, t}^{(1)}  = 0\} ]  \notag \\
	&= \mathbb{E}\left[ ( m(\mathbf{X}_{1; \mathcal{S}}) - m(\tilde{\mathbf{X}}_{j, 1; \mathcal{S}} ) )  \cdot   \frac{Z_1(M)}{M} \cdot \left( m(\tilde{\mathbf{X}}_{ j, 1 }  ) - m_{n, M}^{OOB}( \tilde{\mathbf{X}}_{ j, 1} ) \right) \right] , 
	\end{align}
	where $Z_1(M) = \sum\limits_{ t = 1}^M \mathds{1} \{ \Theta_{1, t}^{(1)} = 0 \} = \sum\limits_{ t = 1}^M \mathds{1}\{ \mathbf{X}_{1} \text{ has not been selected under }  \boldsymbol{\Theta}_t \}$ is the number of times the first observation has not been selected during the sampling procedure and \\
	 $m_{n, M}^{OOB}(\tilde{\mathbf{X}}_{j, 1}) = \frac{1}{Z_{1}(M)} \sum\limits_{ t= 1}^M m_{n, 1}(\tilde{\mathbf{X}}_{j, 1}; \boldsymbol{\Theta}_t) \cdot \mathds{1}\{ \Theta_{i, t}^{(1)} = 0 \}$. Due to assumption $\ref{Ass2}$, we can deduce that 
	\begin{align}\label{FirstFiniteMajority}
		|m(\mathbf{X}_{1; \mathcal{S}} ) - m(\tilde{\mathbf{X}}_{ 1, j; \mathcal{S}} ) | &\le 2K < \infty
	\end{align}
	On the other hand, we observe the following bound: 
	\begin{align}\label{SecondFiniteMajority}
		\left |  \frac{Z_1(M)}{M} \cdot \left(\widetilde{m}(\tilde{\mathbf{X}}_{1,j }) -  m_{n, M}^{OOB}(\tilde{\mathbf{X}}_{1, j } ) \right)  \right | &\le  K +\sum\limits_{ \ell = 1}^n W_{n, \ell}( \tilde{\mathbf{X}}_{1, j} ; \boldsymbol{\Theta}_1, \dots, \boldsymbol{\Theta}_M ) \cdot |Y_\ell| \notag \\
		&\le K  + \max\limits_{ 1 \le \ell \le n} |Y_\ell| =: K + f_n, 
	\end{align}
	where $W_{n, \ell}(\cdot; \boldsymbol{\Theta}_1, \dots, \boldsymbol{\Theta}_M) = \frac{1}{M} \sum\limits_{ t= 1}^M W_{n, \ell}(\cdot; \boldsymbol{\Theta}_t)$.	Hence, we can deduce by applying $(\ref{FirstFiniteMajority})$ and $(\ref{SecondFiniteMajority})$ that 
	\begin{align}
	|f_{n, M} | := 	\left|( m(\mathbf{X}_{1; \mathcal{S}}) - m(\tilde{\mathbf{X}}_ {j, 1; \mathcal{S}} ) )  \cdot   \frac{Z_1(M)}{M} \cdot \left( \widetilde{m}(\tilde{\mathbf{X}}_{j, 1 }  ) - m_{n, M}^{OOB}( \tilde{\mathbf{X}}_{1, j} ) \right) \right| \le 2K ( K + f_n) =: g_n,  
	\end{align}
	i.e. $g_n$ is a finite upper bound for $|f_{n, M}|$, independent of $M$ such that $\mathbb{E}_{\boldsymbol{\Theta}}[|g_n|] = 2K\cdot(K + f_n) < \infty$, where $f_n := \max\limits_{ 1 \le \ell \le n } |Y_\ell|$. Applying Lebesgue's dominated convergence theorem while using Proposition $\ref{HelpingProposition}$ under the sampling without replacement scheme with $c_n = 1 - a_n /n = \gamma_n/n$ and using $Z_1(M)/M \longrightarrow c_n$ as $M \rightarrow \infty$ due to $(\ref{InfForestModification})$, we obtain 
	\begin{align}\label{AlternativeExpress}
		&\lim\limits_{M \rightarrow \infty} \mathbb{E}\left[   ( m(\mathbf{X}_{1; \mathcal{S}})  - m(\tilde{\mathbf{X}}_{ j,1; \mathcal{S}})) \cdot \frac{Z_1(M)}{M} \left( \widetilde{m}(\tilde{\mathbf{X}}_{j, 1}) - m_{n, M}^{OOB}( \tilde{\mathbf{X}}_{ j, 1 }  )   \right)    \right] \notag \\ 
		&= \frac{\gamma_n}{n} \mathbb{E}[ ( m(\mathbf{X}_{1; \mathcal{S}})  - m(\tilde{\mathbf{X}}_{j, 1; \mathcal{S}})) ( \widetilde{m}(\tilde{\mathbf{X}}_{j, 1} ) - m_{n}^{OOB}(\tilde{\mathbf{X}}_{j, 1 } ) )  ]  \notag \\
		&=: \frac{\gamma_n}{n} J_n
	\end{align} 
	Note that $J_n$ can be bounded the following way using  the Cauchy-Schwarz inequality:
	\begin{align}
		 J_n \le |J_n| &\le \sqrt{\mathbb{E}[| m(\mathbf{X}_{1; \mathcal{S}}) - m( \tilde{\mathbf{X}}_ {j, 1; \mathcal{S}} )|^2] } \sqrt{\mathbb{E}[|\widetilde{m}(\tilde{\mathbf{X}}_{j, 1} ) - m_{n}^{OOB}( \tilde{\mathbf{X}}_{ j, 1} )|^2]}
	\end{align}
	Since $J_n \ge - |J_n|$ and due to assumption $\ref{Ass3}$, we can deduce that $\lim\limits_{n \rightarrow \infty} J_n = 0$. Note that the $L_2$ consistency of the Random Forest estimate $m_n^{OOB}$ for Out-of-Bag samples follows by $\ref{Ass3}$ and a Corollary given in \cite{ramosaj2019consistent}. Finally, we can conclude with $(\ref{AsymptoticStep})$ and $(\ref{AlternativeExpress})$ that
	\begin{align}
		\lim\limits_{n \rightarrow \infty} \lim\limits_{M \rightarrow \infty } \frac{1}{M} \sum\limits_{t = 1}^M \xi_{n, 1}^{(j)}(\boldsymbol{\Theta}_t) 	= \lim\limits_{ n\rightarrow \infty}\frac{n}{\gamma_n} \frac{\gamma_n}{n} J_n = \lim\limits_{n \rightarrow\infty} J_n = 0,
	\end{align}
	which completes the proof. \\
\end{proof}

In the sequel, we will shortly deliver proofs for the following claims, that have been mentioned in the main article: $(i)$ We argued that a variable $j \in \{1,\dots, p \}$ is \textit{important}, if the partial derivate of $\widetilde{m}(\mathbf{x})$ w.r.t. $x_j$ vanishes, i.e. we claimed the equivalence of both definitions $(\ref{SparsReg})$ and $(\ref{SparseDifferentiable})$ mentioned in the article. $(ii)$ We claimed that the assumptions given in \cite{scornet2015consistency} can replace $\ref{Ass1} - \ref{Ass3}$. $(iii)$ We claimed that the theoretical cut criterion $L^{(k)}(j,z)$ is independent of the residual noise $\sigma^2$. 

\begin{proof}[Proof of $(i)$]
Suppose that being important is defined through $(\ref{SparsReg})$ and assume without loss of generality, that the first $s \le p$ features are important, i.e. $\widetilde{m}(\mathbf{x}) = m(\mathbf{x}_{\mathcal{S}} )$, where $\mathbf{x}_{\mathcal{S}} = [x_1, \dots, x_s]^\top \in \R^s$. Then it follows immediately that $\frac{\partial \widetilde{m}(\mathbf{x}) }{ \partial x_j} = 0$ for all $j \in \{1, \dots, p\} \setminus \mathcal{S}$, since $\widetilde{m}(\mathbf{x}) = m(\mathbf{x}_{\mathcal{S}})$ does not depend on $j$. Hence, variable $j \in \{1, \dots, p\} \setminus \mathcal{S}$ is unimportant according to the definition given in $(\ref{SparseDifferentiable})$. \\
For the other direction, define the set $\mathcal{C} := \{ k \in \{1,\dots, p\} : \frac{\partial \widetilde{m}(\mathbf{x}) }{ \partial x_k } \neq 0 \}$ and suppose that $j \in \{1, \dots, p \}$ is informative in the sense that $j \in \mathcal{C}$. Then, let $\mathbf{a} \in \R^p$ be fixed but arbitrary. Using the multivariate Taylor expansion of $\widetilde{m}$ at $\mathbf{a}$, one has
\begin{align}\label{TaylorExpans}
	\widetilde{m}(\mathbf{x}) &\approx \widetilde{m}(\mathbf{a}) + \nabla \widetilde{m}(\mathbf{a})^\top (\mathbf{x} - \mathbf{a}) \notag \\
	&= \widetilde{m}(\mathbf{a}) + \sum\limits_{ s \in \mathcal{C}} \widetilde{m}_s'(a_s)(x_s  - a_s) =: m(\mathbf{x}_{\mathcal{C}})
\end{align}
which yields to $\mathcal{S} = \mathcal{C}$, i.e. the function $\widetilde{m}$ can be reduced to a function of potentially lower dimension, since $\mathbf{a}$ is chosen arbitrary and $(\ref{TaylorExpans})$ holds for any fixed $\mathbf{a}$. \\
\end{proof}

\begin{proof}[Proof of $(ii)$]
	Recalling some of the assumptions given in \cite{scornet2015consistency} in order to establish $L_2$ consistency, we have 
	\begin{enumerate}
		\item \label{FirstScornet} $\widetilde{m}(\mathbf{x}) = \sum\limits_{ k = 1}^p \widetilde{m}_k(x_k)$, where $\{ m_k(x_k) \}_{k = 1}^p$ is a sequence of univariate and continuous functions.  
		\item \label{SecondScornet} The feature vector $\mathbf{X} = [X_1, \dots, X_p]^\top \in \R^p$ is assumed to be uniformly distributed over $[0,1]^p$. 
		\item \label{ThirdScornet} The residuals are assumed to be centered Gaussian with variance $\sigma^2 \in (0, \infty)$, independent of $\mathbf{X}$. 
		\item \label{FourthScornet} Sampling is restricted to sampling without replacement such that $a_n \rightarrow \infty$, $t_n \rightarrow \infty$ and $\frac{t_n \cdot ( \log(a_n) )^9}{a_n} \rightarrow 0$ as $n \rightarrow \infty$.
	\end{enumerate}
	
	Now, since $\widetilde{m}_k$ is continuous for every $k \in \{1, \dots, p \}$ according to $\ref{FirstScornet}$., it immediately follows that $\widetilde{m}$ resp. $|\widetilde{m}(\mathbf{x})|$ is continuous. Hence, since $[0,1]^p$ as the support of $\mathbf{X}$ is compact, so is the set $\{ \widetilde{m}(\mathbf{x}) : \mathbf{x} \in [0,1]^p \}$, which then yields to $\sup\limits_{\mathbf{x} \in [0,1]^p } \widetilde{m}(\mathbf{x}) = \max\limits_{ \mathbf{x} \in [0,1]^p }  \widetilde{m}(\mathbf{x}) = K< \infty$. This is nothing else than assumption $\ref{Ass2}$. Furthermore, we have from $\ref{SecondScornet}$. that $\mathbf{X} \sim Unif([0,1]^p)$, which yields to $f_{\mathbf{X}} (x_1, \dots x_p) = \mathds{1}\{ \mathbf{x} \in [0,1]^p  \} = \prod\limits_{j = 1}^p \mathds{1}\{ X_j \in [0,1] \}  = \prod\limits_{ j = 1}^p f_{Unif(0,1)} (x_j)$, i.e. the multivariate density decomposes into the product of univariate densities. Therefore, the sequence of random variables  $\{ X_j \}_{j = 1}^p$ is pairwise independent. Hence, assumption $\ref{Ass1}$ follows. Assuming that the residuals are centered Gaussian with finite variance $\sigma^2$ as given in \ref{ThirdScornet}. is nothing else than the specification of our assumption that $\mathbb{E}[\epsilon] = 0$ and $Var(\epsilon) \in (0, \infty)$ by imposing explicitly the Gaussian distribution. Assumption $\ref{Ass3}$ then immediately follows  by using Theorem $1$ in \cite{scornet2015consistency} and the assumptions $\ref{FirstScornet}$ - $\ref{FourthScornet}$. Assumptions $\ref{Ass0}$ and $\ref{Ass01}$ are not required in \cite{scornet2015consistency}, and hence, they do not prohibit us to use Theorem $1$ in \cite{scornet2015consistency}. Therefore, they can be taken over additionally.\\ 
\end{proof}

\begin{proof}[Proof of $(iii)$]
	Consider the theoretical cut criterion $L^{(k)}(j,z)$ at level $1 \le k \le \lceil \log_2(t_n) \rceil + 1$ with $1 \le \ell \le 2^{k -1}$. Then we can see that this is independent of $\sigma^2$:
	\begin{align*}
	L^{(k)}(j, z) &=  Var[Y_1 | \mathbf{X}_1 \in A_\ell^{(k)}] - \mathbb{P}[X_{j, 1} < z | \mathbf{X}_1 \in A_\ell^{(k)}] ¸\cdot Var[Y_1 | \mathbf{X}_1 \in A_\ell^{(k)} , X_{j, 1} < z]   \\
	&- \mathbb{P}[X_{j, 1} \ge z | \mathbf{X}_1 \in A_\ell^{(k)}]\cdot Var[Y_1 | \mathbf{X}_1 \in A_\ell^{(k)}, X_{j, 1} \ge z] \\
	&= Var[ \widetilde{m}(\mathbf{X}_1) | \mathbf{X}_1 \in A_{\ell}^{(k)} ] + Var[\epsilon_1 | \mathbf{X}_1 \in A_{\ell}^{(k)}] 	-  \mathbb{P}[X_{j, 1} < z | \mathbf{X}_1 \in A_\ell^{(k)}] \cdot \left\{  Var[ \widetilde{m}(\mathbf{X}_1) | \mathbf{X}_1 \in A_\ell^{(k)} , X_{j, 1} < z] \right. +  \\
	& \quad \left.  Var[\epsilon_1 | \mathbf{X}_1 \in A_{\ell}^{(k)} X_{j, 1} < z ]  \right\} - \mathbb{P}[X_{j, 1} \ge z | \mathbf{X}_1 \in A_\ell^{(k)}]\cdot \left\{ Var[  \widetilde{m}(\mathbf{X}_1) | \mathbf{X}_1 \in A_\ell^{(k)}, X_{j, 1} \ge z] + \right. \\
	& \quad \left.  Var[\epsilon_1 | \mathbf{X}_1 \in A_{\ell}^{(k)}, X_{j, 1} \ge z] \right\} \\
	&= Var[ \widetilde{m}(\mathbf{X}_1) | \mathbf{X}_1 \in A_{\ell}^{(k)} ] + \sigma^2	-  \mathbb{P}[X_{j, 1} < z | \mathbf{X}_1 \in A_\ell^{(k)}] \cdot \left\{  Var[ \widetilde{m}(\mathbf{X}_1) | \mathbf{X}_1 \in A_\ell^{(k)} , X_{j, 1} < z] + \sigma^2 \right\} - \\
	& \quad \mathbb{P}[X_{j, 1} \ge z | \mathbf{X}_1 \in A_\ell^{(k)}]\cdot \left\{ Var[  \widetilde{m}(\mathbf{X}_1) | \mathbf{X}_1 \in A_\ell^{(k)}, X_{j, 1} \ge z]   +   \sigma^2\right\} \\
	&= Var[ \widetilde{m}(\mathbf{X}_1) | \mathbf{X}_1 \in A_{\ell}^{(k)} ] -  \mathbb{P}[X_{j, 1} < z | \mathbf{X}_1 \in A_\ell^{(k)}] \cdot  Var[ \widetilde{m}(\mathbf{X}_1) | \mathbf{X}_1 \in A_\ell^{(k)} , X_{j, 1} < z]   - \\
	&\quad \mathbb{P}[X_{j, 1} \ge z | \mathbf{X}_1 \in A_\ell^{(k)}]\cdot  Var[  \widetilde{m}(\mathbf{X}_1) | \mathbf{X}_1 \in A_\ell^{(k)}, X_{j, 1} \ge z]  ,  
	\end{align*}
	where the third equality follows from the independence of $\epsilon_1$ and $\mathbf{X}_1$. 
\end{proof}

\newpage


\bibliography{BibFile}

\newpage 
\begin{center}
	\textbf{SUPPLEMENT}
\end{center}
 
 In the following, we present supplementary material, which has been part of the simulation study in the main article. 

\section{Results for $p < n$ Problems}
\begin{figure}[h!]
	\centering
	\mbox{\subfigure[$n = 100$]{\includegraphics[width=3in]{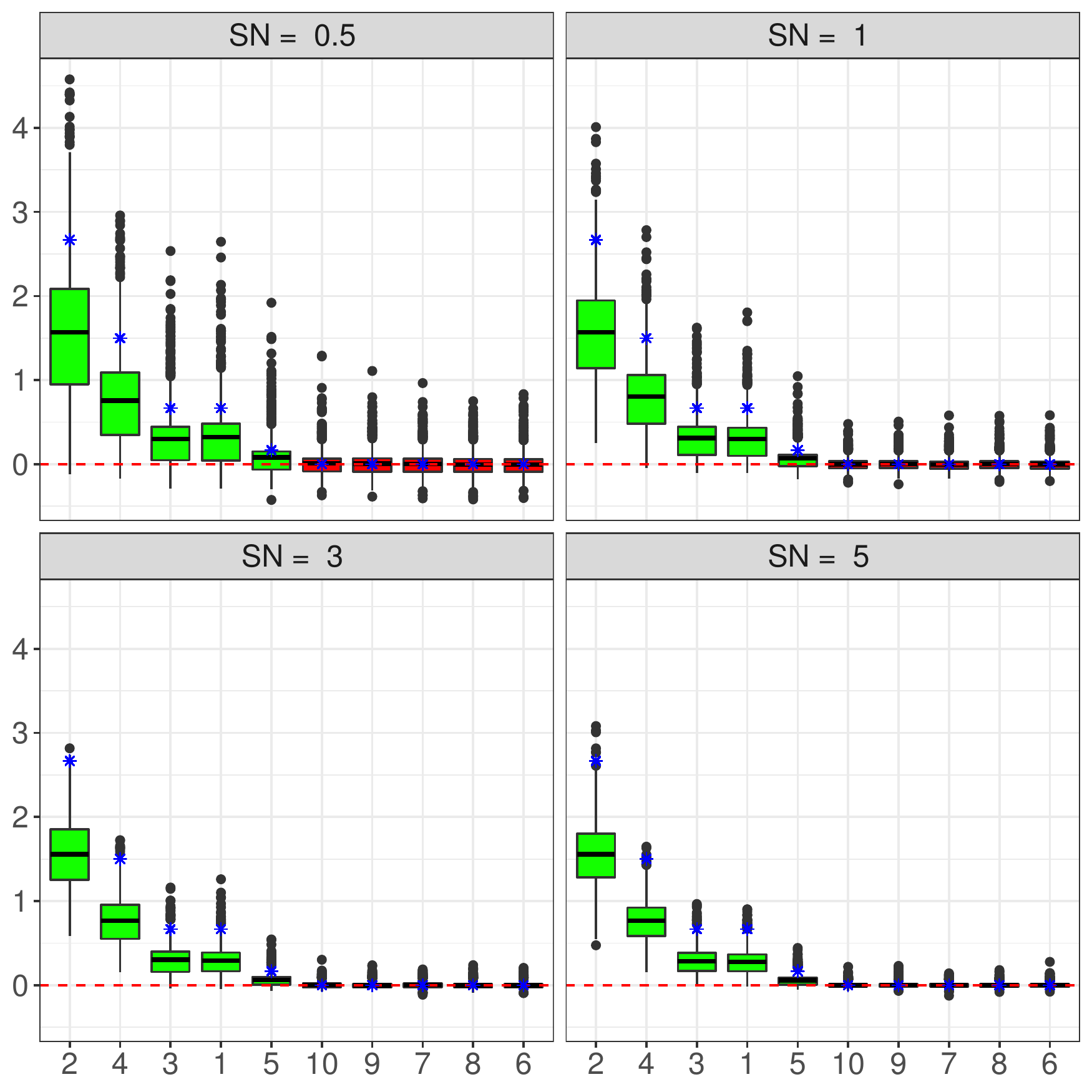}}
		\quad
		\subfigure[$n = 500$]{\includegraphics[width=3in]{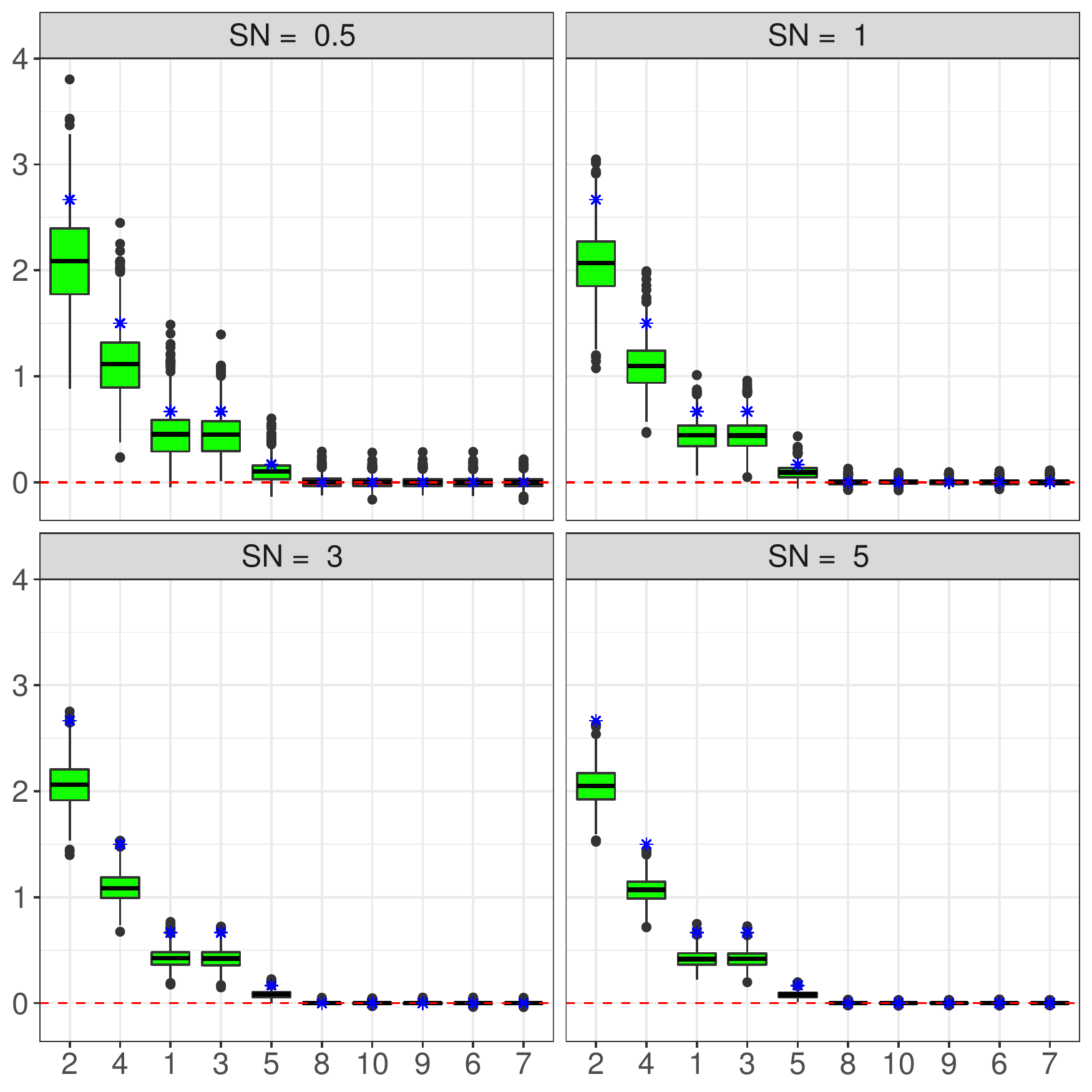} }}
	\caption{Simulation results for the permutation importance with various signal-to-noise ratios under a \textbf{linear model} as described in $(1)$ of the main article using $MC = 1,000$ Monte-Carlo iterations with a sample size of (a) $ n  = 100$ and (b) $n = 500$. The solid lines refer to the empirical mean and \textcolor{blue}{$\star$} to its expectation. }
	\label{LinearPlot}
\end{figure}
\FloatBarrier

\begin{figure}[h!]
	\centering
	\mbox{\subfigure[$n = 100$]{\includegraphics[width=3in]{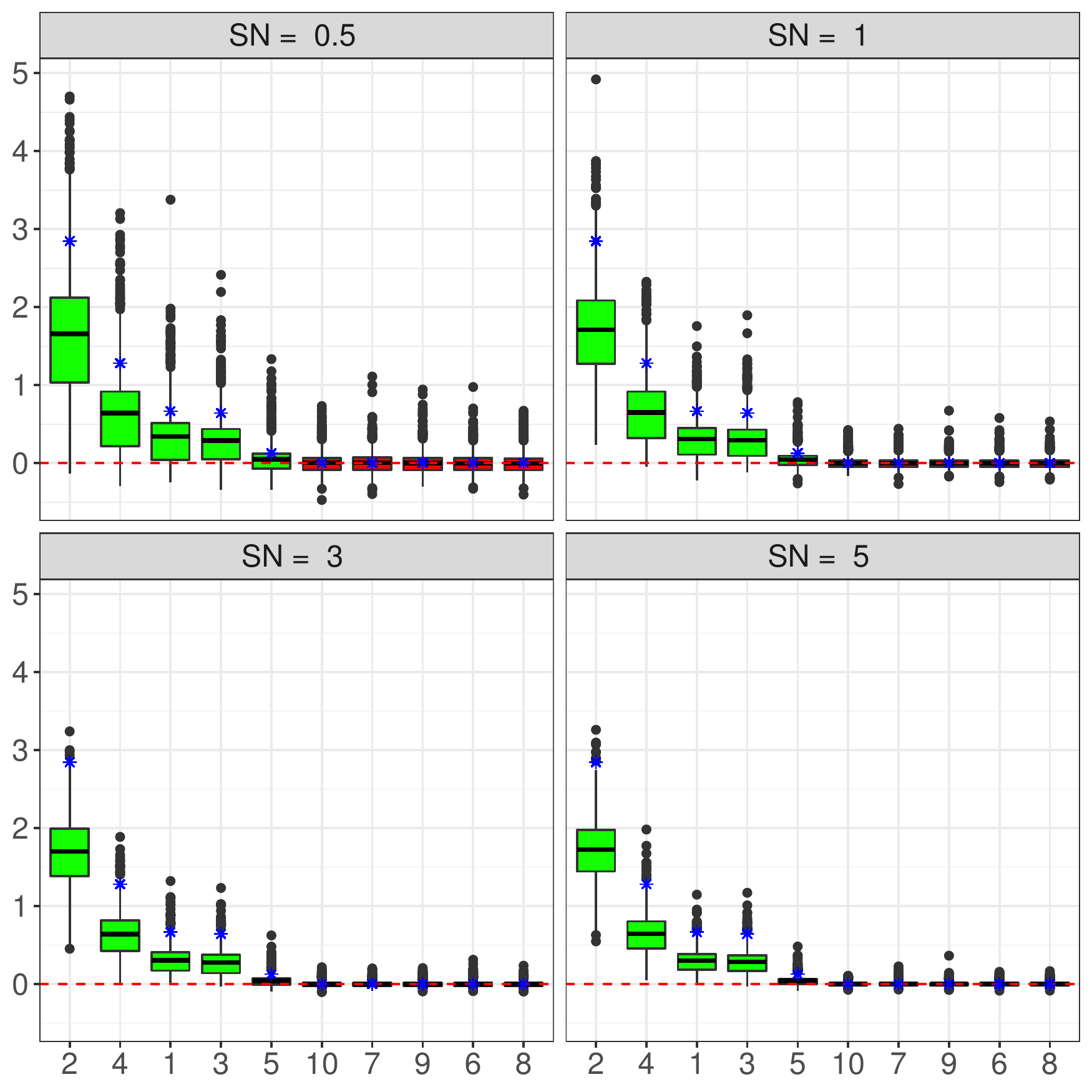}}
		\quad
		\subfigure[$n = 500$]{\includegraphics[width=3in]{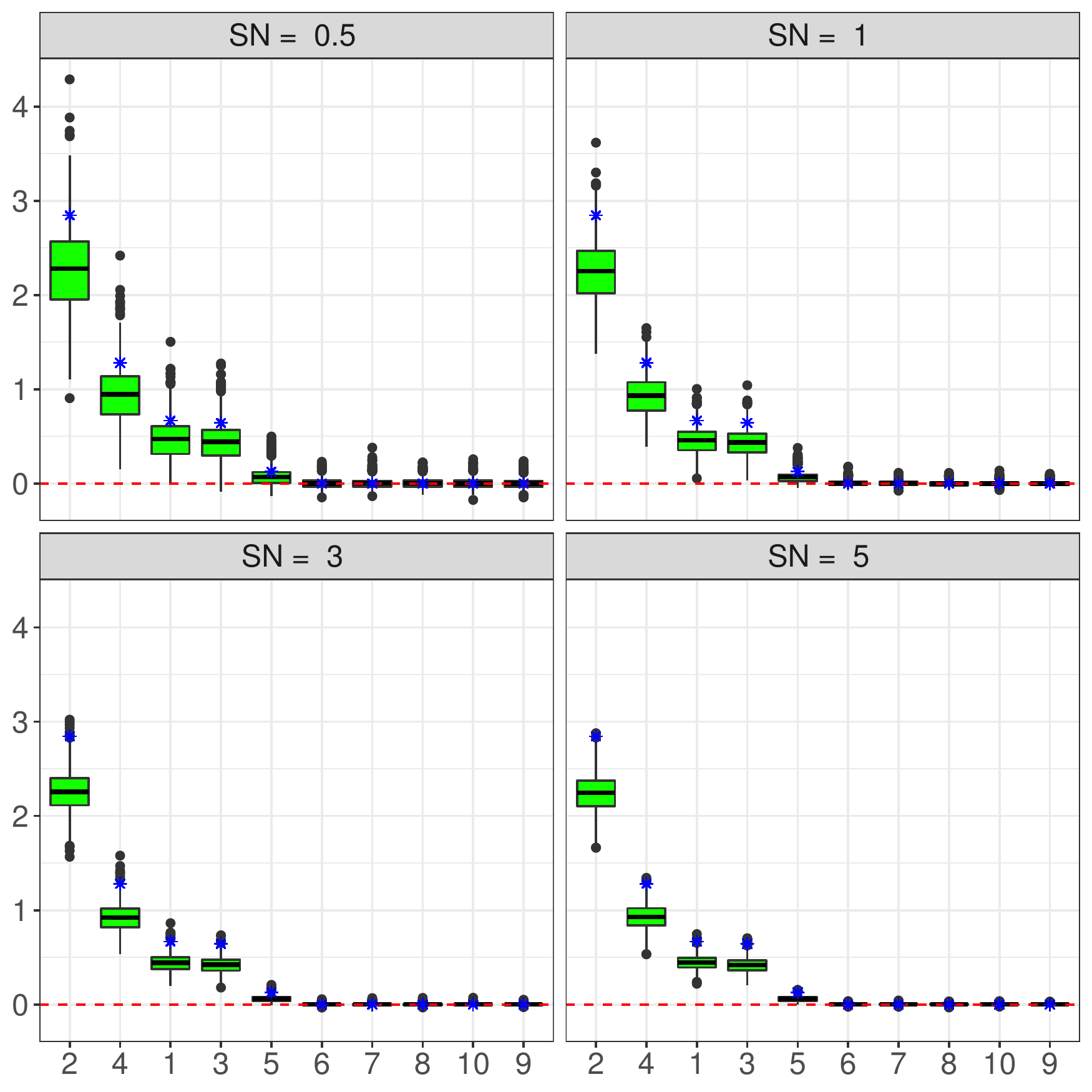} }}
	\caption{Simulation results for the permutation importance with various signal-to-noise ratios under a \textbf{polynomial model} as described in $(1)$ of the main article using $MC = 1,000$ Monte-Carlo iterations with a sample size of (a) $ n  = 100$ and (b) $n = 500$. The solid lines refer to the empirical mean and \textcolor{blue}{$\star$} to its expectation. }
	\label{PolyPlot}
\end{figure}
\FloatBarrier

\begin{figure}[h!]
	\centering
	\mbox{\subfigure[$n = 100$]{\includegraphics[width=3in]{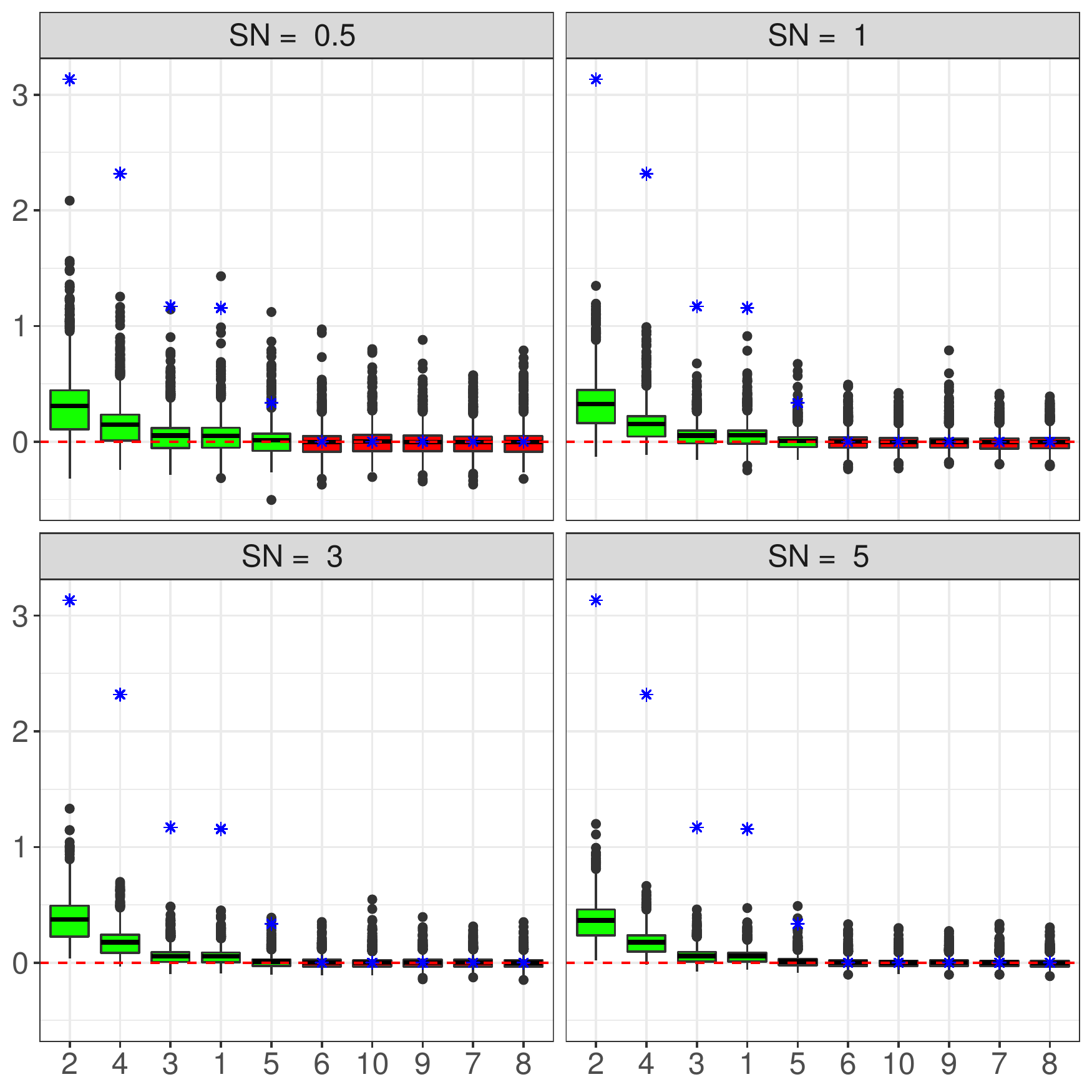}}
		\quad
		\subfigure[$n = 500$]{\includegraphics[width=3in]{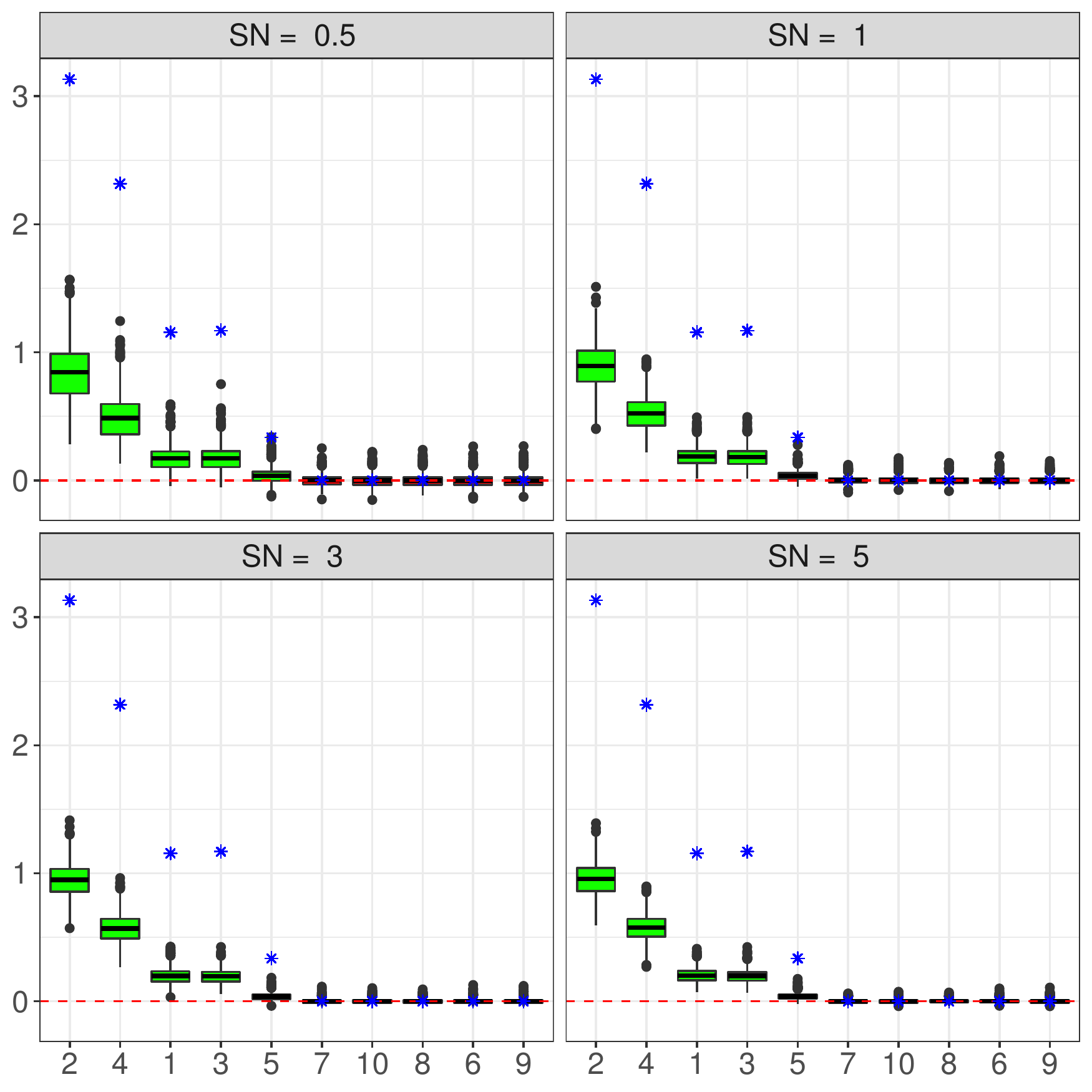} }}
	\caption{Simulation results for the permutation importance with various signal-to-noise ratios under a \textbf{trigonometric model} as described in $(1)$ of the main article using $MC = 1,000$ Monte-Carlo iterations with a sample size of (a) $ n  = 100$ and (b) $n = 500$. The solid lines refer to the empirical mean and \textcolor{blue}{$\star$} to a Monte-Carlo approximation of its expectation. }
	\label{SinPlot}
\end{figure}
\FloatBarrier

\begin{figure}[h!]
	\centering
	\mbox{\subfigure[$n = 100$]{\includegraphics[width=3in]{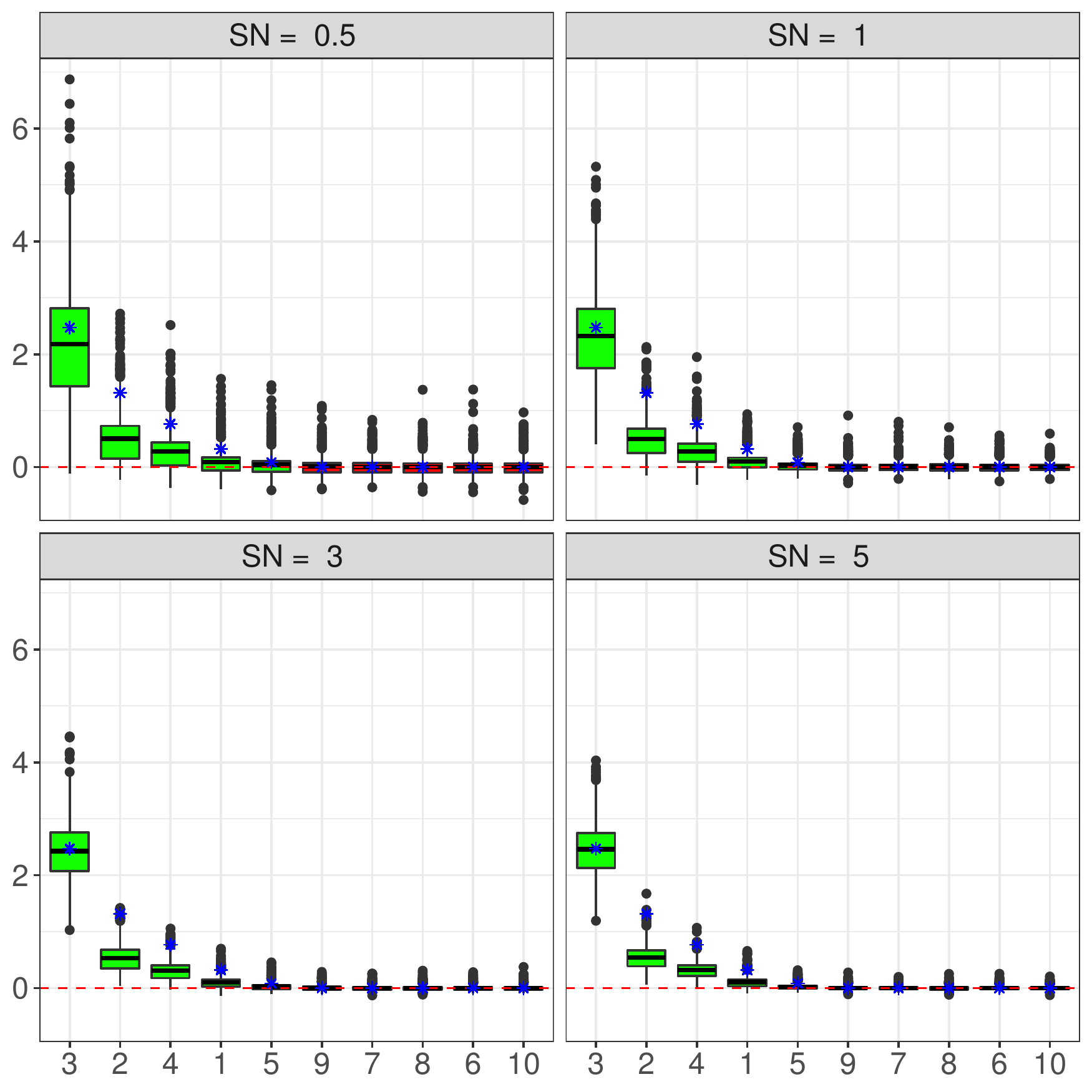}}
		\quad
		\subfigure[$n = 500$]{\includegraphics[width=3in]{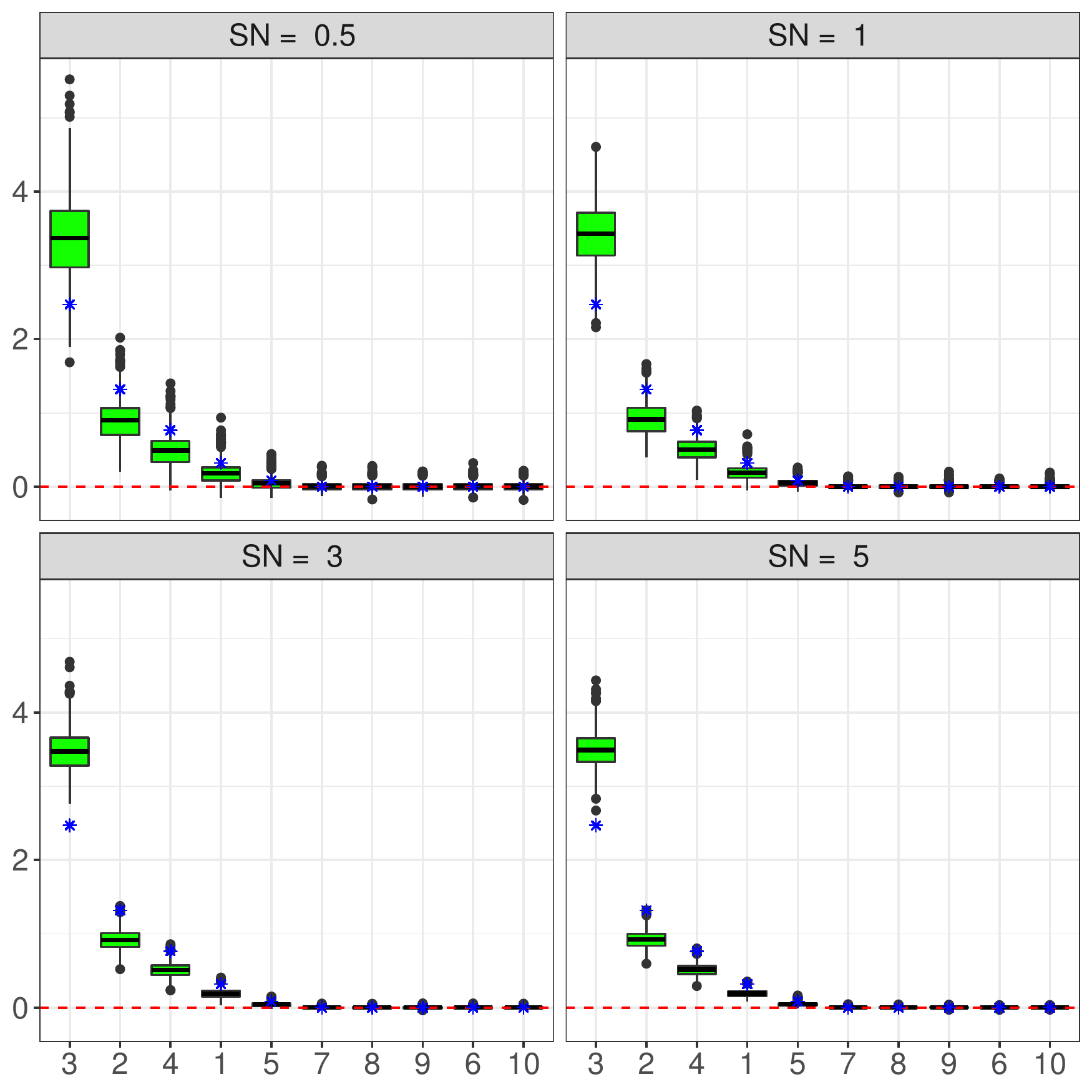} }}
	\caption{Simulation results for the permutation importance with various signal-to-noise ratios under a \textbf{non-continuous model} as described in $(1)$ of the main article using $MC = 1,000$ Monte-Carlo iterations with a sample size of (a) $ n  = 100$ and (b) $n = 500$. The solid lines refer to the empirical mean and \textcolor{blue}{$\star$} to a Monte-Carlo approximation of its expectation. }
	\label{NonContPlot}
\end{figure}
\FloatBarrier

\begin{table}[h!]
	\centering
	\begin{tabular}{ |c|c| c| c| c| c| c| c| c | c |}
		\hline 
		\multicolumn{2}{|c|}{ }& \multicolumn{4}{|c|}{ $n = 50$}  &  \multicolumn{4}{|c|}{ $n = 100$}  \\
		\cline{2-10}
		& $SN = $ & 0.5 & 1 & 3 & 5 &  0.5 & 1 & 3 & 5 \\
		\hline
		\multirow{4}{*}{\rotatebox[origin=c]{90}{\parbox[c]{1cm}{\centering Model }}} &linear & 0.189 & 0.364  &  0.807& 1.001 & 0.248 &   0.509& 1.181 & 1.528   \\
		\cline{2-10}
		& polynomial & 0.184 & 0.362 & 0.807  &1.033  & 0.243   &0.5  & 1.197  &1.594  \\
		\cline{2-10}
		& trigonometric & 0.1  &0.1  & 0.1 & 0.1 &   0.061&  0.064& 0.102  & 0.119  \\
		\cline{2-10}
		& non-continuous & 0.158 & 0.309  & 0.726 & 0.936 &  0.204 &  0.473& 1.152 &1.523  \\
		\hline \hline 
		\multicolumn{2}{|c|}{ }& \multicolumn{4}{|c|}{ $n = 500$}  &  \multicolumn{4}{|c|}{ $n = 1,000$}  \\
		\cline{2-10}
		& $SN = $ & 0.5 & 1 & 3 & 5 &  0.5 & 1 & 3 & 5 \\
		\hline
		\multirow{4}{*}{\rotatebox[origin=c]{90}{\parbox[c]{1cm}{\centering Model }}} &linear & 0.365 & 0.743  & 1.937 & 2.781 & 0.400 &   0.808&  2.178  & 3.240\\
		\cline{2-10}
		& polynomial &  0.365&  0.754& 1.995  & 2.919  &  0.395 &  0.812& 2.246 & 3.400 \\
		\cline{2-10}
		& trigonometric & 0.098 &  0.215& 0.451 &0.549  & 0.170  & 0.335 &  0.7&  0.862\\
		\cline{2-10}
		& non-continuous & 0.357  & 0.759 &2.094  &  3.153&  0.395 &  0.829& 2.376 &3.714  \\
		\hline
	\end{tabular}
	\caption{Estimator $\widehat{SN}_n$ as given in equation $(14)$ of the main article under various sample sizes and signal-to-noise ratios using $MC =1,000$ Monte-Carlo iterates for the $p < n$ regression problem.  } \label{SNEstimator}
\end{table}

Table \ref{SNEstimator} refers to the estimator $\widehat{SN}_n$ of $SN$ as proposed in the main article under various sample sizes. One can see that $\widehat{SN}_n$ tends to be smaller than $SN$, but slowly moves to $SN$ for an increased sample size. 

\newpage 
\section{Results for $p > n$ Problems}

\begin{figure}[!ht]
	\begin{minipage}{\textwidth}
		\centering
		\includegraphics[width=.4\textwidth]{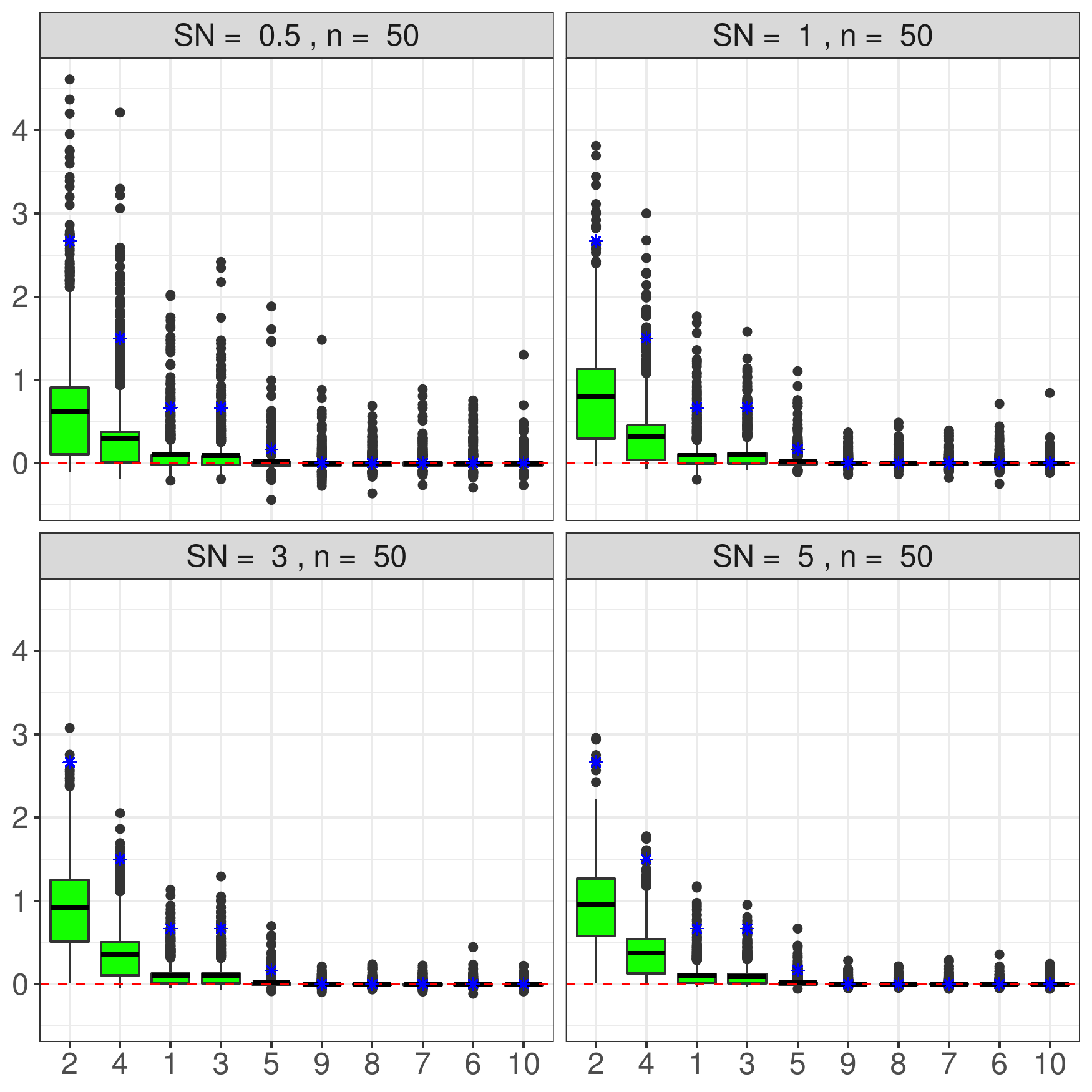}\quad
		\includegraphics[width=.4\textwidth]{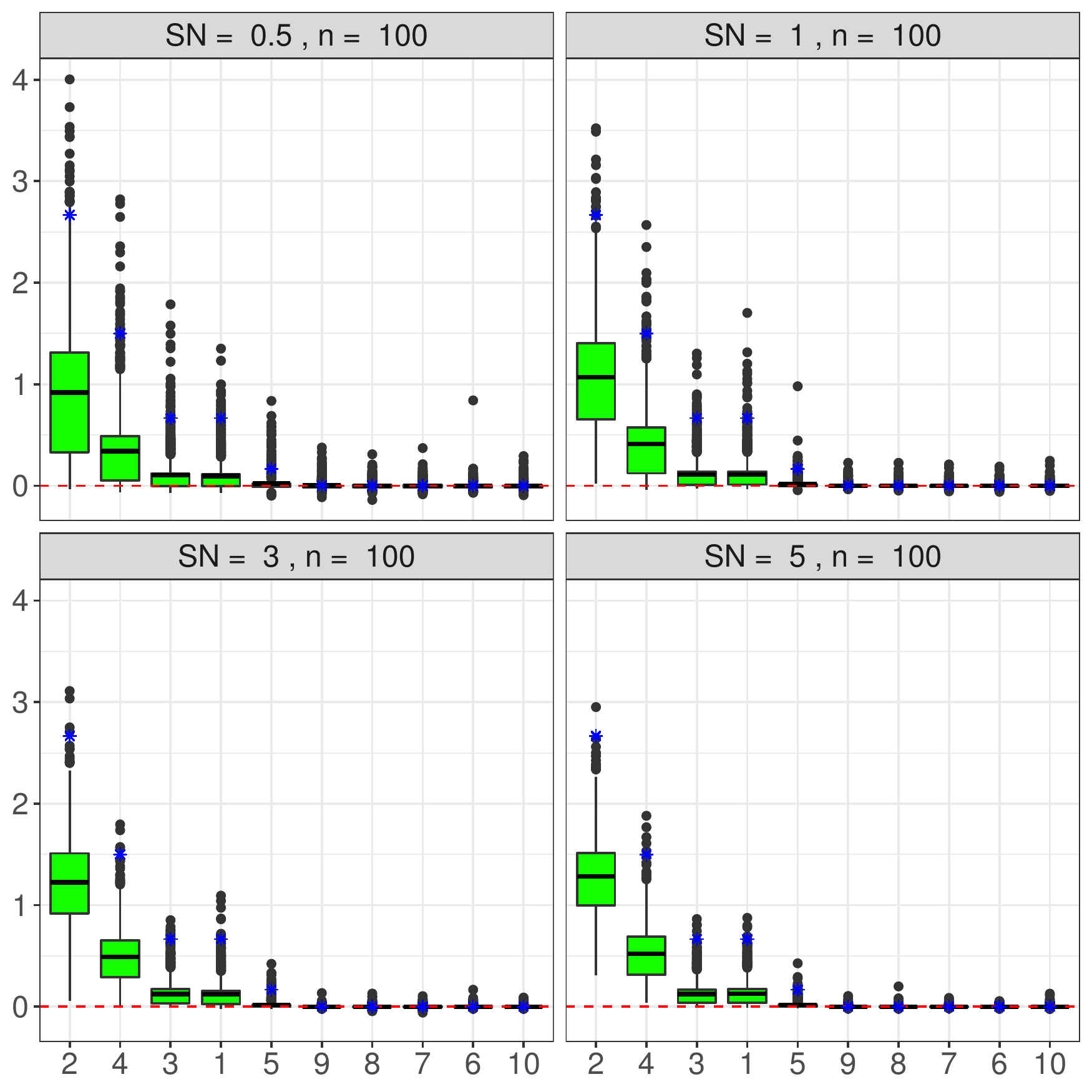}\\
		\includegraphics[width=.4\textwidth]{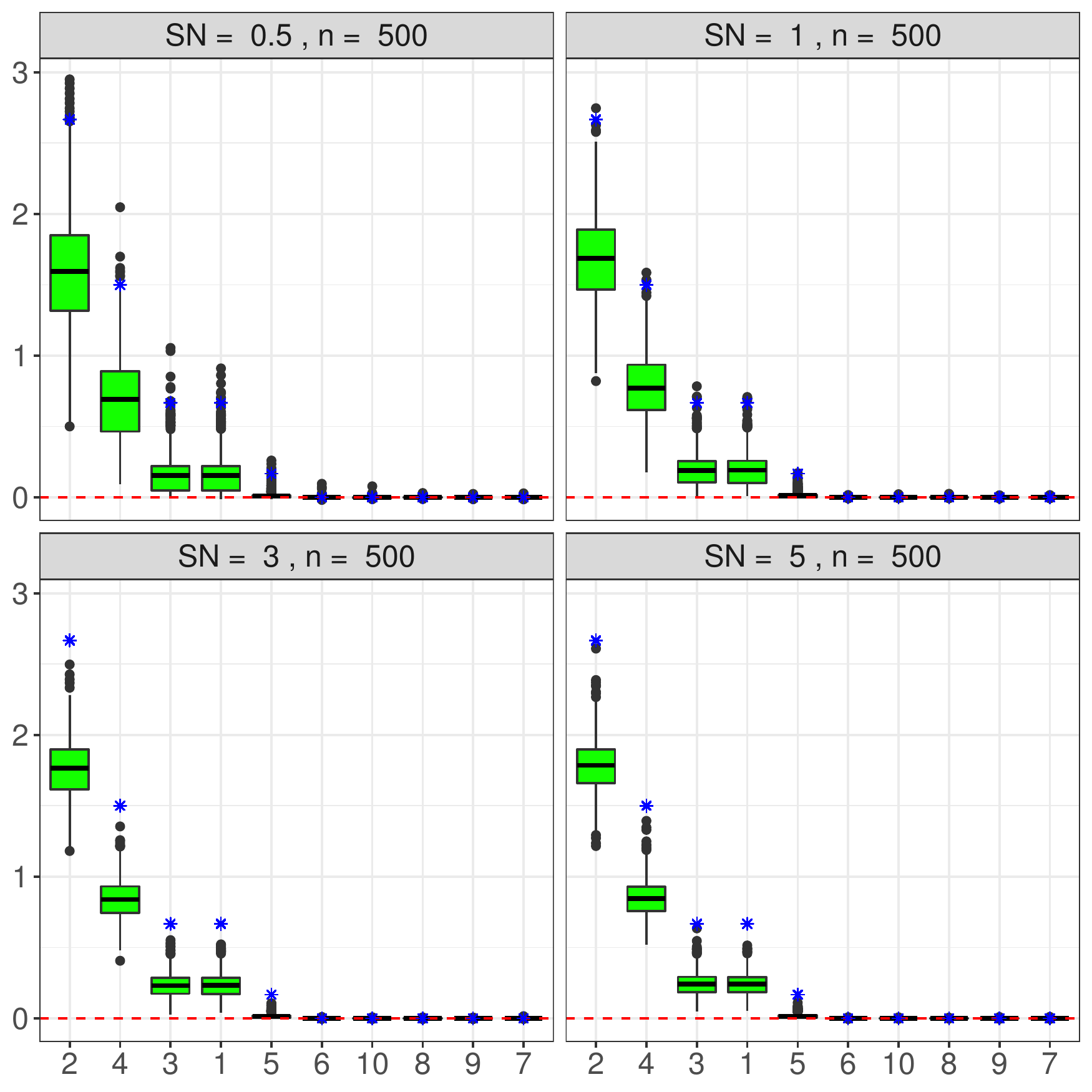}\quad
		\includegraphics[width=.4\textwidth]{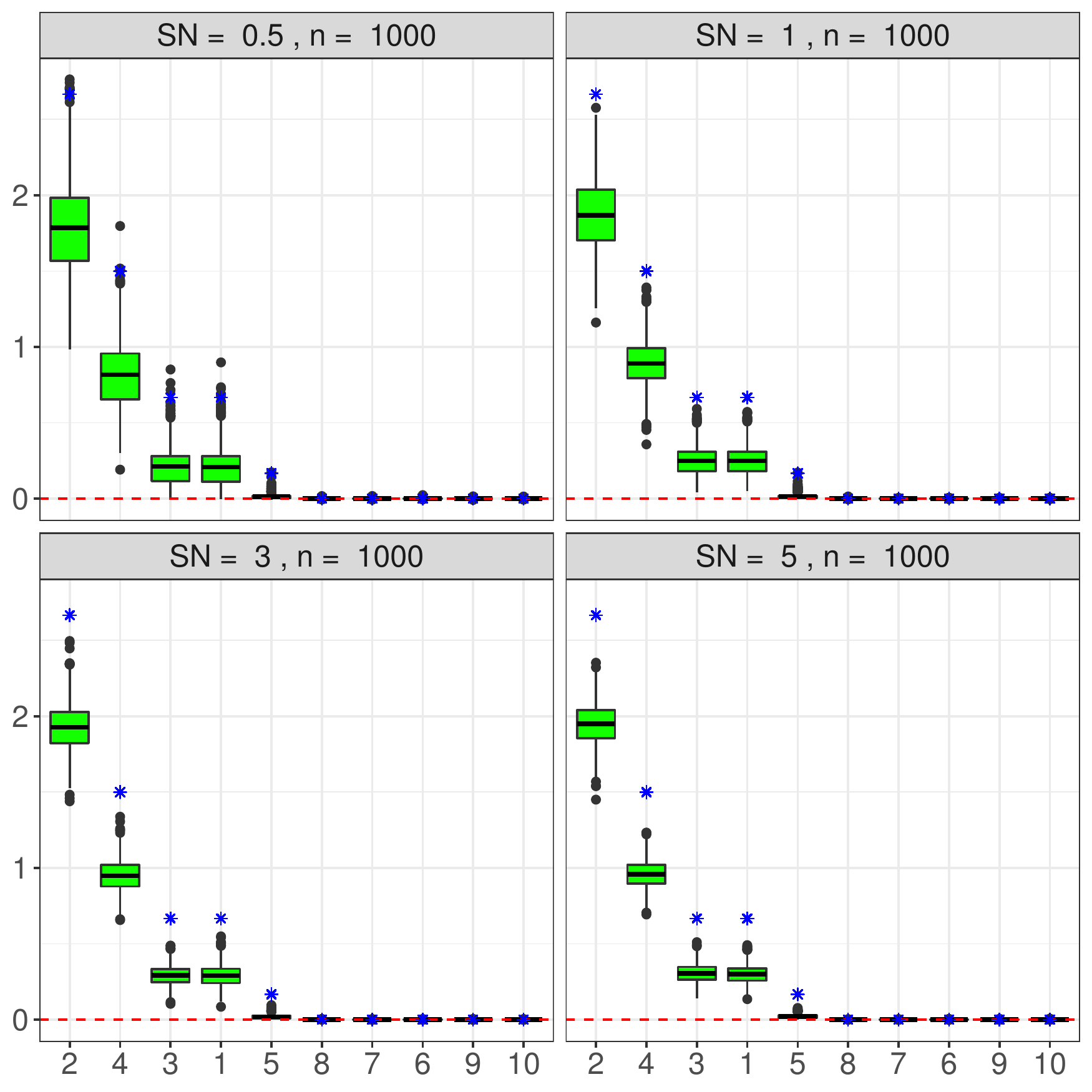}
		\caption{Simulation results for the permutation importance with various signal-to-noise ratios under a \textbf{linear model} as described in $(1)$ of the main article using $MC = 1,000$ Monte-Carlo iterations under the \textbf{high-dimensional} setting. The solid lines refer to the empirical mean and \textcolor{blue}{$\star$} to its expectation. }
		\label{HighDimLinear}
	\end{minipage}\\[1em]
\end{figure}
\FloatBarrier

\begin{figure}[!ht]
	\begin{minipage}{\textwidth}
		\centering
		\includegraphics[width=.4\textwidth]{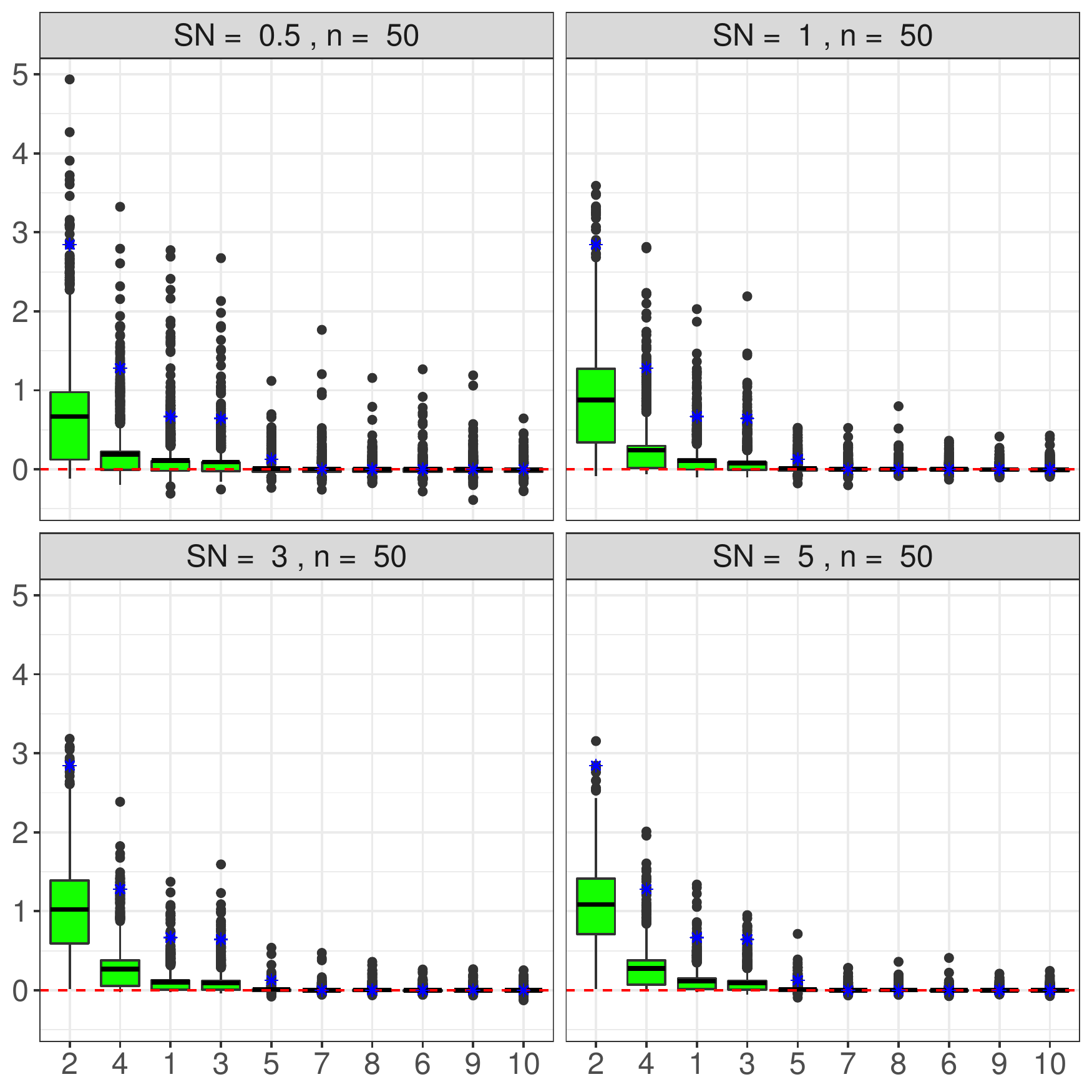}\quad
		\includegraphics[width=.4\textwidth]{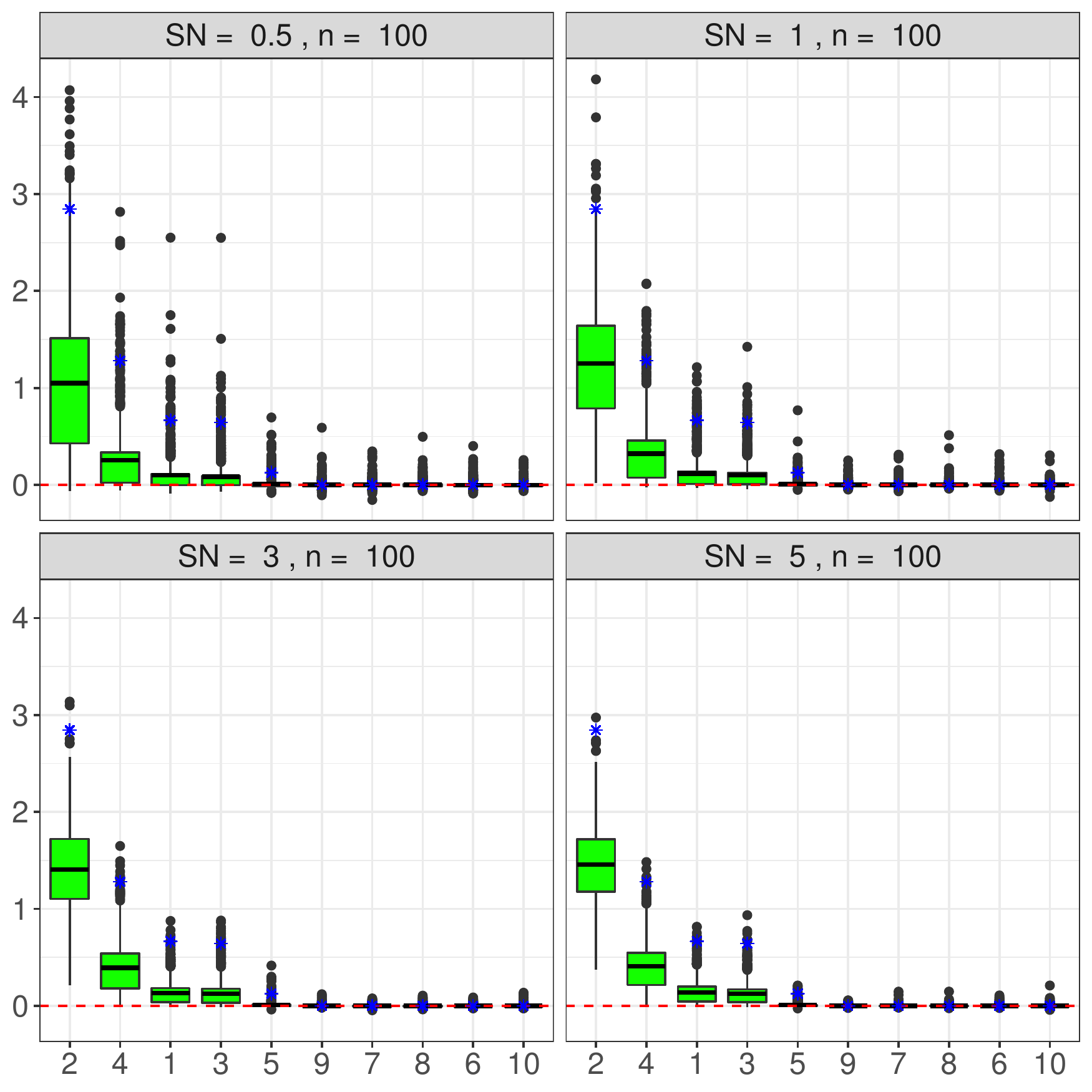}\\
		\includegraphics[width=.4\textwidth]{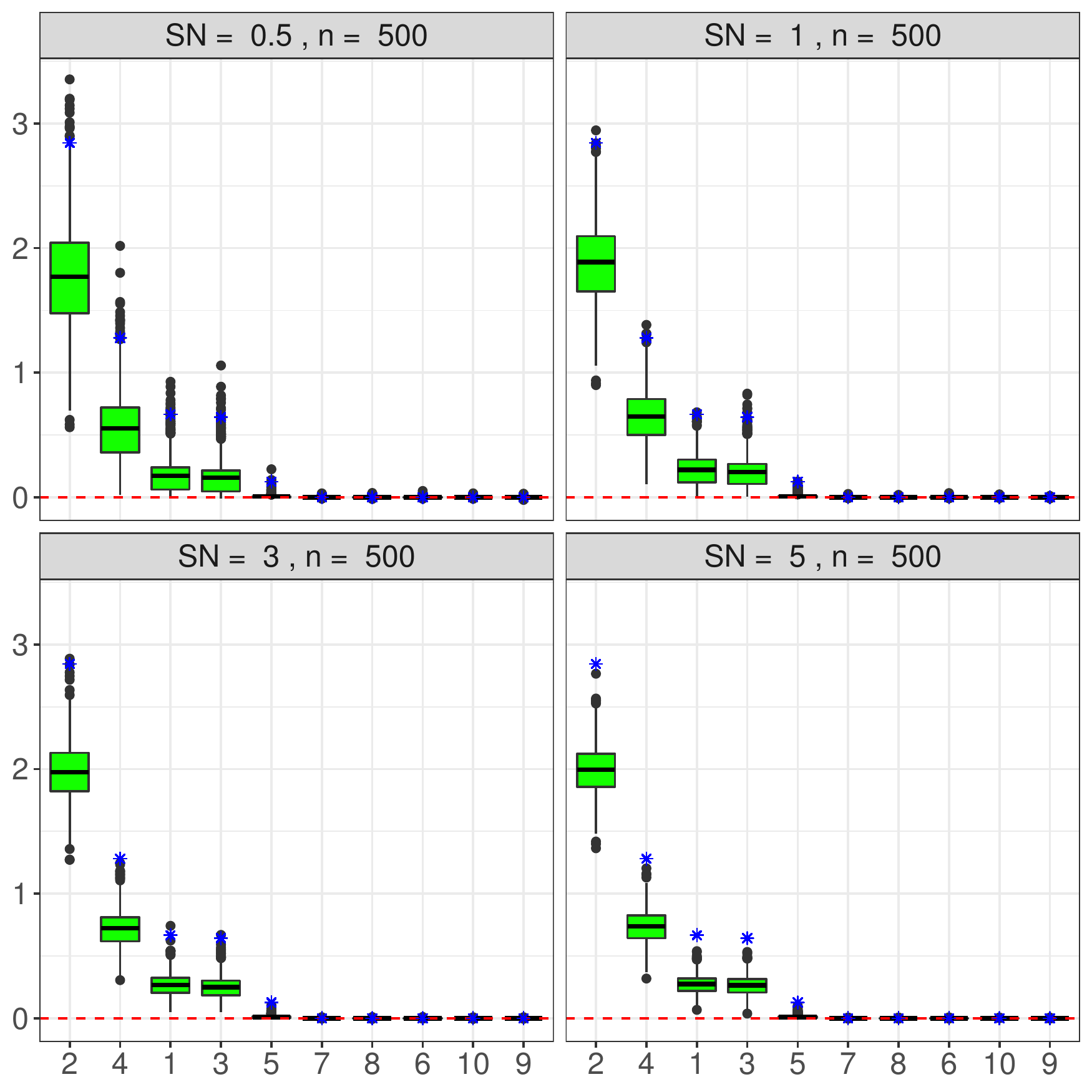}\quad
		\includegraphics[width=.4\textwidth]{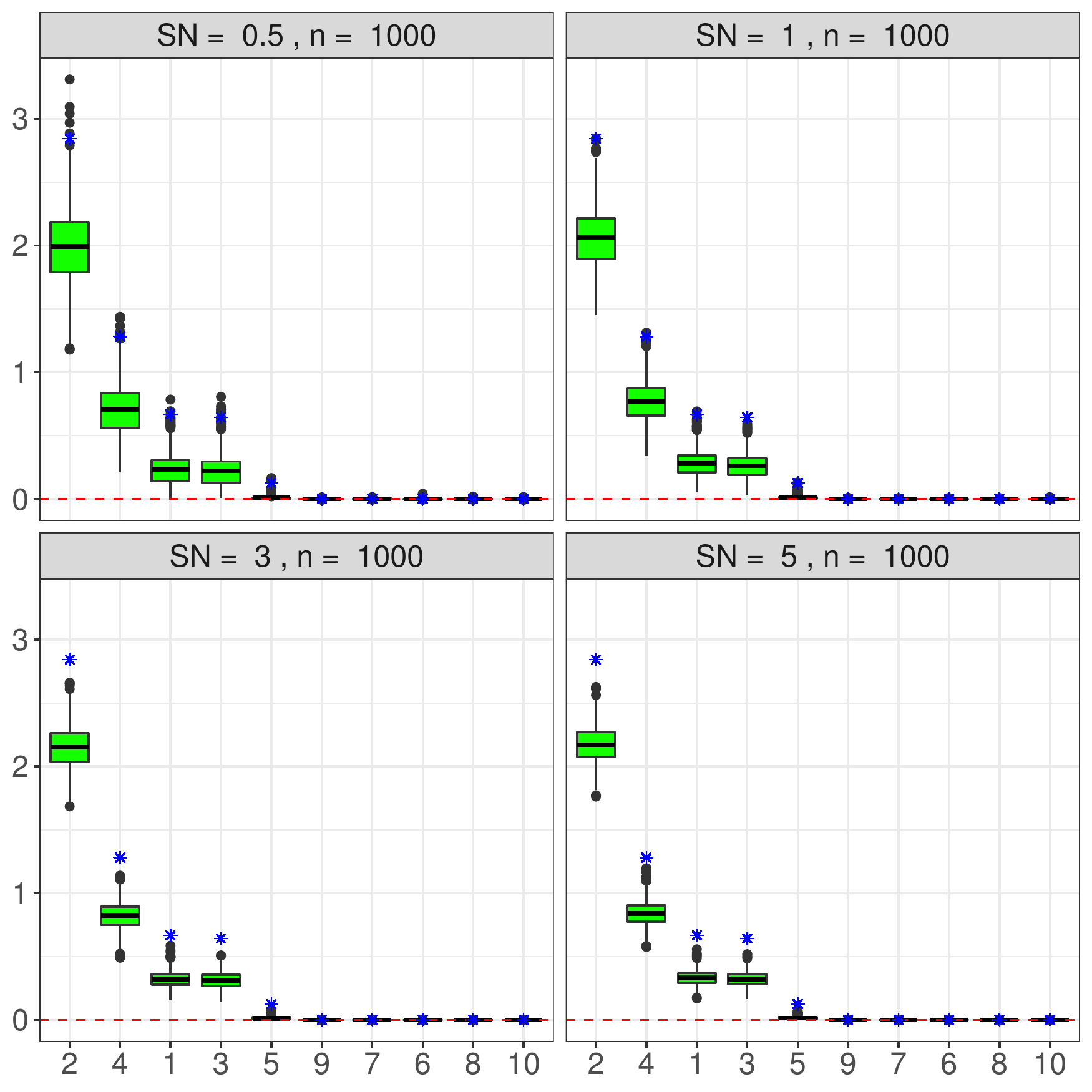}
		\caption{Simulation results for the permutation importance with various signal-to-noise ratios under a \textbf{polynomial model} as described in $(1)$ of the main article using $MC = 1,000$ Monte-Carlo iterations under the \textbf{high-dimensional} setting. The solid lines refer to the empirical mean and \textcolor{blue}{$\star$} to its expectation.  }
		\label{HighDimPoly}
	\end{minipage}\\[1em]
\end{figure}
\FloatBarrier

\begin{figure}[!ht]
	\begin{minipage}{\textwidth}
		\centering
		\includegraphics[width=.4\textwidth]{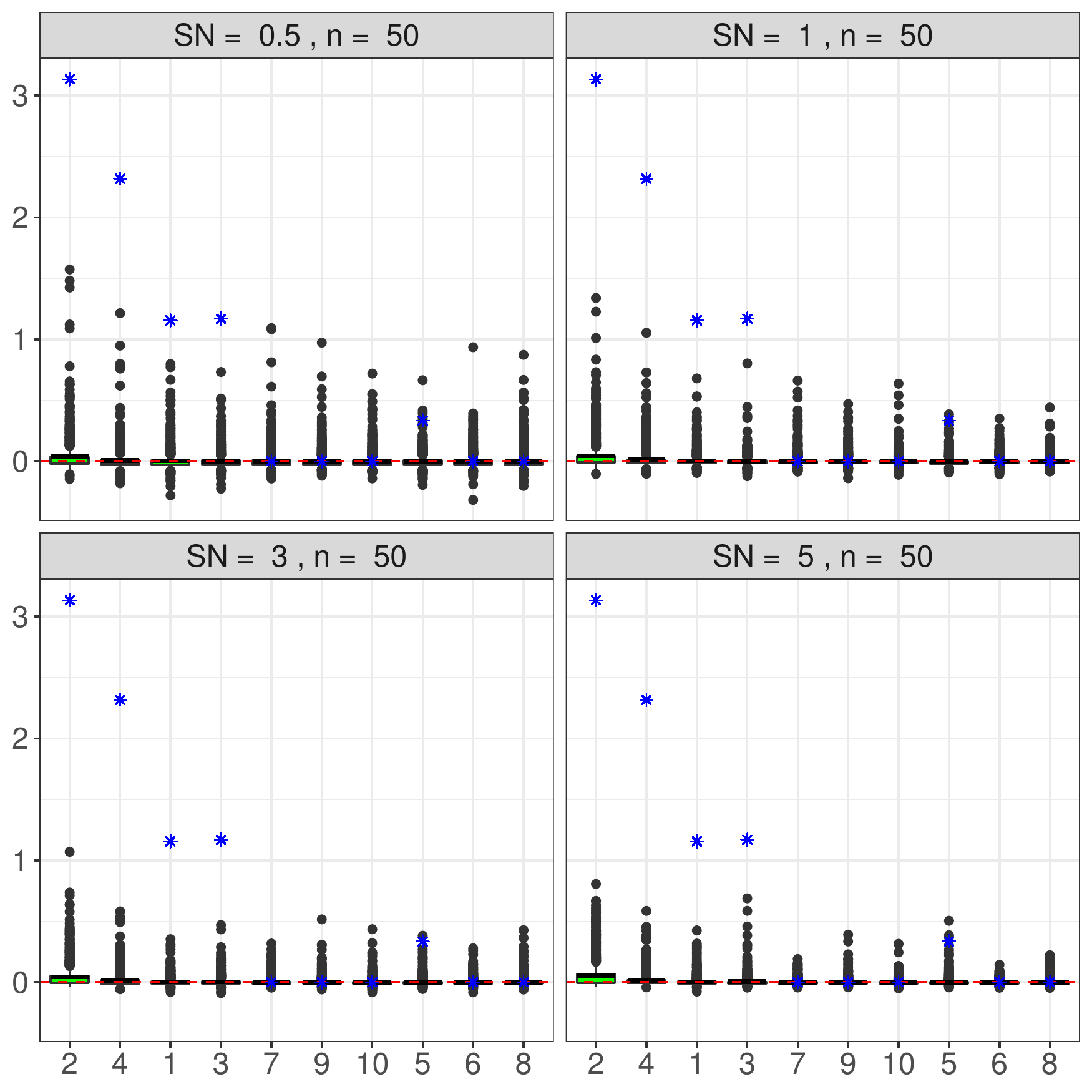}\quad
		\includegraphics[width=.4\textwidth]{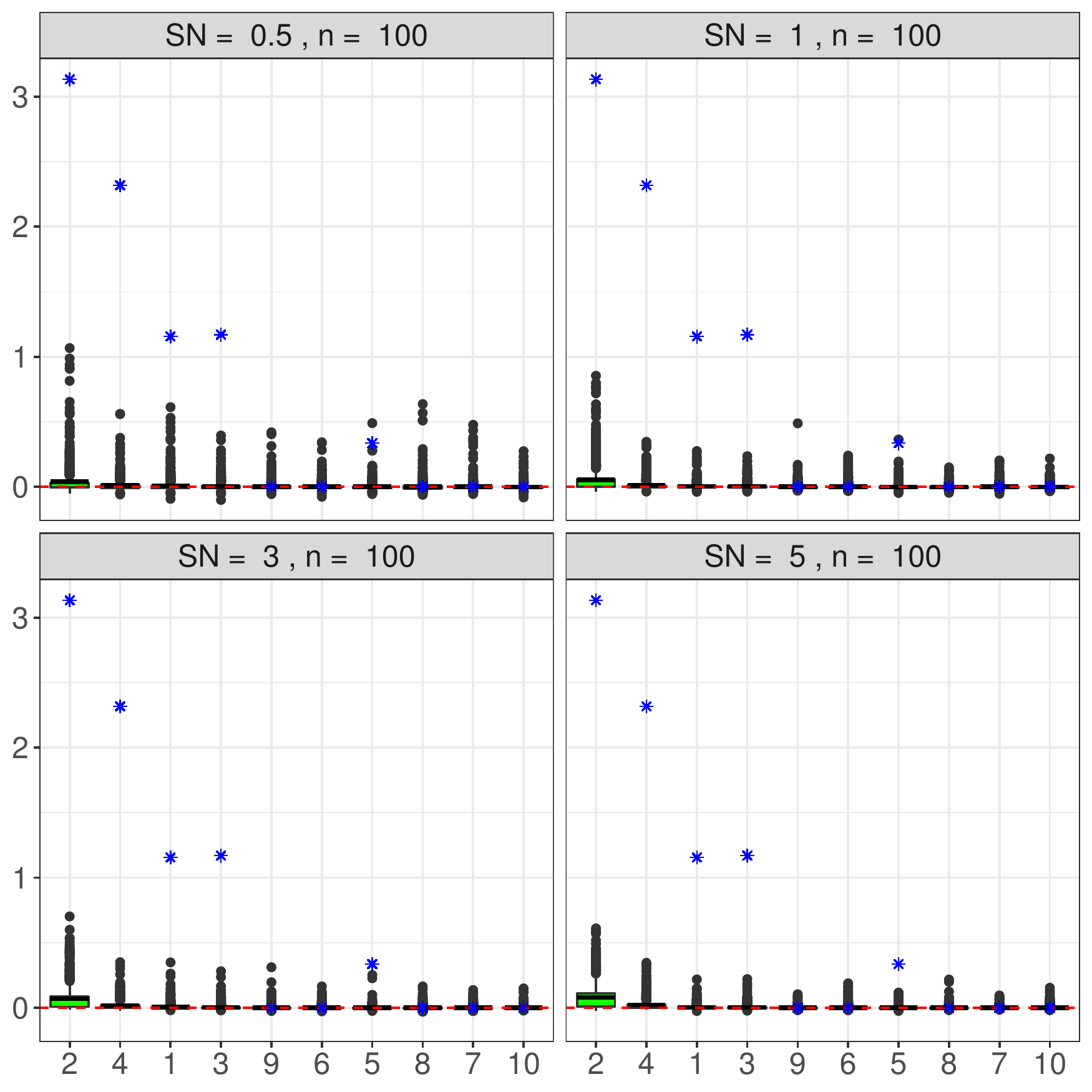}\\
		\includegraphics[width=.4\textwidth]{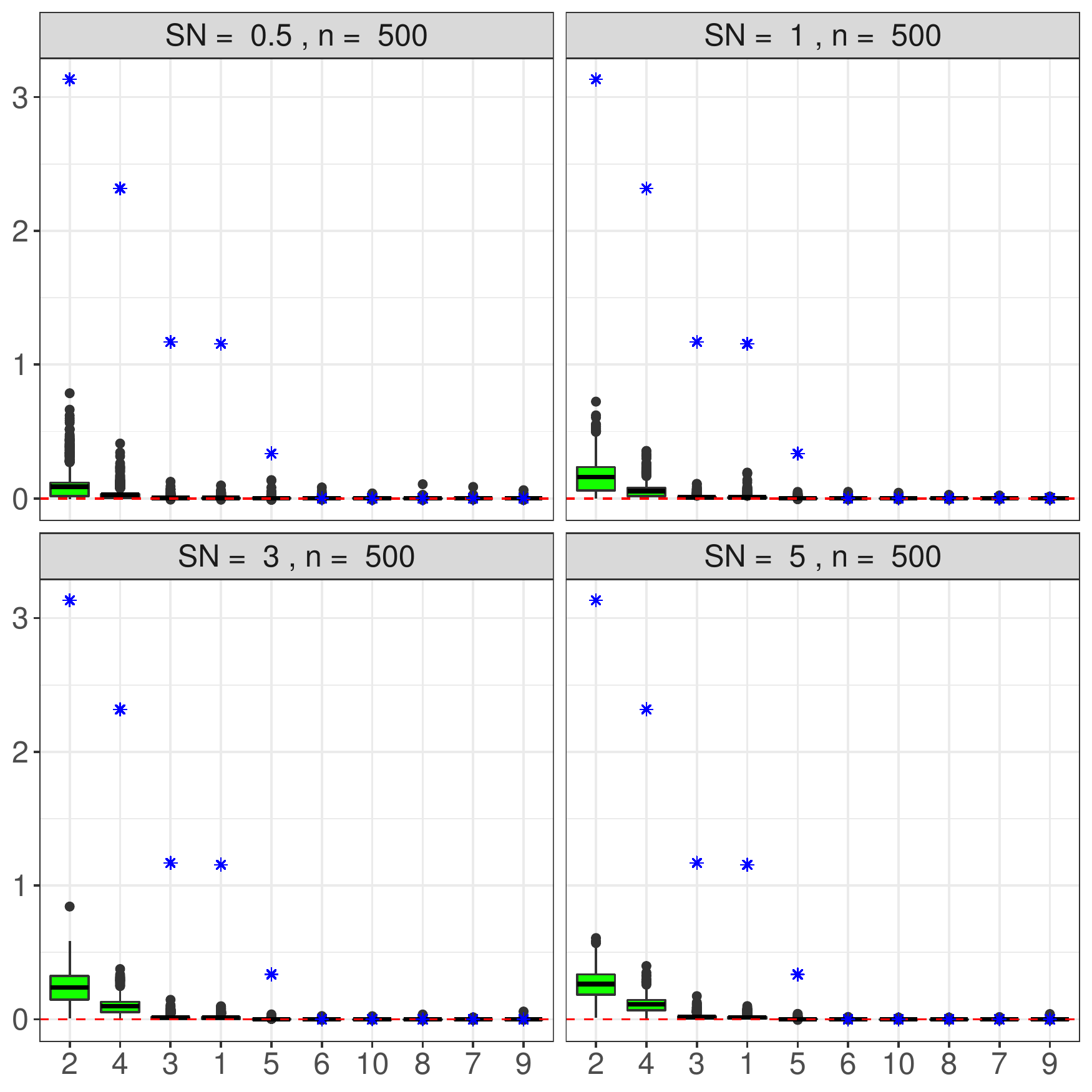}\quad
		\includegraphics[width=.4\textwidth]{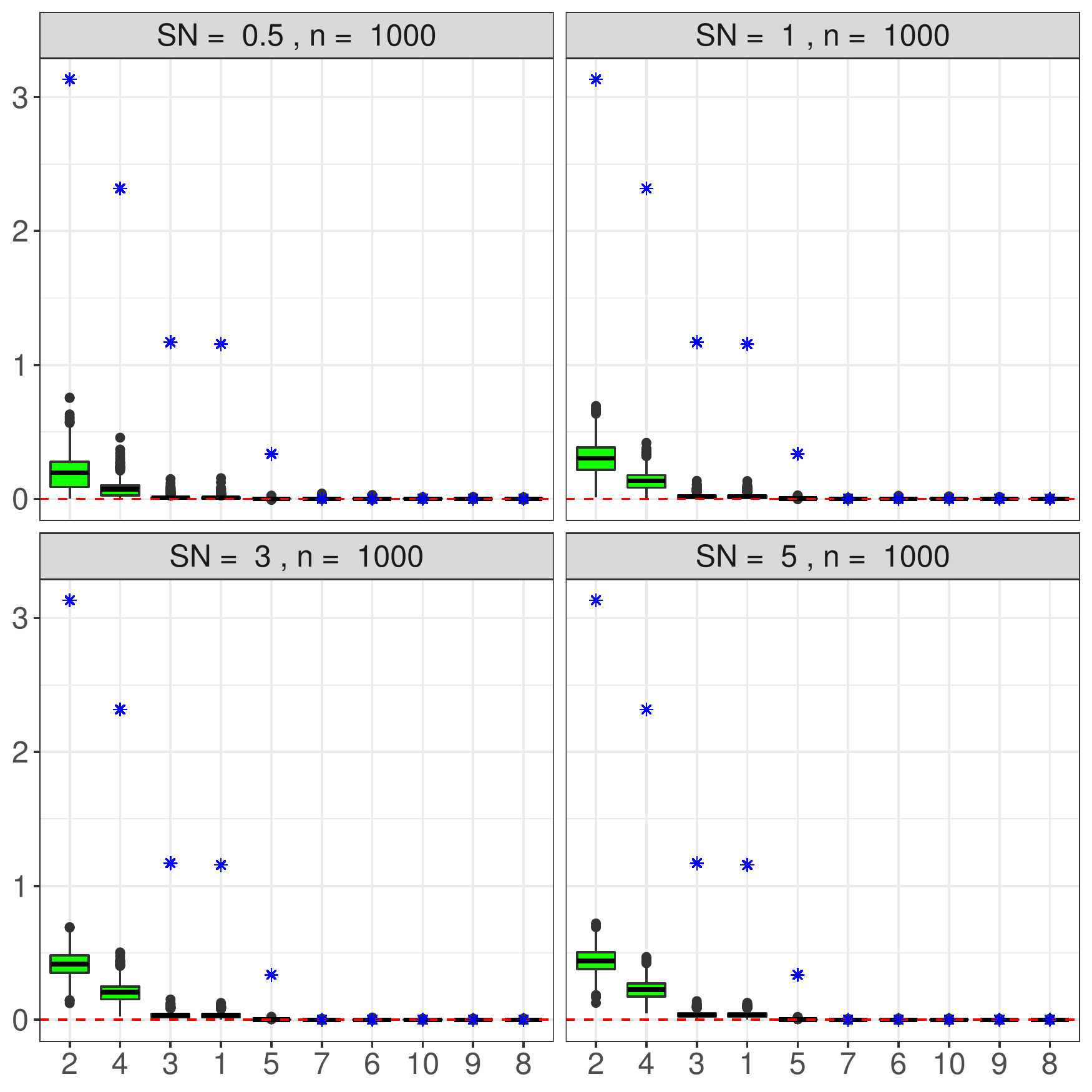}
		\caption{Simulation results for the permutation importance with various signal-to-noise ratios under a \textbf{trigonometric model} as described in $(1)$ of the main article using $MC = 1,000$ Monte-Carlo iterations under the \textbf{high-dimensional} setting. The solid lines refer to the empirical mean and \textcolor{blue}{$\star$} to a Monte-Carlo approximation of its expectation.  }
		\label{HighDimSinus}
	\end{minipage}\\[1em]
\end{figure}
\FloatBarrier

\begin{figure}[!ht]
	\begin{minipage}{\textwidth}
		\centering
		\includegraphics[width=.4\textwidth]{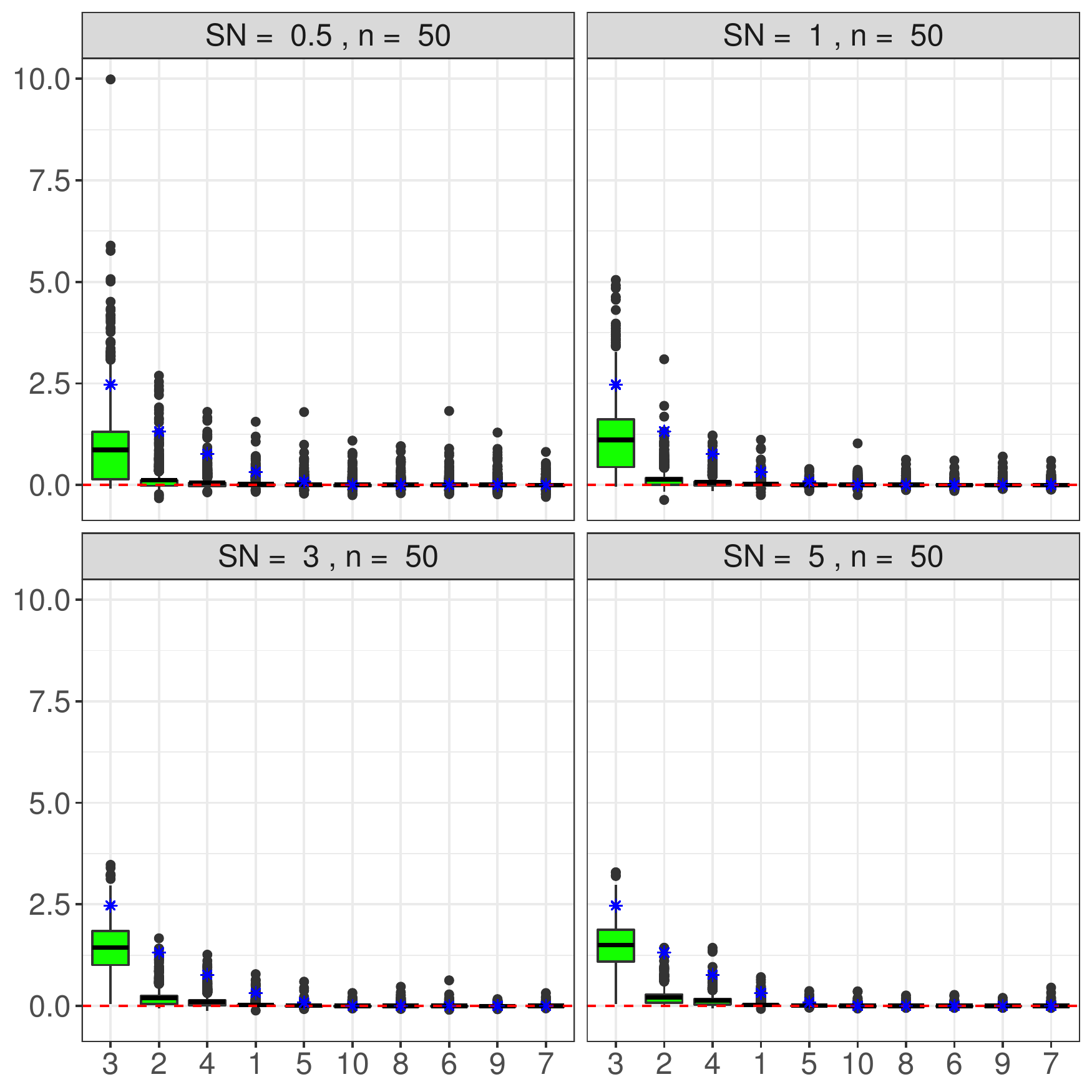}\quad
		\includegraphics[width=.4\textwidth]{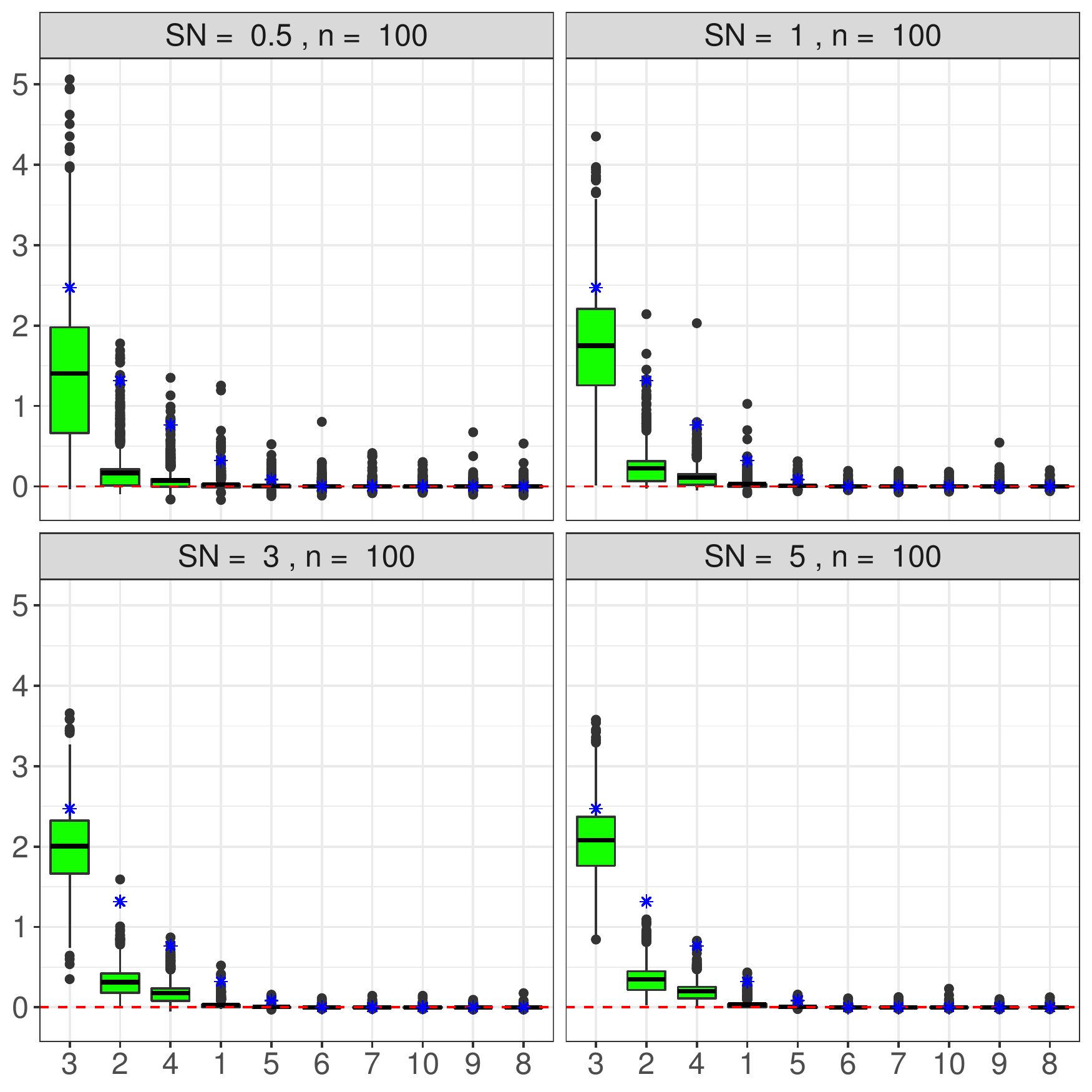}\\
		\includegraphics[width=.4\textwidth]{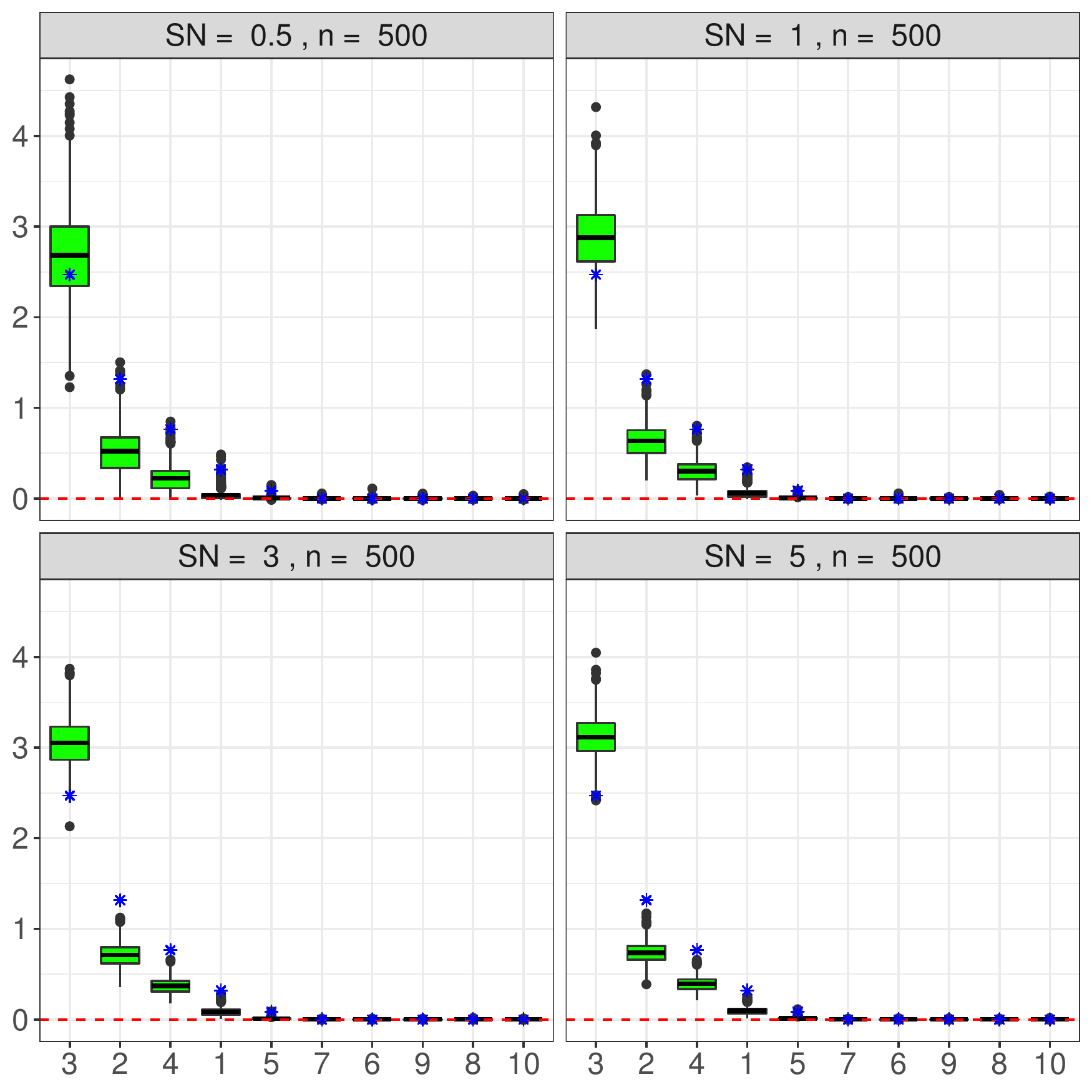}\quad
		\includegraphics[width=.4\textwidth]{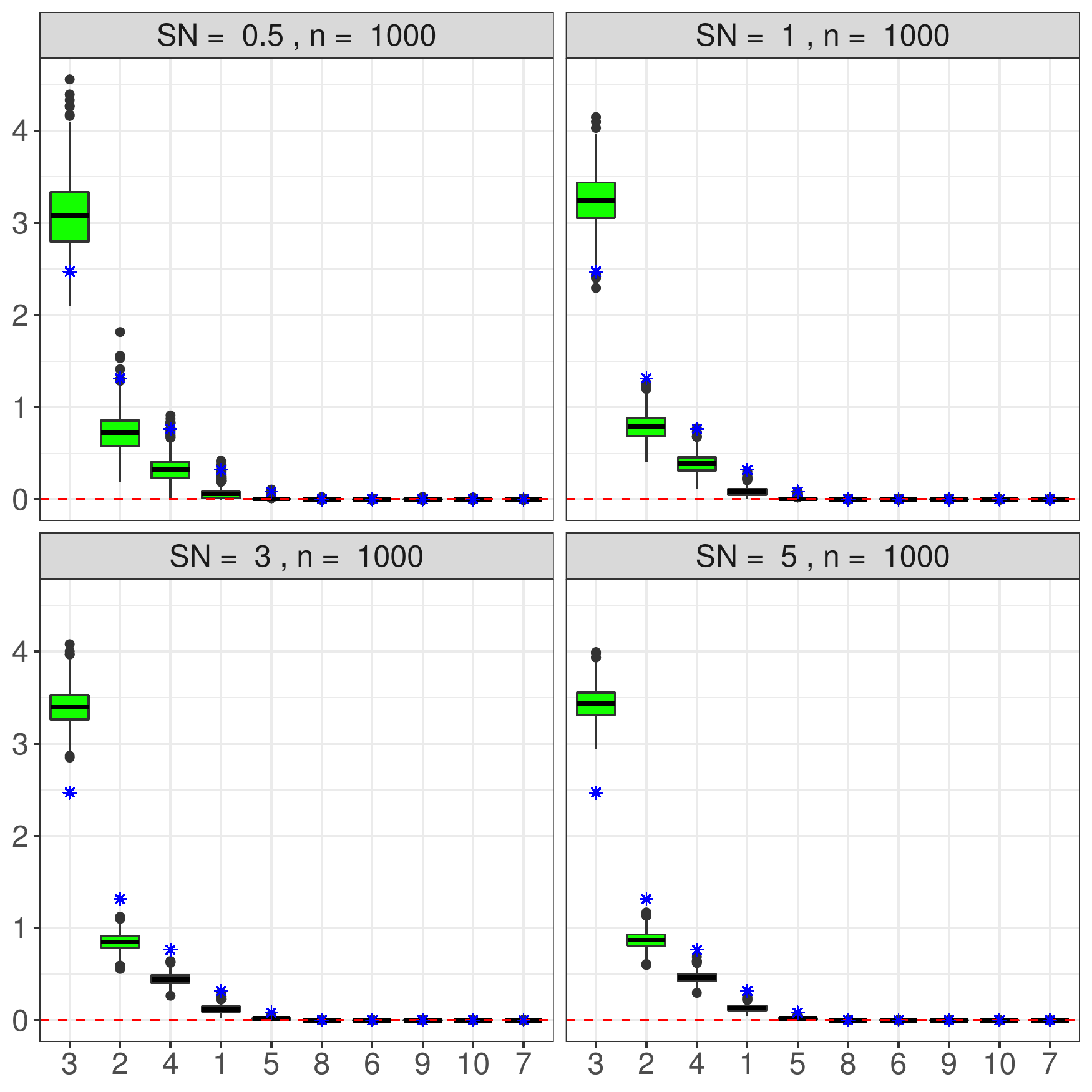}
		\caption{Simulation results for the permutation importance with various signal-to-noise ratios under a \textbf{non-continuous model} as described in $(1)$ of the main article using $MC = 1,000$ Monte-Carlo iterations under the \textbf{high-dimensional} setting. The solid lines refer to the empirical mean and \textcolor{blue}{$\star$} to a Monte-Carlo approximation of its expectation. }
		\label{HighDimNonCont}
	\end{minipage}\\[1em]
\end{figure}
\FloatBarrier

\end{document}